\documentclass{article}

\usepackage[utf8]{inputenc}
\usepackage[T1]{fontenc}
\usepackage[english]{babel}
\usepackage[hyphens]{url}
\usepackage{authblk}
\usepackage{booktabs}
\usepackage[colorlinks=true, linkcolor=red, urlcolor=blue, citecolor=blue, anchorcolor=blue,backref=page]{hyperref}
\usepackage{amsfonts}
\usepackage{nicefrac}
\usepackage{microtype}
\usepackage{amsmath, amsthm, amssymb}
\usepackage{fancybox}
\usepackage{graphicx}
\usepackage{float}
\usepackage[ruled, vlined, linesnumbered]{algorithm2e}
\usepackage{adjustbox}
\usepackage{algpseudocode}
\usepackage{color}
\usepackage{tikz}
\usepackage{pifont}
\usepackage{array}
\usepackage{subcaption}
\usepackage[textsize=scriptsize]{todonotes}
\usepackage{enumitem}
\usepackage{wrapfig}
\usepackage{enumitem}
\usepackage{etoolbox}
\usepackage{diagbox}
\usepackage{comment}
\usepackage{makecell}
\usepackage{multirow}
\usepackage{bbm}
\usepackage{multicol}
\usepackage{mathpazo}
\usepackage{xcolor,soul,colortbl}
\usepackage[compress,numbers,sort]{natbib}
\usepackage{cleveref}
\crefname{figure}{Fig.}{Figs.}
\crefname{equation}{Eq.}{Eqs.}
\crefname{definition}{Defn.}{Defns.}
\crefname{corollary}{Corollary}{Corollaries}
\crefname{proposition}{Prop.}{Props.}
\crefname{theorem}{Thm.}{Thms.}
\crefname{remark}{Remark}{Remarks}
\crefname{principle}{Principle}{Principles}
\crefname{lemma}{Lemma}{Lemmas}
\crefname{claim}{Claim}{Claims}
\crefname{table}{Table}{Tabs.}
\crefname{section}{\S}{\S\S}
\crefname{subsection}{\S}{\S\S}
\crefname{subsubsection}{\S}{\S\S}
\crefname{assumption}{Assumption}{Assumptions}
\crefname{appendix}{Appendix}{Appendices}
\crefname{example}{Ex.}{Exs.}
\crefname{algorithm}{Alg.}{Alg.}

\usepackage{enumitem}
\setlist[itemize]{leftmargin=*,itemsep=0.3em, parsep=0em, topsep=0.5em}
\setlist[enumerate]{leftmargin=*,itemsep=0.1em, parsep=0em, topsep=0.1em}

\usepackage[framemethod=TikZ]{mdframed}
\mdfsetup{skipabove=5pt,
innertopmargin=-5pt}

\usepackage[toc,page,header]{appendix}
\usepackage{minitoc}

\definecolor{Gray}{gray}{0.9}

\interfootnotelinepenalty=10000

\newtheoremstyle{slplain}%
  {.4\baselineskip\@plus.1\baselineskip\@minus.1\baselineskip}%
  {.3\baselineskip\@plus.1\baselineskip\@minus.1\baselineskip}%
  {\itshape}%
  {}%
  {\bfseries}%
  {.}%
  { }%
  {}%
\theoremstyle{slplain} %

\newtheorem*{definition*}{Definition}
\newtheorem*{theorem*}{Theorem}
\newtheorem{theorem}{Theorem}[section]
\newtheorem{lemma}[theorem]{Lemma}
\newtheorem{proposition}[theorem]{Proposition}

\newtheorem{corollary}[theorem]{Corollary}
\newtheorem{definition}[theorem]{Definition}

\makeatletter
\newtheorem*{rep@theorem}{\rep@title}
\newcommand{\newreptheorem}[2]{%
\newenvironment{rep#1}[1]{%
 \def\rep@title{#2 \ref{##1}}%
 \begin{rep@theorem}}%
 {\end{rep@theorem}}}
\makeatother

\newreptheorem{theorem}{Theorem}
\newreptheorem{lemma}{Lemma}
\newreptheorem{claim}{Claim}
\newreptheorem{corollary}{Corollary}
\newreptheorem{proposition}{Proposition}

\theoremstyle{definition}

\newtheorem{example}[theorem]{Example}
\newtheorem{remark}[theorem]{Remark}

\theoremstyle{plain} %

\numberwithin{equation}{section}

\newtheoremstyle{etplain}%
  {.0\baselineskip\@plus.1\baselineskip\@minus.1\baselineskip}%
  {.0\baselineskip\@plus.1\baselineskip\@minus.1\baselineskip}%
  {\itshape}%
  {}%
  {\bfseries}%
  {.}%
  { }%
  {}%

\newcommand{\R}{\mathbb{R}}

\newcommand{\norm}[1]{\left|\left| #1 \right|\right|}
\newcommand{\Prob}{\mathbb{P}}

\renewcommand\bar\overline

\newcommand\Etr{\mathcal{E}_\text{tr}}
\newcommand\Eall{{\mathcal{E}_\text{all}}}

\newcommand{\rfe}{R^e(f)}
   
\newcommand{\cutoff}{\alpha}            %
\newcommand{\thres}{t}                  %
\newcommand{\bm}{\mathbf} 

\newcommand\PA{\mathrm{Pa}}
\newcommand\DE{\mathrm{De}}

\DeclareMathOperator*{\argmin}{arg\,min}
\DeclareMathOperator*{\esssup}{ess\,sup}
\DeclareMathOperator{\SQ}{SQ}

\DeclareMathOperator{\VREx}{VREx}

\DeclareMathOperator*{\subjto}{subject\,\, to \qquad}

\newcolumntype{C}[1]{>{\centering\let\newline\\\arraybackslash\hspace{0pt}}m{#1}}

\DeclareMathOperator{\E}{\mathbb{E}}

\newcommand{\calA}{\ensuremath{\mathcal{A}}}
\newcommand{\calB}{\ensuremath{\mathcal{B}}}
\newcommand{\calC}{\ensuremath{\mathcal{C}}}

\newcommand{\calF}{\ensuremath{\mathcal{F}}}
\newcommand{\calG}{\ensuremath{\mathcal{G}}}

\newcommand{\calL}{\ensuremath{\mathcal{L}}}
\newcommand{\calM}{\ensuremath{\mathcal{M}}}
\newcommand{\calN}{\ensuremath{\mathcal{N}}}

\newcommand{\calP}{\ensuremath{\mathcal{P}}}

\newcommand{\calR}{\ensuremath{\mathcal{R}}}
\newcommand{\calS}{\ensuremath{\mathcal{S}}}

\newcommand{\calU}{\ensuremath{\mathcal{U}}}

\newcommand{\calX}{\ensuremath{\mathcal{X}}}
\newcommand{\calY}{\ensuremath{\mathcal{Y}}}

\newcommand{\bL}{\ensuremath{\bm{L}}}

\newcommand{\bbE}{\ensuremath{\mathbb{E}}}

\newcommand{\bbL}{\ensuremath{\mathbb{L}}}

\newcommand{\bbP}{\ensuremath{\mathbb{P}}}
\newcommand{\bbQ}{\ensuremath{\mathbb{Q}}}
\newcommand{\bbR}{\ensuremath{\mathbb{R}}}

\newcommand{\bbT}{\ensuremath{\mathbb{T}}}
\newcommand{\bbU}{\ensuremath{\mathbb{U}}}
\newcommand{\bbV}{\ensuremath{\mathbb{V}}}

\def\nd/{\textsuperscript{nd}}
\def\rd/{\textsuperscript{rd}}
\def\th/{\textsuperscript{th}}

\makeatletter
\def\nnil{\nil}
\newcounter{prob}

\newcounter{dual}

\makeatother

\newenvironment{prob*}{%
	\csname equation*\endcsname%
	\aligned%
}{%
	\endaligned%
	\csname endequation*\endcsname%
}

\AtBeginDocument{%
  
}

\SetKwInput{kwInit}{Input}
\SetKwInput{kwOutput}{Output}
\SetKwFunction{proc}{proc}
\SetKwProg{myproc}{Procedure}{}{}

\SetCommentSty{mycommfont}
\setlength{\algomargin}{10pt}

\newcommand{\cmark}{\ding{51}}
\newcommand{\xmark}{\ding{55}}

\newcommand{\changelinkcolor}[1]{\hypersetup{linkcolor=#1}}  

\usepackage[final]{neurips_2022}

\usepackage{todonotes}

\title{Probable Domain Generalization \\ via Quantile Risk Minimization}

\author{%
  \textbf{Cian Eastwood}\thanks{Equal contribution. Correspondence to \texttt{c.eastwood@ed.ac.uk} or \texttt{arobey1@seas.upenn.edu}.}\, $^{1,2}$
  \quad \textbf{Alexander Robey}$^{* 3}$
  \quad \textbf{Shashank Singh}$^{1}$ \vspace{2mm}\\
  \textbf{Julius von K\"ugelgen}$^{1,4}$
  \quad \textbf{Hamed Hassani}$^{\,3}$
  \quad \textbf{George J. Pappas}$^{\,3}$
  \quad \textbf{Bernhard Sch\"olkopf}$^{\, 1}$ \vspace{4mm}\\ 
 $^{1}$ Max Planck Institute for Intelligent Systems, T\"ubingen \\
 $^{2}$ University of Edinburgh \quad $^{3}$ University of Pennsylvania \quad $^{4}$ University of Cambridge
}

\begin{document}%
\doparttoc%
\faketableofcontents%
\part{}\vspace{-10mm}%
\changelinkcolor{black}{}%
\maketitle
\changelinkcolor{red}{}%

\begin{abstract}
\let\thefootnote\relax\footnotetext{Code available at: \href{https://github.com/cianeastwood/qrm}{https://github.com/cianeastwood/qrm}}
Domain generalization (DG) seeks predictors which perform well on unseen test distributions by leveraging data drawn from multiple related training distributions or domains. To achieve this, DG is commonly formulated as an average- or worst-case problem over the set of possible domains. However, predictors that perform well on average lack robustness while predictors that perform well in the worst case tend to be overly-conservative. To address this, we propose a new probabilistic framework for DG where the goal is to learn predictors that perform well \emph{with high probability}. Our key idea is that distribution shifts seen during training should inform us of probable shifts at test time, which we realize by explicitly relating training and test domains as draws from the same underlying meta-distribution. To achieve probable DG, we propose a new optimization problem called \emph{Quantile Risk Minimization} (QRM). By minimizing the $\alpha$-quantile of predictor's risk distribution over domains, QRM seeks predictors that perform well with probability $\alpha$. To solve QRM in practice, we propose the \textit{Empirical QRM}~(EQRM) algorithm and provide: (i) a generalization bound for EQRM; and (ii) the conditions under which EQRM recovers the causal predictor as $\alpha\! \to\! 1$. In our experiments, we introduce a more holistic quantile-focused evaluation protocol for DG and demonstrate that EQRM outperforms state-of-the-art baselines on datasets from WILDS and DomainBed. 
\end{abstract}

\section{Introduction}\vspace{-1mm}
\label{sec:intro}
Despite remarkable successes in recent years~\citep{lecun2015deep, silver2016mastering, jumper2021highly}, machine learning systems often fail calamitously when presented with \textit{out-of-distribution} (OOD) data~\citep{torralba2011unbiased, beery2018recognition, hendrycks2019benchmarking, geirhos2020shorcut}. 
Evidence of state-of-the-art systems failing in the face of distribution shift is mounting rapidly---be it due to spurious correlations~\citep{zech2018variable, arjovsky2020invariant, niven2020probing}, changing sub-populations~\citep{santurkar2020breeds, wilds2021, borkan2019nuanced}, changes in location or time~\citep{hansen2013high, christie2018functional, shankar2021image}, or other naturally-occurring variations~\citep{karahan2016image, azulay2019deep, eastwood2021source, hendrycks2021natural, hendrycks2021many,robey2021modelbased,zhou2022deep}. These OOD failures are particularly concerning in safety-critical applications such as medical imaging~\citep{jovicich2009mri, albadawy2018deep, tellez2019quantifying, beede2020, wachinger2021detect} and autonomous driving~\citep{dai2018dark, volk2019towards, michaelis2019dragon}, where they represent one of the most significant barriers to the real-world deployment of machine learning systems~\citep{ribeiro2016should, biggio2018wild, maartensson2020reliability, castro2020causality}.

Domain generalization (DG) seeks to improve a system's
OOD performance by leveraging datasets from multiple
environments or domains at training time, each collected under different experimental conditions~\citep{blanchard2011generalizing, muandet2013domain, gulrajani2020search} (see \cref{fig:fig1:train-test}). 
The goal is to build a predictor which exploits invariances across the training domains in the hope that these invariances also hold in related but distinct test domains~\citep{gulrajani2020search, schoelkopf2012causal, li2018learning, krueger21rex}. 
\looseness-1 To realize this goal, DG is commonly formulated as an average-~\citep{blanchard2011generalizing,blanchard2021domain,zhang2020adaptive} or worst-case~~\citep{ben2009robust, sagawa2019distributionally, arjovsky2020invariant} optimization problem
over the set of possible domains.
However, optimizing for average performance provably lacks robustness to OOD data~\cite{nagarajan2021understanding}, while optimizing for worst-domain performance
tends to lead to overly-conservative solutions, with worst-case outcomes unlikely in practice~\citep{tsipras2019robustness, raghunathan2019adversarial}.

In this work, we argue that DG is neither an average-case nor a worst-case problem, but rather a probabilistic one.
To this end, we propose a probabilistic framework for DG, which we call \textit{Probable Domain Generalization} (\cref{sec:qrm}), wherein the key idea is that distribution shifts seen during training should inform us of \emph{probable} shifts at test time.
\looseness-1 To realize this, we explicitly relate training and test domains as draws from the same underlying meta-distribution~(\cref{fig:fig1:q-dist}), and then propose a new optimization problem called \emph{Quantile Risk Minimization} (QRM). By minimizing the $\alpha$-quantile of predictor's risk distribution over domains~(\cref{fig:fig1:risk}), QRM seeks predictors that perform well \emph{with high probability} rather than on average or in the worst case. In particular, QRM leverages the key insight that this $\alpha$-quantile is an upper bound on the test-domain risk which holds with probability $\alpha$, meaning that $\alpha$ is an interpretable conservativeness-hyperparameter with $\alpha\! =\! 1$ corresponding to the worst-case setting.

To solve QRM in practice, we introduce the \textit{Empirical QRM}~(EQRM) algorithm (\cref{sec:qrm_algs}). Given a predictor's empirical risks on the training domains, EQRM forms an estimated risk distribution using kernel density estimation (KDE, \cite{parzen1962estimation}). Importantly, KDE-smoothing ensures a right tail that extends beyond the largest training risk (see \cref{fig:fig1:risk}), with this risk ``extrapolation''~\citep{krueger21rex} unlocking \emph{invariant prediction} for EQRM (\cref{sec:qrm_algs:eqrm}). 
We then provide theory for EQRM (\cref{sec:qrm_algs:gen_bound}, \cref{sec:qrm_algs:causality}) and demonstrate empirically that EQRM outperforms state-of-the-art baselines on real and synthetic data (\cref{sec:exps}).

\textbf{Contributions.} To summarize our main contributions:\vspace{-2mm}
\begin{itemize}
    \item \looseness-1 \textit{A new probabilistic perspective and objective for DG:} We argue that predictors should be trained and tested based on their ability to perform well \emph{with high probability}. We then propose Quantile Risk Minimization for achieving this \emph{probable} form of domain generalization (\cref{sec:qrm}).
    \item \textit{A new algorithm:} \looseness-1 We propose the EQRM algorithm to solve QRM in practice and ultimately learn predictors that generalize with probability $\alpha$~(\cref{sec:qrm_algs}). We then provide several analyses of EQRM:\vspace{-0.75mm}
    \begin{itemize}
        \item \textit{Learning theory:} \looseness-1 We prove a uniform convergence bound, meaning the empirical $\alpha$-quantile risk tends to the population $\alpha$-quantile risk given sufficiently many domains and samples~(\cref{thm:simplified-gen-bound}).
        \item \textit{Causality.} We prove that EQRM learns predictors with invariant risk as $\alpha\! \to\! 1$ (\cref{prop:equalize_main}), then provide the conditions under which this is sufficient to 
        recover the causal predictor~(\cref{thm:causal_predictor}).
        \item \textit{Experiments:} We demonstrate that EQRM outperforms state-of-the-art baselines on several standard DG benchmarks, including \texttt{CMNIST}~\citep{arjovsky2020invariant} and datasets from WILDS~\citep{wilds2021} and DomainBed~\citep{gulrajani2020search}, and highlight the importance of assessing the tail or \emph{quantile performance} of DG algorithms~(\cref{sec:exps}).
    \end{itemize}
\end{itemize}

\begin{figure}[tb]\vspace{-4mm}
    \centering
    \begin{subfigure}[b]{0.24\linewidth}
        \centering
        \includegraphics[width=\linewidth]{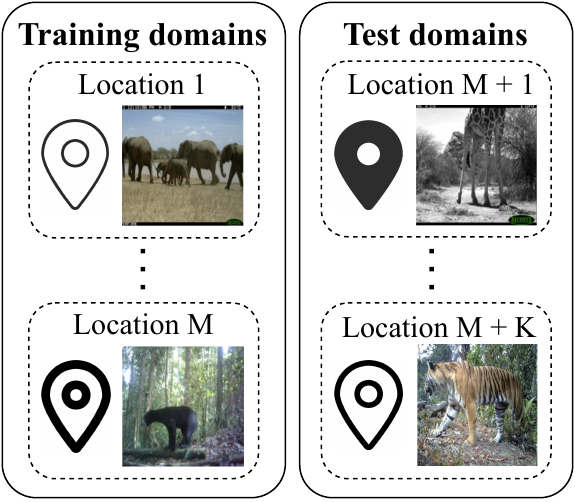}
        \vspace{0.1mm}
        \caption{}
        \label{fig:fig1:train-test}
    \end{subfigure}
    \hfill
    \begin{subfigure}[b]{0.365\linewidth}
        \centering
        \includegraphics[width=0.95\textwidth]{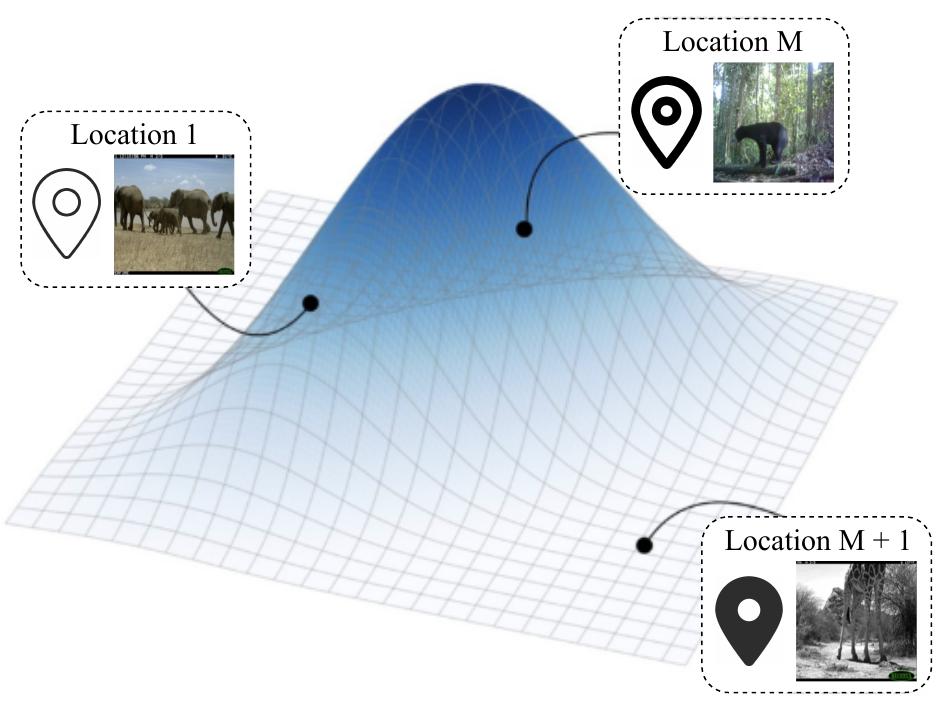}
        \caption{}
        \label{fig:fig1:q-dist}
    \end{subfigure}
    \hfill
    \begin{subfigure}[b]{0.36\linewidth}
        \centering
        \includegraphics[width=\linewidth]{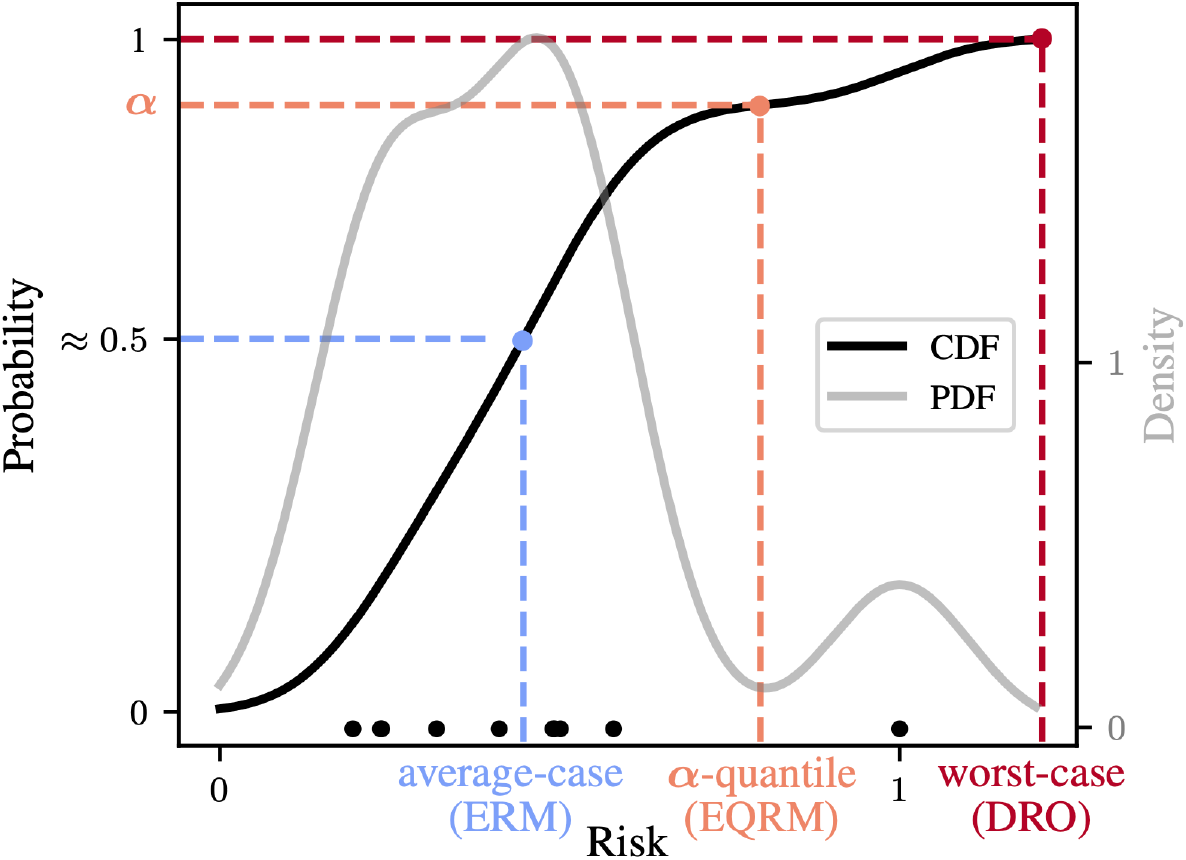}
        \caption{}
        \label{fig:fig1:risk}
    \end{subfigure}
    \vspace{-1.5mm}
    \caption{\small \textbf{Overview of Probable Domain Generalization and Quantile Risk Minimization.} (a) In domain generalization, training and test data are drawn from multiple related distributions or domains. For example, in the \texttt{iWildCam} dataset~\citep{beery2021iwildcam}, which contains camera-trap images of animal species, the domains correspond to the different camera-traps which captured the images. (b) We relate training and test domains as draws from the same underlying (and often unknown) meta-distribution over domains $\bbQ$. (c) We consider a predictor's estimated risk distribution over training domains, naturally-induced by $\bbQ$. \looseness-1 By minimizing the $\alpha$-quantile of this distribution, we learn predictors that perform well with high probability ($\approx \alpha$) rather than on average or in the worst case.
    }\label{fig:fig1}
\end{figure}

\section{Background: Domain generalization}\label{sec:backgr}
\textbf{Setup.}
In domain generalization~(DG), predictors are trained on data drawn from multiple related training distributions or \textit{domains} and then evaluated on
related but unseen 
test domains. 
For example, in the \texttt{iWildCam} dataset~\citep{beery2021iwildcam}, the task is to classify animal species in images, and the domains correspond to the different camera-traps which captured the images~(see \cref{fig:fig1:train-test}). 
More formally,
we consider datasets $D^e = \{(x^e_i, y^e_i)\}_{i=1}^{n_e}$ collected from $m$ different training domains or \textit{environments} $\Etr:= \{e_1, \dots, e_m\}$, with each dataset $D^e$ containing data pairs $(x^e_i, y^e_i)$ sampled i.i.d.\ from $\Prob(X^e,Y^e)$. \looseness-1 Then, given a suitable function class $\calF$ and loss function $\ell$, the goal of DG is to learn a predictor $f\in\calF$ that generalizes to data drawn from a larger set of all possible
domains $\Eall \supset \Etr$. 

\textbf{Average case.} Letting $\calR^e(f)$ denote the statistical risk of $f$ in domain $e$, and $\bbQ$ a distribution over the domains in $\Eall$, DG was first formulated~\citep{blanchard2011generalizing, muandet2013domain} as the following average-case problem:
\begin{equation}\label{eq:domain-gen-average-case}
    \min_{f\in\calF} \bbE_{e \sim \bbQ} \calR^e(f)
    \qquad \text{where} \qquad 
    \calR^e(f) := \mathbb{E}_{\bbP(X^e, Y^e)} [\ell(f(X^e), Y^e)].
\end{equation}

\textbf{Worst case.} \looseness-1 Since predictors that perform well \emph{on average} provably lack robustness~\citep{nagarajan2021understanding}, i.e.\ they can perform quite poorly on large subsets of $\Eall$, subsequent works~\citep{ben2009robust, sagawa2019distributionally, arjovsky2020invariant, krueger21rex, ahuja2021invariance, robey2021modelbased} have sought robustness by formulating DG as the following \emph{worst-case} problem:
\begin{equation}
\label{eq:domain-gen}
    \min_{f\in\calF} \max_{e \in \Eall} \calR^e(f).
\end{equation}
As we only have access to data from a finite subset of $\Eall$ during training, solving \eqref{eq:domain-gen} is not just challenging but in fact impossible~\citep{krueger21rex,ben2010theory,christiansen2021causal} without restrictions on how the domains may differ. %
\textbf{Causality and invariance in DG.} Causal works on DG~\citep{arjovsky2020invariant, krueger21rex, peters2016causal, christiansen2021causal, rojas2018invariant} describe domain differences using the language of causality and the notion of \textit{interventions}~\cite{pearl2009causality, peters2017elements}. In particular, they assume
all domains share the same underlying \textit{structural causal model}~(SCM)~\cite{pearl2009causality}, with different domains corresponding to different interventions (see \cref{app:causality:defs} for formal definitions and a simple example).
Assuming the mechanism of $Y$ remains fixed or invariant
but all $X$s may be intervened upon, recent works have shown that only the causal predictor has invariant: (i) predictive distributions~\citep{peters2016causal}, coefficients~\citep{arjovsky2020invariant} or risks~\citep{krueger21rex} across domains; and (ii) generalizes to arbitrary interventions on the $X$s~\citep{peters2016causal, arjovsky2020invariant,rojas2018invariant}. \looseness-1 These works then
leverage some form of invariance across domains to discover causal relationships which, through the invariant mechanism assumption, generalize to new domains.

\section{Quantile Risk Minimization}
\label{sec:qrm}
In this section we introduce 
\textit{Quantile Risk Minimization} (QRM) for achieving \textit{Probable Domain Generalization}. The core idea
is to replace the worst-case perspective
of~\eqref{eq:domain-gen} with a probabilistic one. This approach is founded on a great deal of work in classical fields such as control theory~\cite{campi2008exact,ramponi2018consistency} and smoothed analysis~\cite{spielman2004smoothed}, wherein approaches that yield high-probability guarantees are used in place of worst-case approaches in an effort to mitigate conservatism and computational limitations.
This mitigation is of particular interest in domain generalization
since generalizing to arbitrary domains is impossible~\cite{krueger21rex,ben2010theory,christiansen2021causal}.
Thus,
motivated by this classical literature, our goal is to obtain predictors that are robust \emph{with high probability} over domains drawn from $\Eall$, rather than in the worst case.  

\textbf{A distribution over environments.} We start by assuming the existence of
a probability distribution $\mathbb{Q}(e)$ over the set of all environments $\Eall$. 
For instance, in the context of medical imaging, $\bbQ$ could represent a distribution over potential changes to a hospital's setup or simply a distribution over candidate hospitals.
Given that such a distribution $\bbQ$ exists\footnote{As $\bbQ$ is often unknown, our analysis does not rely on using an explicit expression for $\bbQ$.}, we can think of the risk $\calR^e(f)$ as a \emph{random variable} for each $f\in\calF$, where the randomness is engendered by the draw of $e\sim\bbQ$.  This perspective gives rise to the following analogue of the optimization problem in~\eqref{eq:domain-gen}:
\begin{equation}
  \min_{f\in\calF} \: \esssup_{e\sim\bbQ} \calR^e(f) \quad\text{where}\quad \esssup_{e\sim\bbQ} \calR^e(f) = \inf\Big\{ t\geq 0 : \Pr_{e\sim\bbQ} \left\{\calR^e(f) \leq t\right\} = 1\Big\} \label{eq:domain-gen-rewritten}
\end{equation}
Here, $\esssup$ denotes the \emph{essential-supremum} operator from measure theory, meaning that for each $f\in\calF$, $\esssup_{\bbQ} \calR^e(f)$ is the least upper bound on $\calR^e(f)$ that holds for almost every $e\sim\bbQ$.  In this way, the $\esssup$ in~\eqref{eq:domain-gen-rewritten} is the measure-theoretic analogue of the $\max$ operator in~\eqref{eq:domain-gen}, with the subtle but critical difference being that the $\esssup$ in~\eqref{eq:domain-gen-rewritten} can neglect domains of measure zero under $\bbQ$. For example, for discrete $\bbQ$, \eqref{eq:domain-gen-rewritten} ignores domains which are impossible (i.e.\ have probability zero) while~\eqref{eq:domain-gen} does not, laying the foundation for ignoring domains which are \emph{improbable}.

\textbf{High-probability generalization.}  Although the minimax problem in~\eqref{eq:domain-gen-rewritten} explicitly incorporates the distribution $\bbQ$ over environments, this formulation is no less conservative than~\eqref{eq:domain-gen}.  Indeed, 
in many cases,~\eqref{eq:domain-gen-rewritten} is equivalent to~\eqref{eq:domain-gen}; see Appendix~\ref{app:sup-and-esssup} for details.  Therefore, rather than considering the worst-case problem in~\eqref{eq:domain-gen-rewritten}, we propose the following generalization of~\eqref{eq:domain-gen-rewritten} which requires that predictors generalize with probability $\alpha$ rather than in the worst-case:
\begin{equation}
\begin{alignedat}{2}
\label{eq:prob_gen}
    &\min_{f\in\calF,\, \thres \in \bbR} &&\thres  \qquad \subjto \Pr_{e\sim\bbQ} \left\{\calR^e(f) \leq t \right\} \geq \alpha
\end{alignedat}
\end{equation}
The optimization problem in~\eqref{eq:prob_gen} formally defines what we mean by
\textit{Probable Domain Generalization}.
In particular, we say that \textit{a predictor~$f$ generalizes with risk~$t$ at level~$\alpha$} if $f$ has risk at most~$t$ with probability at least~$\alpha$ over domains sampled from $\bbQ$.
In this way, the conservativeness parameter~$\alpha$ controls the strictness of generalizing to unseen domains.

\textbf{A distribution over risks.}  The optimization problem presented in~\eqref{eq:prob_gen} offers a principled formulation for generalizing to unseen distributional shifts governed by $\bbQ$.  However, $\bbQ$ is often unknown in practice and its support $\Eall$ may be high-dimensional or challenging to define~\cite{robey2021modelbased}. 
While 
many previous works have made progress by limiting the scope of possible shift types
over domains~\cite{eastwood2021source,robey2021modelbased,sagawa2019distributionally}, 
in practice, such structural assumptions are often difficult to justify and impossible to test. For this reason, we start our exposition of QRM by offering an alternative view of~\eqref{eq:prob_gen} which elucidates how a predictor's \emph{risk distribution} plays a central role in achieving probable domain generalization.

\looseness-1 To begin, note
that for each $f\in\calF$, the distribution over domains $\bbQ$ naturally induces\footnote{\looseness-1 $\bbT_f$ can be formally defined as the push-forward measure of $\bbQ$ through the risk functional $\calR^e(f)$; see App.~\ref{app:sup-and-esssup}.} a distribution~$\bbT_f$ over the risks in each domain $\calR^e(f)$.
In this way, rather than considering the randomness of~$\bbQ$ in
the often-unknown and (potentially) high-dimensional space of possible 
shifts~(\cref{fig:fig1:q-dist}), one can consider it
in the real-valued space of risks~(\cref{fig:fig1:risk}).  
This is analogous to statistical learning theory, where the analysis of convergence of empirical risk minimizers (i.e., of functions) is substituted by that of a weaker form of convergence, namely that of scalar risk functionals---a crucial step for VC theory \cite{vapnik1999nature}.
From this perspective, the statistics of $\bbT_f$ can be thought of as capturing the sensitivity of $f$ to different environmental shifts, summarizing the effect of different intervention types, strengths, and frequencies.  To this end,~\eqref{eq:prob_gen} can be equivalently rewritten in terms of the risk distribution $\bbT_f$ as follows:
\vspace{-0.2em}
\begin{mdframed}[roundcorner=5pt, backgroundcolor=yellow!8,innertopmargin=8pt]
\begin{equation} \tag{QRM}
    \min_{f\in\calF} \: F^{-1}_{\bbT_f}(\alpha) \quad\text{where}\quad F^{-1}_{\bbT_f}(\alpha) := \inf \Big\{t\in\R : \Pr_{R\sim\bbT_f} \left\{ R \leq t\right\} \geq \alpha \Big\}. \label{eq:qrm}
\end{equation}
\end{mdframed}
\vspace{-0.8em}
Here, $F^{-1}_{\bbT_f}(\alpha)$ denotes the inverse CDF (or quantile\footnote{In financial optimization, when concerned with a distribution over potential 
losses, the $\alpha$-quantile value is known as the \textit{value at risk}
(VaR) at level $\cutoff$~\citep{duffie1997overview}.}) function of the risk distribution $\bbT_f$. By means of this reformulation, 
we elucidate how solving~\eqref{eq:qrm} amounts to finding a predictor with minimal $\alpha$-quantile risk.  That is,~\eqref{eq:qrm} requires that a predictor $f$ satisfy the probabilistic constraint for at least an $\alpha$-fraction of the risks $R\sim\mathbb{T}_f$, or, equivalently, for an $\alpha$-fraction of the environments $e\sim\mathbb{Q}$.
In this way, $\alpha$ can be used to interpolate between typical ($\alpha\!=\!0.5$, median) and worst-case ($\alpha\!=\!1$) problems in an interpretable manner. Moreover, if the mean and median of $\bbT_f$ coincide, $\alpha\!=\!0.5$ gives an average-case problem, with \eqref{eq:qrm} recovering several notable objectives for DG as special cases.
\begin{proposition}\label{prop:average-case-equiv}
For $\alpha\!\! =\!\! 1$,~\eqref{eq:qrm} is equivalent to the worst-case problem of~\eqref{eq:domain-gen-rewritten}.
\looseness-1 For $\alpha\! =\! 0.5$, it
is equivalent to the average-case problem of~\eqref{eq:domain-gen-average-case} if the mean and median of $\bbT_f$ coincide $\forall f\! \in\! \calF$:
\begin{align}\label{eq:dg-average-case}
\textstyle
    \min_{f\in\calF} \: \E_{R\sim\bbT_f} R =
    \min_{f\in\calF} \: \E_{e\sim\bbQ} \calR^e(f)
\end{align}%
\end{proposition}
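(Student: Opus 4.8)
The plan is to reduce both equivalences to a \emph{pointwise} identity between objective functions: I will show that for every fixed $f\in\calF$ the QRM objective $F^{-1}_{\bbT_f}(\alpha)$ coincides with the relevant worst-case (resp.\ average-case) objective at $\alpha=1$ (resp.\ $\alpha=0.5$). Once the objectives agree pointwise, the two minimization problems share the same optimal value and the same set of minimizers, so equivalence is immediate. The single tool I need is that $\bbT_f$ is the push-forward of $\bbQ$ under $e\mapsto\calR^e(f)$, which yields the transfer identities $\Pr_{R\sim\bbT_f}\{R\leq t\}=\Pr_{e\sim\bbQ}\{\calR^e(f)\leq t\}$ and, by change of variables, $\E_{R\sim\bbT_f}R=\E_{e\sim\bbQ}\calR^e(f)$.

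For $\alpha=1$, I would substitute into the quantile definition and use the first transfer identity to get $F^{-1}_{\bbT_f}(1)=\inf\{t\in\R:\Pr_{e\sim\bbQ}\{\calR^e(f)\leq t\}=1\}$, then compare this with the definition of $\esssup_{e\sim\bbQ}\calR^e(f)$ in~\eqref{eq:domain-gen-rewritten}. These are infima of the same set of thresholds, the only discrepancy being the constraint $t\geq0$ appearing in the $\esssup$; since the loss is nonnegative the risk is almost surely nonnegative, so no $t<0$ satisfies the CDF-equals-one condition and that constraint is vacuous. Hence the two objectives agree for every $f$ and~\eqref{eq:qrm} reduces to~\eqref{eq:domain-gen-rewritten}.

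For $\alpha=0.5$, I would observe that $F^{-1}_{\bbT_f}(0.5)$ is, by its generalized-inverse definition, the (lower) median of $\bbT_f$. Using the hypothesis that the mean and median of $\bbT_f$ coincide for every $f\in\calF$, I replace the median by the mean, and then invoke the second transfer identity to rewrite $\E_{R\sim\bbT_f}R$ as $\E_{e\sim\bbQ}\calR^e(f)$. This gives the pointwise equality of the QRM objective with the average-case objective, establishing~\eqref{eq:dg-average-case}.

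The main thing to be careful about---and essentially the only place subtlety enters---is the meaning of ``median'' when $F_{\bbT_f}$ is not strictly increasing, as happens for discrete $\bbQ$ where $\bbT_f$ carries atoms. The quantile $F^{-1}_{\bbT_f}(0.5)$ then selects the smallest median, and I must ensure the hypothesis ``mean $=$ median'' is read with exactly this convention for the substitution to be legitimate; the $\alpha=1$ argument has a milder analogue of this issue, resolved above via nonnegativity of the risk. Everything else is a direct unwinding of definitions.
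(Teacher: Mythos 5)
Your proof is correct and follows essentially the same route the paper takes: the paper states the proposition without a dedicated proof, but its formal justification is exactly the push-forward identity $\bbT_f = G_f \,\#\, \bbQ$ developed in Appendix~\ref{sec:formal-connections}, which gives the transfer of probabilities and expectations you use, after which both cases reduce to unwinding the quantile definition at $\alpha=1$ (where nonnegativity of the risk, built into the paper's choice of codomain $\R_+$ for $G_f$, makes the $t\geq 0$ constraint in the $\esssup$ vacuous) and at $\alpha=0.5$ (where the lower-median convention you flag is indeed the right reading of the hypothesis). No gaps.
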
%

\textbf{Connection to DRO.} \looseness-1 While fundamentally different in terms of objective and generalization capabilities (see \cref{sec:qrm_algs}), we draw connections between QRM and distributionally robust optimization (DRO) in \cref{app:dro} by considering an alternative problem which optimizes the \emph{superquantile}.

\section{Algorithms for Quantile Risk Minimization}\label{sec:qrm_algs}\vspace{-1mm}
We now introduce the \emph{Empirical QRM}~(EQRM) algorithm for solving \eqref{eq:qrm} in practice, akin to Empirical Risk Minimization~(ERM) solving the Risk Minimization~(RM) problem~\citep{Vapnik98}. 

\subsection{From QRM to Empirical QRM}\label{sec:qrm_algs:eqrm}\vspace{-1.5mm}
In practice, given a predictor $f$ and its empirical risks $\hat{\calR}^{e_1}(f), \dots, \hat{\calR}^{e_m}(f)$ on the $m$ training domains, we must form
an \emph{estimated} risk distribution $\widehat{\bbT}_f$. 
In general, given no prior knowledge about the form of $\bbT_f$ (e.g.\ Gaussian), we 
use 
\textit{kernel density estimation} (KDE, \cite{rosenblatt1956remarks, parzen1962estimation}) with Gaussian kernels and either the Gaussian-optimal rule~\citep{silverman1986density} or Silverman's rule-of-thumb~\citep{silverman1986density} for bandwidth selection. \Cref{fig:fig1:risk} depicts the PDF and CDF for 10 training risks when using Silverman's rule-of-thumb.
Armed with a predictor's estimated risk distribution $\widehat{\bbT}_f$, we can approximately solve \eqref{eq:qrm} using the following empirical analogue:
\begin{equation} %
\begin{alignedat}{2}%
\label{eq:qrm2}
    \min_{f\in\calF}\ F^{-1}_{\widehat{\bbT}_f}(\alpha)
\end{alignedat}
\end{equation}
Note that~\eqref{eq:qrm2} depends only on known quantities so we can compute and minimize it in practice, as detailed in \cref{alg:eqrm} of \cref{sec:impl_details:algs}.

\begin{wrapfigure}{r}{0.29\textwidth}
\centering
    \centering
    \includegraphics[width=\linewidth]{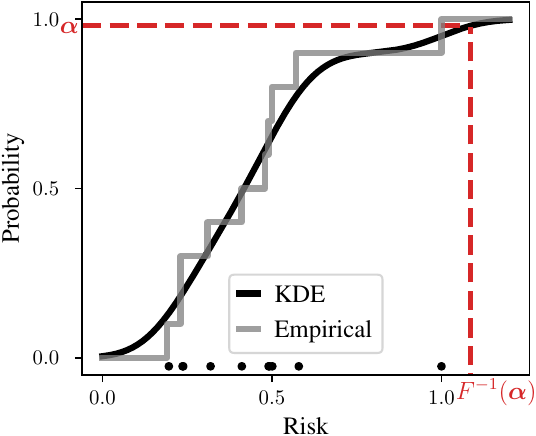}\vspace{-2mm}
    \caption{\small Risk CDFs.}
    \label{fig:kde-smoothing}
\end{wrapfigure}
\textbf{Smoothing permits risk extrapolation.}
\Cref{fig:kde-smoothing} compares the KDE-smoothed  CDF (black) to the unsmoothed empirical CDF (gray). As shown, the latter places zero probability mass on risks greater than our largest training risk, thus implicitly assuming that test risks cannot be larger than training risks. In contrast, the KDE-smoothed  CDF permits ``risk extrapolation''~\citep{krueger21rex} since its right tail extends beyond our largest training risk, with the estimated $\alpha$-quantile risk going to infinity as $\alpha\! \to\! 1$ (when kernels have full support). Note that different bandwidth-selection methods encode different assumptions about right-tail heaviness and thus about projected OOD risk. In \cref{sec:qrm_algs:causality}, we discuss how, as $\alpha\! \to\! 1$, this KDE-smoothing allows EQRM to learn predictors with invariant risk over domains.
In~\cref{app:kde}, we discuss different bandwidth-selection methods for EQRM.

\subsection{Theory: Generalization bound}\label{sec:qrm_algs:gen_bound}
We now give a simplified version of our main generalization bound---\cref{thm:generalization}---which states that, given sufficiently many domains and samples, the empirical $\alpha$-quantile risk
is a good estimate of the population $\alpha$-quantile risk. In contrast to previous results for DG, we bound the \emph{proportion of test domains} for which a predictor performs well, rather than the average error~\citep{blanchard2011generalizing, blanchard2021domain}, and make no assumptions about the shift type, e.g.\ covariate shift~\citep{muandet2013domain}. \looseness-1 The full version, stated and proved in Appendix~\ref{app:gen_bounds}, provides specific finite-sample bounds on $\epsilon_1$ and $\epsilon_2$ below, depending on the hypothesis class $\calF$, the empirical estimator
$F^{-1}_{\widehat{\bbT}_f}(\alpha)$,
and the assumptions on the possible risk profiles of hypotheses $f \in \calF$. 
\begin{theorem}[Simplified form of~\cref{thm:generalization}, uniform convergence]\label{thm:simplified-gen-bound}
Given $m$ domains and $n$ samples in each, then with high probability over the training data,
\begin{equation}
    \sup_{f \in \calF} \left|
    F^{-1}_{\bbT_f}(\alpha - \epsilon_2) - 
    F^{-1}_{\widehat{\bbT}_f}(\alpha)
    \right| \leq \epsilon_1,
    \label{ineq:simplified_generalization_bound}
\end{equation}
where $\epsilon_1 \to 0$ as $n \to \infty$ and $\epsilon_2 \to 0$ as $m \to \infty$.
\end{theorem}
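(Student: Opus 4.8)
The plan is to decompose the total error into its two independent sources---the finite number of domains $m$ and the finite number of samples $n$ per domain---and to show that the former controls a \emph{horizontal} (quantile-level) shift $\epsilon_2$ while the latter, together with the KDE bandwidth, controls a \emph{vertical} (risk-value) tolerance $\epsilon_1$. To keep these separated I would introduce the oracle empirical CDF $F^m_f(t) := \frac{1}{m}\sum_{i=1}^m \indicator[\calR^{e_i}(f) \leq t]$ built from the \emph{true} per-domain risks, which sits between the population object $F_{\bbT_f}$ and the estimate $F_{\widehat{\bbT}_f}$, and then bound $F_{\widehat{\bbT}_f}^{-1}(\alpha)$ against $F_{\bbT_f}^{-1}(\alpha - \epsilon_2)$ by chaining through $(F^m_f)^{-1}(\alpha)$.

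First I would handle the within-domain error. Since the $e_i$ are fixed once drawn, for each domain the empirical risk $\widehat{\calR}^{e_i}(f)$ is an average of i.i.d.\ bounded losses, so a concentration inequality (Hoeffding/Bernstein) together with a uniform-convergence argument over $\calF$ (via Rademacher complexity or covering numbers of the loss class) gives $\sup_{f\in\calF}\max_{i} |\widehat{\calR}^{e_i}(f) - \calR^{e_i}(f)| \leq \delta_1$ with high probability, where $\delta_1 \to 0$ as $n\to\infty$. Because perturbing each atom of an empirical distribution by at most $\delta_1$ perturbs every quantile by at most $\delta_1$, and because KDE smoothing with bandwidth $h$ shifts quantiles by $O(h)$, this yields $|F_{\widehat{\bbT}_f}^{-1}(\alpha) - (F^m_f)^{-1}(\alpha)| \leq \epsilon_1$ uniformly over $\calF$, absorbing $\delta_1$ and the vanishing bandwidth into $\epsilon_1$.

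Next I would handle the across-domain error. The $m$ true risks $\calR^{e_1}(f),\dots,\calR^{e_m}(f)$ are i.i.d.\ draws from $\bbT_f$, so $F^m_f$ is an empirical CDF of $F_{\bbT_f}$. A Dvoretzky--Kiefer--Wolfowitz bound made \emph{uniform over $f$}---by controlling the capacity of the threshold class $\{(e,t)\mapsto \indicator[\calR^e(f)\leq t] : f\in\calF,\, t\in\R\}$---gives $\sup_{f\in\calF}\sup_{t}|F^m_f(t) - F_{\bbT_f}(t)| \leq \epsilon_2$ with high probability, where $\epsilon_2\to0$ as $m\to\infty$. The standard conversion from a sup-norm CDF bound to the quantile function, using only the one-sided infimum definition (so that no strict monotonicity of $F_{\bbT_f}$ is required), then gives $F_{\bbT_f}^{-1}(\alpha-\epsilon_2)\leq (F^m_f)^{-1}(\alpha)\leq F_{\bbT_f}^{-1}(\alpha+\epsilon_2)$. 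Anchoring at the lower endpoint $F_{\bbT_f}^{-1}(\alpha-\epsilon_2)$ and combining with the vertical bound from the previous step yields~\eqref{ineq:simplified_generalization_bound}; the regularity assumptions on the admissible risk profiles in the full theorem close the two-sided gap $F_{\bbT_f}^{-1}(\alpha+\epsilon_2)-F_{\bbT_f}^{-1}(\alpha-\epsilon_2)$.

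The hard part will be the uniform-over-$\calF$ version of the across-domain step: a classical DKW inequality applies to a single fixed distribution, but here each $f$ induces a different risk law $\bbT_f$, so I must establish a Glivenko--Cantelli property for the composite threshold class above and bound its complexity---precisely where the structural assumptions on the possible risk profiles of $f\in\calF$ enter. A secondary subtlety is the bookkeeping that forces the domain count to appear \emph{only} as the horizontal shift $\epsilon_2$ and the sample count (with the bandwidth) \emph{only} as the vertical tolerance $\epsilon_1$, which is exactly what the oracle-CDF decomposition is designed to keep cleanly separated.
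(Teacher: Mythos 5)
Your skeleton matches the paper's: both decompose the error into a within-domain piece (uniform convergence of $\widehat\calR^{e_i}(f)$ to $\calR^{e_i}(f)$ over $\calF$, vanishing with $n$ and landing in the vertical tolerance $\epsilon_1$) and an across-domain piece (convergence of an empirical risk CDF to $F_{\bbT_f}$, vanishing with $m$ and landing in the horizontal shift $\epsilon_2$), followed by the standard sup-norm-to-quantile conversion using the infimum definition of $F^{-1}$. Where you genuinely diverge is at the step you correctly identify as the hard one. You propose to make DKW uniform over $\calF$ by bounding the capacity of the threshold class $\{(e,t)\mapsto \indicator[\calR^e(f)\leq t]\}$, i.e., a Glivenko--Cantelli property of the \emph{raw} oracle empirical CDFs $F^m_f$. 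The paper instead applies DKW only pointwise in $f$, and obtains uniformity by symmetrization plus a union bound over an $\calL_\infty$ cover of the class $\calG$ of \emph{smoothed estimated} risk CDFs, paying a bias term $\operatorname{Bias}(\calF,\widehat F)=\sup_{f,t}F_f(t)-\E[\widehat F_f(t)]$ that is absorbed into the horizontal shift. The practical difference is which object carries the capacity assumption: in the paper it is the estimator class $\calG$, whose covering number the practitioner controls directly through the choice of smoothing (this is exactly why KDE is load-bearing there, with the bias bounded via H\"older continuity of the true $F_f$); in your route it is the class of true risk profiles, a condition that is harder to verify and is not something the learner can enforce. Your approach is the more classical VC-theoretic one and would yield a clean bound if the threshold class were known to have finite VC dimension, but you leave that step entirely open, and for arbitrary $\calF$ and arbitrary maps $e\mapsto\calR^e(f)$ it can fail; the paper's smoothing-plus-covering device is precisely the mechanism that sidesteps this.

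Two smaller points. First, you place the $O(h)$ quantile perturbation from KDE smoothing into $\epsilon_1$; since the bandwidth is chosen as a function of $m$ (not $n$), that term does not vanish as $n\to\infty$ alone, so it belongs with the across-domain error --- the paper routes it through the bias term into the horizontal shift $\alpha-\epsilon_2$. Second, your claim that perturbing each atom by at most $\delta_1$ perturbs every quantile of the KDE by at most $\delta_1$ is correct (translation of kernel centers translates the CDF horizontally by at most $\delta_1$ in the worst case), and is essentially how the paper converts the within-domain bound $t_{n,\delta/N,\calF}$ into the additive $\epsilon_1$; that part of your argument is sound.
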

While many domains are required for this to bound be tight, i.e.\ for $\alpha$ to \emph{precisely} estimate the true quantile, our empirical results in \cref{sec:exps} demonstrate that EQRM performs well in practice given only a few domains. In such settings, $\alpha$ still controls conservativeness, but with a less precise interpretation.

\subsection{Theory: Causal recovery}\label{sec:qrm_algs:causality}
We now prove that EQRM can recover the causal predictor in two parts. First, we show that, as $\alpha \to 1$, EQRM learns a predictor with minimal, invariant risk over domains. For Gaussian estimators of the risk distribution $\bbT_f$, some intuition can be gained from \cref{eq:qrm-gaussian} of \cref{app:causality:discovery:gaussian}, noting that $\alpha \to 1$ puts increasing weight on the sample standard deviation of risks over domains $\hat{\sigma}_f$, eventually forcing it to zero.
For kernel density estimators, a similar intuition applies so long as the bandwidth has a certain dependence on $\hat{\sigma}_f$, as detailed in \cref{app:causality:discovery:kde}. Second, we show that learning such a \emph{minimal invariant-risk predictor} is sufficient to recover the causal predictor under weaker assumptions than prior work, namely~\citet{peters2016causal} and \citet{krueger21rex}. Together, these two parts provide the conditions under which EQRM successfully performs ``causal recovery'', i.e., correctly recovers the true causal coefficients in a linear causal model of the data.
\begin{definition}\upshape
A predictor~$f$ is said to be an \emph{invariant-risk predictor} if its risk is equal almost surely across domains (i.e., $\operatorname{Var}_{e \sim \bbQ}[\calR^e(f)] = 0$). \looseness-1 A predictor is said to be a \emph{minimal invariant-risk predictor} if it achieves the minimal possible risk across all possible invariant-risk predictors.
\end{definition}
\begin{proposition}[EQRM learns a minimal invariant-risk predictor as $\alpha\! \to\! 1$, informal version of \cref{prop:Gaussian_QRM_invariant,prop:KDE_QRM_invariant}]
    Assume: (i) $\calF$ contains an invariant-risk predictor 
    with finite training risks; and (ii) no arbitrarily-negative training risks. \looseness-1 Then, as $\alpha\! \to\! 1$, Gaussian and kernel EQRM predictors (the latter with certain bandwidth-selection methods) converge to minimal invariant-risk predictors.
    \label{prop:equalize_main}
\end{proposition}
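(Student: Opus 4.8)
The plan is to reduce both the Gaussian and kernel estimators to a common \emph{location--scale} form and then run a standard variational (epi-convergence) argument. For a fixed predictor $f$, write $\hat{\mu}_f$ for the sample mean and $\hat{\sigma}_f$ for the sample standard deviation of the training risks $\hat{\calR}^{e_1}(f),\dots,\hat{\calR}^{e_m}(f)$. The key observation is that the empirical quantile splits as
\[
F^{-1}_{\widehat{\bbT}_f}(\alpha) = \hat{\mu}_f + \hat{\sigma}_f\, c(\alpha),
\]
exactly in the Gaussian case and up to controllable corrections in the KDE case, where $c(\alpha)\to\infty$ as $\alpha\to 1$. Minimizing such an objective as $\alpha\to 1$ should therefore penalize any predictor with $\hat{\sigma}_f>0$ infinitely heavily, singling out invariant-risk predictors (those with $\hat{\sigma}_f=0$) and, among them, the one of smallest mean risk.

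For the Gaussian estimator this is immediate: $\widehat{\bbT}_f=\mathcal{N}(\hat{\mu}_f,\hat{\sigma}_f^2)$ gives $c(\alpha)=\Phi^{-1}(\alpha)$, the standard normal quantile. First I would fix a minimal invariant-risk predictor $f^\star\in\calF$, guaranteed to exist with finite risks by assumption (i); since $\hat{\sigma}_{f^\star}=0$, its objective value equals its common risk $v^\star:=\hat{\mu}_{f^\star}$ for every $\alpha$. Hence any EQRM minimizer $f_\alpha$ satisfies $\hat{\mu}_{f_\alpha}+\hat{\sigma}_{f_\alpha}\Phi^{-1}(\alpha)\le v^\star$. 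Assumption (ii) bounds the risks (and hence $\hat{\mu}_{f_\alpha}$) below by a constant $B$, so $\hat{\sigma}_{f_\alpha}\Phi^{-1}(\alpha)\le v^\star-B$, forcing $\hat{\sigma}_{f_\alpha}\to 0$ as $\alpha\to 1$. Simultaneously $\hat{\mu}_{f_\alpha}\le v^\star$ since the scale term is nonnegative. A limit point $f$ of $(f_\alpha)$ therefore has $\hat{\sigma}_f=0$ (invariant risk) and $\hat{\mu}_f\le v^\star$, so by minimality of $v^\star$ it is a minimal invariant-risk predictor; this is where I would invoke continuity of $f\mapsto(\hat{\mu}_f,\hat{\sigma}_f)$ and compactness (or lower semicontinuity) of $\calF$ to make ``limit point'' meaningful.

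For the kernel estimator the same skeleton applies, but the location--scale form is only approximate and hinges on the bandwidth tracking $\hat{\sigma}_f$. With a Gaussian kernel of bandwidth $h_f$, the KDE tail is Gaussian, so $F^{-1}_{\widehat{\bbT}_f}(\alpha)\to\infty$ as $\alpha\to 1$; I would sandwich the quantile between $\max_i \hat{\calR}^{e_i}(f) + h_f\,\kappa_-(\alpha)$ and $\max_i \hat{\calR}^{e_i}(f) + h_f\,\kappa_+(\alpha)$ with $\kappa_\pm(\alpha)\to\infty$, using that the right tail is dominated by the kernel placed at the largest risk. For bandwidth rules of the form $h_f\propto\hat{\sigma}_f$ (Silverman / Gaussian-optimal, up to the $m$-dependent constant), both the spread $\max_i\hat{\calR}^{e_i}(f)-\hat{\mu}_f$ and the bandwidth vanish precisely when $\hat{\sigma}_f=0$, so the same upper- and lower-bound argument forces $\hat{\sigma}_{f_\alpha}\to 0$ and $\hat{\mu}_{f_\alpha}\to v^\star$.

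The main obstacle, and where I would spend most of the effort, is twofold. First, in the KDE case one must make the tail asymptotics quantitative enough to show that the $\alpha$-quantile grows in genuine proportion to $\hat{\sigma}_f$ and not slower; this is exactly the content of the ``certain bandwidth-selection methods'' caveat, since a bandwidth that does not scale with $\hat{\sigma}_f$ (or degenerates to $h_f=0$ when the risks coincide) would break the decomposition. Second, upgrading convergence of \emph{objective values} to convergence of the \emph{minimizers} requires a genuine epi-convergence statement: one shows that the family $F^{-1}_{\widehat{\bbT}_{(\cdot)}}(\alpha)$ epi-converges as $\alpha\to 1$ to the constrained functional ``minimize $\hat{\mu}_f$ subject to $\hat{\sigma}_f=0$,'' whose minimizers are by definition the minimal invariant-risk predictors. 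I expect assumptions (i) and (ii) to be precisely what guarantees the limiting functional is proper (feasible set nonempty, value bounded below), which is the nontrivial hypothesis needed for epi-convergence to deliver convergent minimizers.
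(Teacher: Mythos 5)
Your core argument is the same as the paper's: compare the EQRM minimizer's objective to that of the invariant-risk predictor $f_0$ (whose objective is constant in $\alpha$), use the lower bound on risks from assumption (ii) to conclude that the scale term $\hat\sigma_{\hat f_\alpha}\Phi^{-1}(\alpha)$ (Gaussian case) or $h_{\hat f_\alpha}\Phi^{-1}(\alpha)$ (KDE case) is bounded above by a constant, and divide by $\Phi^{-1}(\alpha)\to\infty$. The Gaussian half of your proposal is essentially identical to \cref{prop:Gaussian_QRM_invariant}. Two points of divergence are worth flagging. First, in the KDE case the paper does not sandwich the quantile around the \emph{largest} training risk; it uses the one-sided bound $\hat F_{\text{KDE},f}(x)\le\Phi\bigl((x-\min_i \hat{\calR}^{e_i}(f))/h_f\bigr)$, which yields $\min_i \hat{\calR}^{e_i}(f)+h_f\Phi^{-1}(\alpha)\le \hat F^{-1}_{\text{KDE},f}(\alpha)$ for \emph{all} $\alpha$, and then drives $h_{\hat f_\alpha}\to 0$ and invokes the bandwidth condition $h_f\to 0\implies\hat\sigma_f\to 0$; your max-centered lower bound also works but only for $\alpha>(m-1)/m$ and with the modified rate $\Phi^{-1}(m\alpha-m+1)$, so it is slightly more delicate for no gain. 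Second, your final epi-convergence/compactness step is doing work the paper deliberately avoids: the formal statements (\cref{prop:Gaussian_QRM_invariant,prop:KDE_QRM_invariant}) conclude only that $\hat\sigma_{\hat f_\alpha}\to 0$ and that $\limsup_\alpha \hat\mu_{\hat f_\alpha}$ (resp.\ $\limsup_\alpha \min_i\hat{\calR}^{e_i}(\hat f_\alpha)$) is at most the corresponding quantity for $f_0$, which is what ``converges to a minimal invariant-risk predictor'' means in the informal statement. Upgrading this to convergence of the minimizers in $\calF$ genuinely requires the extra topological hypotheses you name (continuity of $f\mapsto(\hat\mu_f,\hat\sigma_f)$ and compactness or lower semicontinuity), which the paper neither assumes nor needs; if you keep that step you must state those hypotheses explicitly, since without them a sequence $(f_\alpha)$ need not have a limit point at all.
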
%
\cref{prop:Gaussian_QRM_invariant,prop:KDE_QRM_invariant} are stated and proved in~\cref{app:causality:discovery:gaussian,app:causality:discovery:kde} respectively. In addition, for the special case of Gaussian estimators of $\bbT_f$, \cref{app:causality:discovery:gaussian} relates our $\alpha$ parameter to the $\beta$ parameter of VREx~\cite[Eq.~8]{krueger21rex}. We next specify the conditions under which learning such a minimal invariant-risk predictor is sufficient to recover the causal predictor.
\begin{theorem}[The causal predictor is the only minimal invariant-risk predictor]
    \label{thm:causal_predictor}
    Assume that: (i)~$Y$ is generated from a linear SEM, $Y = \beta^\intercal X + N$, with $X$ observed and coefficients $\beta \in \bbR^d$; (ii) $\calF$ is the class of linear predictors, indexed by $\hat\beta \in \bbR^d$; (iii) the loss $\ell$ is squared-error; (iv) the risk $\bbE[(Y - \beta^TX)^2]$ of the causal predictor $\beta$ is invariant
    across domains; and (v) \looseness-1 the system of equations
    \begin{align}
        \notag
        0 \geq
        & x^\intercal \text{\emph{Cov}}_{X \sim e_1}(X, X) x
            + 2 x^\intercal \text{\emph{Cov}}_{N,X \sim e_1} (X, N) \\
        \notag
        = & \cdots \\
        \label{eq:causal_recovery_equations_main}
        = & x^\intercal \text{\emph{Cov}}_{X \sim e_m}(X, X) x
            + 2 x^\intercal \text{\emph{Cov}}_{N,X \sim e_m} (X, N)
    \end{align}
    has the unique solution $x = 0$. If $\hat\beta$ is a minimal invariant-risk predictor, then $\hat\beta=\beta$.
\end{theorem}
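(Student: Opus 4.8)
The plan is to reduce the statement to a direct verification that the coefficient difference $x := \beta - \hat\beta$ satisfies the system in assumption~(v); its unique solution $x=0$ then yields $\hat\beta=\beta$. The starting point is to expand, for an arbitrary linear predictor $\hat\beta$, its per-domain excess risk over the causal predictor. Substituting the SEM $Y=\beta^\intercal X+N$ gives $Y-\hat\beta^\intercal X=x^\intercal X+N$, so under squared-error loss (assumption~iii) and using $\calR^e(\beta)=\mathbb{E}_{e}[N^2]$,
\[
\calR^e(\hat\beta)-\calR^e(\beta)=\mathbb{E}_{e}\big[(x^\intercal X+N)^2\big]-\mathbb{E}_{e}[N^2]=x^\intercal\,\mathrm{Cov}_{X\sim e}(X,X)\,x+2\,x^\intercal\,\mathrm{Cov}_{N,X\sim e}(X,N),
\]
which is exactly the expression appearing in each line of~\eqref{eq:causal_recovery_equations_main}. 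I would flag here that the quadratic and cross terms collapse to the covariances shown under the standard SEM convention that $N$ is zero-mean and $X$ is centered per domain; making this bookkeeping precise (or stating the implicit centering assumption) is the one routine point to nail down.

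Next I would invoke \emph{invariance}. By assumption~(iv), $\calR^e(\beta)$ is constant across the domains $e_1,\dots,e_m$. If $\hat\beta$ is an invariant-risk predictor, then $\calR^e(\hat\beta)$ is likewise constant in $e$, so the difference $\calR^{e_i}(\hat\beta)-\calR^{e_i}(\beta)$ is independent of $i$. By the displayed identity this is precisely the chain of equalities in~\eqref{eq:causal_recovery_equations_main}: the quantity $x^\intercal\mathrm{Cov}_{X\sim e_i}(X,X)x+2\,x^\intercal\mathrm{Cov}_{N,X\sim e_i}(X,N)$ takes a common value across all domains.

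To fix the sign I would use \emph{minimality}. Since $\beta$ is itself an invariant-risk predictor (again by~(iv)), the minimal invariant-risk predictor $\hat\beta$ has common risk no larger than that of $\beta$; as both risks are constant in $e$, this gives $\calR^e(\hat\beta)\le\calR^e(\beta)$, hence $\calR^e(\hat\beta)-\calR^e(\beta)\le0$, for every domain. Combined with the previous paragraph, $x$ satisfies the full system~\eqref{eq:causal_recovery_equations_main}: the common value is both equal across domains and nonpositive. Assumption~(v) then forces $x=0$, i.e. $\hat\beta=\beta$.

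The conceptually delicate step is the minimality comparison rather than the algebra: one must use that ``minimal invariant-risk predictor'' compares the single common risk value of $\hat\beta$ against that of $\beta$, so the inequality $\calR^e(\hat\beta)\le\calR^e(\beta)$ holds \emph{simultaneously} for all $e$ (not merely on average), which is what licenses reading off the ``$0\geq$'' sign in~\eqref{eq:causal_recovery_equations_main}. Once that is granted, the only genuine content is the identifiability hypothesis~(v), which is engineered precisely so that the equalities (from invariance) together with the inequality (from minimality) admit $x=0$ as their unique common solution; everything else is the one-line excess-risk expansion above.
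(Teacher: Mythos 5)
Your proposal is correct and follows essentially the same route as the paper's proof: expand the per-domain risk under the linear SEM, identify $\bbE[N^2]$ with the (invariant) risk of the causal predictor so that everything reduces to the excess risk $x^\intercal\mathbb{E}_{X\sim e}[XX^\intercal]x+2x^\intercal\mathbb{E}_{N,X\sim e}[NX]$, use invariance of $\hat\beta$'s risk for the equalities and minimality against the invariant-risk predictor $\beta$ for the ``$0\geq$'' sign, and conclude via assumption~(v). The centering point you flag is handled the same way in the paper, which states that the covariance form of~\eqref{eq:causal_recovery_equations_main} assumes standardized covariates without loss of generality and gives the uncentered second-moment form in the appendix.
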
%

\textbf{Assumptions (i--iii).} \looseness-1 The assumptions that $Y$ is drawn from a linear structural equation model~(SEM) and that the loss is squared-error, while restrictive, are needed for all comparable causal recovery results~\citep{peters2016causal, krueger21rex}. In fact, these assumptions are weaker than both \citet[Thm.~2]{peters2016causal} (assume a linear \emph{Gaussian} SEM for $X$ \emph{and} $Y$) and  \citet[Thm.~1]{krueger21rex} (assume a linear SEM for $X$ \emph{and} $Y$). 

\textbf{Assumption (iv).} The assumption that the risk of the causal predictor is invariant across domains, often called \emph{domain homoskedasticity}~\citep{krueger21rex}, is necessary for any method inferring causality from the \emph{invariance of risks} across domains.
For methods based on the \emph{invariance of functions}, namely the conditional mean $\E[Y|\PA(Y)]$~\citep{arjovsky2020invariant, yin2021optimization}, this assumption is not required. \cref{sec:additional_exps:linear_regr:risks_vs_functions} compares methods based on invariant risks and to those based on invariant functions.

\textbf{Assumption (v).} In contrast to both \citeauthor{peters2016causal} and \citeauthor{krueger21rex}, we do not require specific types of interventions on the covariates. Instead, we require that a more general condition be satisfied, namely that the system of $d$-variate quadratic equations in~\eqref{eq:causal_recovery_equations_main} has a unique solution. Intuitively, $\text{Cov}(X, X)$ captures how correlated the covariates are and ensures they are sufficiently uncorrelated to distinguish each of their influences on $Y$, while $\text{Cov}(X, N)$ captures how correlated descendant covariates are with $Y$ (via $N$). Together, these terms capture the idea that \emph{predicting $Y$ from the causal covariates must result in the minimal invariant-risk}:
the first inequality ensures the risk is \emph{minimal} and the subsequent $m - 1$ equalities that it is \emph{invariant}. While this generality comes at the cost of abstraction, \cref{app:causal_recovery} provides several concrete examples with different types of interventions to aid understanding and illustrate how this condition generalizes existing causal-recovery results based on invariant risks~\citep{peters2016causal, krueger21rex}. \looseness-1 \Cref{app:causal_recovery} also provides a proof of~\cref{thm:causal_predictor} and further discussion.

\section{Related work}\label{sec:related}
\textbf{Robust optimization in DG.}  Throughout this paper, we follow an established line of work (see e.g.,~\cite{arjovsky2020invariant,krueger21rex,ahuja2021invariance}) which formulates the DG problem through the lens of robust optimization~\cite{ben2009robust}.  To this end, various algorithms have been proposed for solving constrained~\cite{robey2021modelbased} and distributionally robust~\cite{sagawa2019distributionally} variants of the worst-case problem in~\eqref{eq:domain-gen}.  
Indeed, this robust formulation has a firm foundation in the broader machine learning literature, with notable works in adversarial robustness~\cite{goodfellow2014explaining,madry2017towards,zhang2019theoretically,robey2021adversarial,zhu2021adversarially} and fair learning~\cite{martinez2021blind,diana2021minimax} employing similar formulations. Unlike these past works, we consider a robust but non-adversarial formulation for DG, where predictors are trained to generalize with high probability rather than in the worst case. Moreover, the majority of this literature---both within and outside of DG---relies on specific structural assumptions (e.g.\ covariate shift) on the types of possible interventions or perturbations.
In contrast, we make the weaker and more flexible assumption of i.i.d.-sampled domains, which ultimately makes use of the observed domain-data to determine the types of shifts that are \emph{probable}. We further discuss this important difference in~\cref{sec:discussion}. 

\textbf{Other approaches to DG.}  Outside of robust optimization, many algorithms have been proposed for the DG setting which draw on insights from a diverse array of fields, including approaches based on tools from meta-learning~\cite{li2018learning,balaji2018metareg,dou2019domain,shu2021open,zhang2020adaptive}, kernel methods~\cite{dubey2021adaptive,deshmukh2019generalization}, and information theory~\cite{ahuja2021invariance}.
Also prominent are works that design regularizers to generalize OOD~\cite{zhao2020domain,li2020domain,kim2021selfreg} and works that seek domain-invariant representations~\cite{ganin2016domain,li2018deep,huang2020self}.
Many of these works employ hyperparameters which are difficult to
interpret, which has no doubt contributed to the well-established model-selection problem in DG~\cite{gulrajani2020search}.  In contrast, in our framework, $\alpha$ can be easily interpreted in terms of quantiles of the risk distribution. 
In addition, many of these works do not explicitly relate the training and test domains, meaning they lack theoretical results in the non-linear setting (e.g.~\cite{su2019one, arjovsky2020invariant, zhang2020adaptive, krueger21rex}). For those which do, they bound either average error over test domains~\citep{blanchard2011generalizing,blanchard2021domain,garg2021learn} or worst-case error under specific shift types (e.g.\ covariate~\citep{robey2021modelbased}).
\looseness-1 As argued above, the former lacks robustness while the latter can be both overly-conservative and difficult to justify in practice, where shift types are often unknown.

\textbf{High-probability generalization.}  As noted in \S~\ref{sec:qrm}, relaxing worst-case problems in favor of probabilistic ones has a long history in
control theory~\cite{campi2008exact,ramponi2018consistency,tempo2013randomized,lindemann2021stl,lindemann2022temporal}, operations research~\cite{shapiro2021lectures}, and smoothed analysis~\cite{spielman2004smoothed}. Recently, this paradigm 
has been applied to several
areas of machine learning, including perturbation-based robustness~\cite{robey2022probabilistically,rice2021robustness}, fairness~\cite{li2020tilted}, active learning~\cite{curi2020adaptive}, and reinforcement learning~\cite{paternain2022safe,chow2017risk}. However, 
it has not yet been applied to domain generalization.

\textbf{Quantile minimization.} In financial optimization, the quantile and superquantile functions~\citep{duffie1997overview, rockafellar2000CVaR,krokhmal2002portfolio} are central to the literature surrounding portfolio risk management, with numerous applications spanning banking regulations and insurance policies~\cite{wozabal2012value,jorion1997value}.
In statistical learning theory, several recent papers have derived uniform convergence guarantees in terms of alternative risk functionals besides expected risk~\citep{lee2020learning,khim2020uniform,duchi2021learning,curi2020adaptive}. These results focus on functionals that can be written in terms of expectations over the loss distribution (e.g., the superquantile). In contrast, our uniform convergence guarantee (Theorem~\ref{thm:generalization}) shows uniform convergence of the quantile function, which \emph{cannot} be written as such an expectation; this necessitates stronger conditions to obtain uniform convergence, which ultimately suggest regularizing the estimated risk distribution (e.g.\ by kernel smoothing).

\textbf{Invariant prediction and causality.}
Early work studied the problem of learning from multiple cause-effect datasets that share a functional mechanism but differ in noise distributions \citep{schoelkopf2012causal}. More generally, given (data from) multiple distributions, one can try to identify components which are stable, robust, or \emph{invariant}, and find means to transfer them across problems~\cite{zhang2013domain,Bareinboim2014,zhang2015multi,gong2016domain,HuaZhaZhaSanGlySch17}. 
As discussed in~\cref{sec:backgr}, recent works have leveraged different forms of invariance across domains to discover causal relationships which, under the invariant mechanism assumption~\cite{peters2017elements}, generalize to new domains~\cite{peters2016causal, rojas2018invariant, arjovsky2020invariant, krueger21rex, heinze2018invariant, pfister2019invariant, gamella2020active}.
In particular, VREx~\citep{krueger21rex} leveraged \textit{invariant risks} (like EQRM) while IRM~\citep{arjovsky2020invariant} leveraged \textit{invariant functions} or coefficients---see \cref{sec:additional_exps:linear_regr:risks_vs_functions} for a detailed comparison of these approaches.

\section{Experiments}%
\label{sec:exps}%
We now evaluate our EQRM algorithm on synthetic datasets~(\cref{sec:exps:synthetic}), real-world datasets from WILDS~(\cref{sec:exps:real}), and few-domain datasets from DomainBed~(\cref{sec:exps:domainbed}). \Cref{sec:additional_exps} reports further results, while \cref{sec:impl_details} reports further experimental details.

\subsection{Synthetic datasets}%
\label{sec:exps:synthetic}%
\begin{figure}[tb]\vspace{-4mm}
    \centering
    \includegraphics[width=\linewidth]{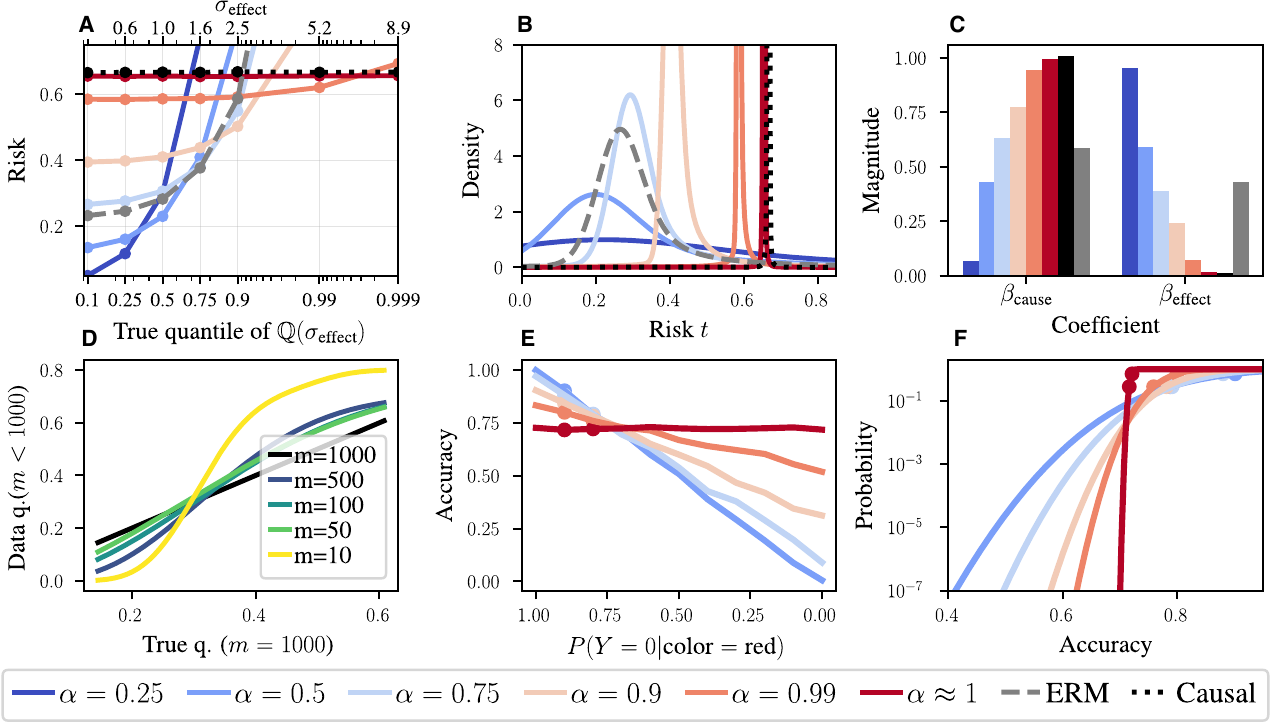}
    \caption{\small
    \textbf{EQRM on a toy linear regression dataset (A--D) and on ColoredMNIST (E--F).} 
    \textbf{A:} Test risk at different quantiles or degrees of ``OODness''. For each quantile (x-axis), the corresponding $\alpha$ has the lowest risk (y-axis).
    \textbf{B:} Estimated risk distributions (corresponding CDFs 
    in~\cref{sec:additional_exps:linear_reg:cdf-curves}).
    \textbf{C:} Regression coefficients approach those of the causal predictor ($\beta_{\text{cause}}\! =\! 1, \beta_{\text{effect}}\! =\! 0$) as $\cutoff\! \to\! 1$. \textbf{D:} Q-Q plot comparing the ``true'' risk quantiles (estimated with $m\! =\! 1000$) against estimated ones ($m\! <\! 1000$), for $\alpha\! =\! 0.9$. \textbf{E:} Performance of different $\alpha$'s over increasingly OOD test domains, with dots showing training-domain accuracies. \textbf{F:} KDE-estimated accuracy-CDFs depicting accuracy-robustness curves. \looseness-1 Larger $\alpha$'s make lower accuracies less likely.}
    \label{fig:exps:linear-regr}\vspace{-1.5mm}
\end{figure}
\looseness-1 \textbf{Linear regression.} We first consider a linear regression dataset based on
the following linear
SCM: 
\begin{align*}\vspace{-2mm}
\textstyle
    X_1     \gets              N_1, \qquad\qquad\qquad
    Y       \gets X_1        + N_Y, \qquad\qquad\qquad
    X_2     \gets Y        + N_2,
\end{align*}\vspace{-1mm}%
with $N_j \sim \calN(0, \sigma^2_j)$. Here we have two features: one cause $X_1\! =\! X_{\text{cause}}$ and\vspace{0.25mm}
one effect $X_2\! =\! X_{\text{effect}}$ of $Y$. 
By fixing $\sigma_{1}^2\! =\! 1$ and $\sigma_{Y}^2\! =\! 2$ across domains but sampling $\sigma_{2}\! \sim\! \text{LogNormal}(0, 0.5)$, \vspace{0.5mm}
we create a dataset in which $X_2$ is more predictive of $Y$ than $X_1$ but less stable. 
Importantly, as we know the true distribution over domains $\bbQ(e)\! =\! \text{LogNormal}(\sigma_2^e ;0, 0.5)$, we know the true risk quantiles. \Cref{fig:exps:linear-regr} depicts results for different $\alpha$'s with $m\! =\! 1000$ domains and $n\! =\! 200000$ samples in each, using the mean-squared-error (MSE) loss. Here we see that: \textbf{A:} for each true quantile (x-axis), the corresponding $\alpha$ has the lowest risk (y-axis), confirming that the empirical $\alpha$-quantile risk is a good estimate of the population $\alpha$-quantile risk; \textbf{B:} As $\cutoff\! \to\! 1$, the estimated risk distribution of $f_{\alpha}$ approaches an invariant (or Dirac delta) distribution centered on the risk of the causal predictor; \textbf{C:} the regression coefficients approach those of the causal predictor as $\cutoff\! \to\! 1$, trading predictive performance for robustness; and \textbf{D:} reducing the number of domains $m$ reduces the accuracy of the estimated $\alpha$-quantile risks. In \cref{sec:additional_exps:linear_regr}, we additionally: (i) depict the risk CDFs corresponding to plot \textbf{B} above, and discuss how they depict the predictors' risk-robustness curves~(\ref{sec:additional_exps:linear_reg:cdf-curves}); and (ii) discuss the solutions of EQRM on datasets in which $\sigma_1^2$, $\sigma_2^2$ and/or $\sigma_Y^2$ change over domains, compared to existing invariance-seeking algorithms like IRM~\citep{arjovsky2020invariant} and VREx~\citep{krueger21rex}~(\ref{sec:additional_exps:linear_regr:risks_vs_functions}).

\textbf{ColoredMNIST.} We next consider the \texttt{ColoredMNIST} or \texttt{CMNIST} dataset~\citep{arjovsky2020invariant}. Here, the \texttt{MNIST} dataset is used to construct a binary classification task (0--4 or 5--9) in which digit color (red or green) is a highly-informative but spurious feature. In particular, the two training domains are constructed such that red digits have an 80\% and 90\% chance of belonging to class 0, while the single test domain is constructed such that they only have a 10\% chance.
The goal is to learn an invariant predictor which uses only digit shape---a stable feature having a 75\% chance of correctly determining the class in all 3 domains. We compare with IRM~\citep{arjovsky2020invariant}, GroupDRO~\citep{sagawa2019distributionally}, SD~\citep{pezeshki2021gradient}, IGA~\citep{koyama2020out} and VREx~\citep{krueger21rex} using: (i) random initialization (Xavier method~\cite{glorot2010understanding}); and (ii) random initialization followed by several iterations of ERM. The ERM initialization or pretraining directly corresponds to the delicate penalty ``annealing'' or warm-up periods used by most penalty-based methods~\citep{arjovsky2020invariant, krueger21rex, pezeshki2021gradient, koyama2020out}. For all methods, we use a 2-hidden-layer MLP with 390 hidden units, the Adam optimizer, a learning rate of $0.0001$, and dropout with $p\! =\! 0.2$. We sweep over five penalty weights
for the baselines and five $\alpha$'s 
for EQRM. See \cref{sec:impl_details:cmnist} for more experimental details. \cref{tab:cmnist-results} shows that: (i) all methods struggle without ERM pretraining, explaining the need for penalty-annealing strategies in previous works and corroborating the results of \cite[Table 1]{zhang2022rich}; (ii) with ERM pretraining, EQRM matches or outperforms baseline methods, even approaching oracle performance (that of ERM trained on grayscale digits). \looseness-1 These results suggest ERM pretraining as an effective strategy for DG methods.

In addition, \Cref{fig:exps:linear-regr} depicts the behavior of EQRM with different $\alpha$s. Here we see that: \textbf{E:} increasing $\alpha$ leads to more consistent performance across domains, eventually forcing the model to ignore color and focus on shape for invariant-risk prediction; and \textbf{F:} a predictor's (estimated) accuracy-CDF depicts its accuracy-robustness curve, just as its risk-CDF depicts its risk-robustness curve.\looseness-1\ Note that $\alpha\! =\! 0.5$ gives the best worst-case (i.e.\ worst-domain) risk over the two training domains---the preferred solution of DRO~\citep{sagawa2019distributionally}---while $\alpha\! \to\! 1$ sacrifices risk for increased invariance or robustness.

\subsection{Real-world datasets}\label{sec:exps:real}\vspace{-2mm}
We now evaluate our methods on the real-world or \textit{in-the-wild} distribution shifts of WILDS~\citep{wilds2021}.  We focus our evaluation on \texttt{iWildCam}~\cite{beery2021iwildcam} and \texttt{OGB-MolPCBA}~\cite{hu2020open,wu2018moleculenet}---two large-scale classification datasets which have numerous test domains and thus facilitate a comparison of the test-domain risk distributions and their quantiles.
Additional comparisons
(e.g.\ using average accuracy) can be found in Appendix~\ref{sec:additional_exps:wilds}.  Our results demonstrate that, across two distinct data types (images and molecular graphs), EQRM offers superior tail or quantile performance.

\begin{figure}\vspace{-2mm}
\centering
\begin{minipage}{0.5\textwidth}
\centering
\captionsetup{type=table} %
\caption{\small \texttt{CMNIST} test accuracy.}\label{tab:cmnist-results}\vspace{-2mm}
        \resizebox{0.55\linewidth}{!}{%
        \begin{tabular}{@{}lcc@{}} \toprule
             \multirow{2}{*}{\textbf{Algorithm}} & \multicolumn{2}{c}{\textbf{Initialization}} \\ 
             \cmidrule(lr){2-3} & Rand. & ERM \\ \midrule
             ERM & $27.9 \pm 1.5$ & $27.9 \pm 1.5$ \\
             IRM & $\bm{52.5 \pm 2.4}$ & $69.7 \pm 0.9$ \\
             GrpDRO & $27.3 \pm 0.9$ & $29.0 \pm 1.1$ \\
             SD & $49.4 \pm 1.5$ & $70.3 \pm 0.6$ \\
             IGA & $50.7 \pm 1.4$ & $57.7 \pm 3.3$ \\
             V-REx & $\bm{55.2\pm 4.0}$ & $\bm{71.6 \pm 0.5}$ \\
             EQRM & $\bm{53.4 \pm 1.7}$ & $\bm{71.4 \pm 0.4}$ \\ \midrule
             Oracle & \multicolumn{2}{c}{$72.1 \pm 0.7$} \\ \bottomrule
        \end{tabular}}
\end{minipage}%
\begin{minipage}{0.5\textwidth}
\centering
\captionsetup{type=figure} %
\includegraphics[width=0.925\linewidth]{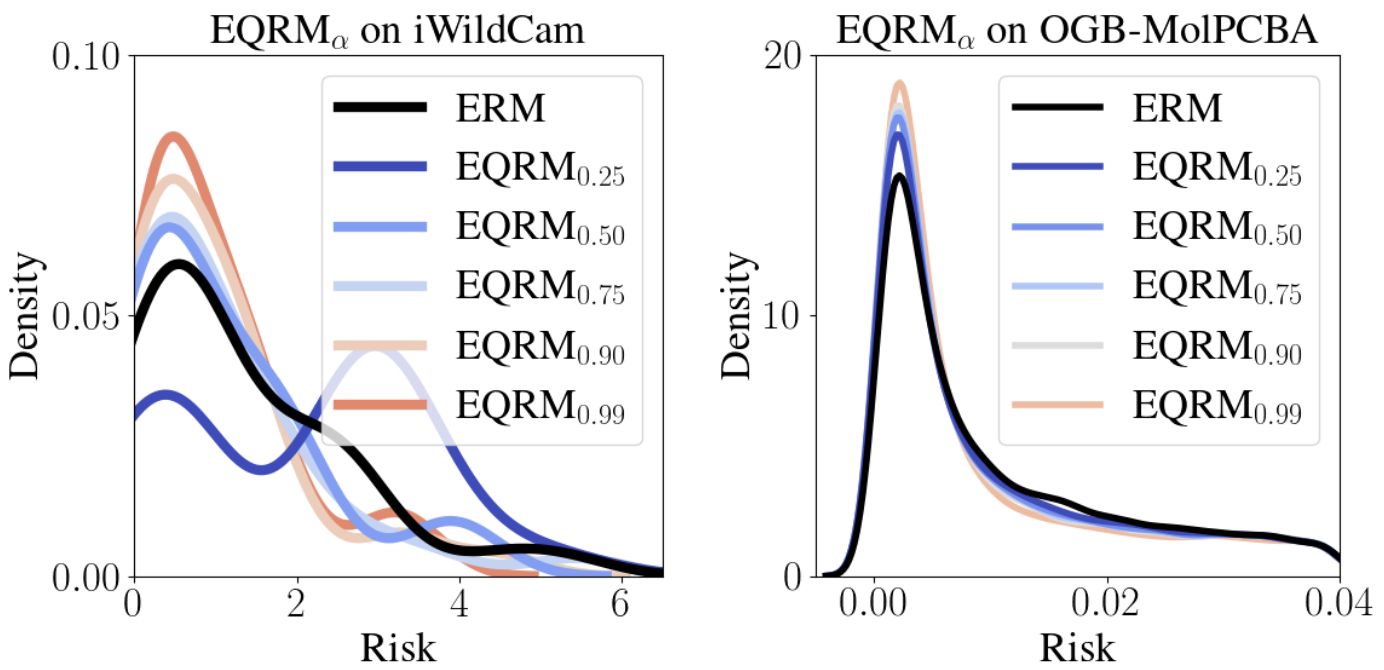}\vspace{-1.5mm}
    \caption{\small Test-domain risk distributions.
    }
    \label{fig:real-world-pdfs}
\end{minipage}%
\end{figure}

\begin{table}[tb]\vspace{-4mm}
    \begin{minipage}{0.475\textwidth}
    \centering
    \caption{\small EQRM test risks on \texttt{iWildCam}.}\label{tab:quantiles-iwildcam}
    \resizebox{\columnwidth}{!}{
    \begin{tabular}{@{}ccccccccc@{}} \toprule
         \multirow{2}{*}{Alg.} & \multirow{2}{*}{\makecell{Mean \\ risk}} & \multicolumn{7}{c}{Quantile risk} \\ \cmidrule(lr){3-9}
         & & 0.0 & 0.25 & 0.50 & 0.75 & 0.90 & 0.99 & 1.0 \\ \midrule
         ERM & 1.31 & 0.015 & 0.42 & 0.76 & 2.25 & 2.73 & 4.99 & 5.25 \\
         IRM & 1.53 & 0.098 & 0.52 & 1.24 & 1.86 & 2.36 & 6.95 & 7.46 \\ 
         GroupDRO & 1.73 & 0.091 & 0.68 & 1.65 & 2.18 & 3.36 & 5.29 & 5.54 \\
         CORAL & 1.27 & 0.024 & 0.45 & 0.73 & 2.12 & 2.66 & 4.50 & 4.98\\ \midrule
         EQRM$_{0.25}$ & 2.03 & 0.024 & 0.46 & 2.70 & 3.01 & 3.48 & 5.03 & 5.26 \\
         EQRM$_{0.50}$ & 1.11 & \textbf{0.004} & 0.24 & 0.68 & 1.71 & 2.15 & 4.04 & 4.11 \\
         EQRM$_{0.75}$ & 1.05 & 0.009 & \textbf{0.21} & 0.68 & 1.50 & 2.35 & 4.88 & 5.45 \\
         EQRM$_{0.90}$ & \textbf{0.98} & 0.047 & 0.28 & \textbf{0.63} & \textbf{1.26} & \textbf{1.81} & 4.11 & 4.48 \\
         EQRM$_{0.99}$ & 0.99 & 0.12 & 0.35 & 0.64 & 1.30 & 2.00 & \textbf{3.44} & \textbf{3.55} \\ \bottomrule
    \end{tabular}}
    
    \end{minipage}\hfill
    \begin{minipage}{0.505\textwidth}
    \centering
    \caption{\small EQRM test risks on \texttt{OGB-MolPCBA}.}\label{tab:quantiles-ogb}
    \resizebox{\columnwidth}{!}{
    \begin{tabular}{@{}ccccccccc@{}} \toprule
         \multirow{2}{*}{Alg.} & \multirow{2}{*}{\makecell{Mean \\ risk}} & \multicolumn{7}{c}{Quantile risk} \\ \cmidrule(lr){3-9}
         & & 0.0 & 0.25 & 0.50 & 0.75 & 0.90 & 0.99 & 1.0 \\ \midrule
         ERM & \textbf{0.051} & 0.0 & 0.004 & 0.017 & 0.060 & 0.13 & 0.49 & 16.04 \\
         IRM & 0.073 & 0.098 & 0.52 & 1.24 & 1.86 & 2.36 & 6.95 & 7.46 \\ 
         GroupDRO & 0.21 & 0.091 & 0.68 & 1.65 & 2.18 & 3.36 & 5.29 & \textbf{5.54} \\ 
         CORAL & 0.055 & 0.0 & 0.12 & 0.32 & 1.23 & 2.01 & 5.76 & 7.44 \\
         \midrule
         EQRM$_{0.25}$ & 0.054 & 0.0 & 0.003 & 0.016 & 0.059 & 0.13 & 0.48 & 15.46 \\ 
         EQRM$_{0.50}$ & 0.052 & 0.0 & 0.003 & 0.015 & 0.059 & 0.13 & 0.48 & 11.33 \\
         EQRM$_{0.75}$ & 0.052 & 0.0 & 0.003 & 0.015 & 0.059 & 0.13 & 0.47 & 12.15 \\
         EQRM$_{0.90}$ & 0.052 & 0.0 & 0.003 & 0.015 & 0.059 & 0.12 & 0.47 & 10.81 \\
         EQRM$_{0.99}$ & 0.053 & 0.0 & 0.003 & \textbf{0.014} & \textbf{0.055} & \textbf{0.11} & \textbf{0.46} & 7.16 \\ \bottomrule
    \end{tabular}}
    \end{minipage}
\end{table}%

\textbf{iWildCam.} We first consider the \texttt{iWildCam} image-classification dataset, which has 243 training domains and 48 test domains. Here, the label $Y$ is one of 182 different animal species and the domain~$e$ is the camera trap which captured the image.
In Table~\ref{tab:quantiles-iwildcam}, we observe that EQRM$_\alpha$ does indeed tend to optimize the $\alpha$-risk quantile, 
with larger $\alpha$s during training resulting in lower test-domain risks at the corresponding quantiles.
In the left pane of \Cref{fig:real-world-pdfs}, we plot the (KDE-smoothed) test-domain risk distribution for ERM and EQRM.  Here we see a clear trend: as $\alpha$ increases, the tails of the risk distribution tend to drop below ERM, which corroborates the superior quantile performance reported in Table~\ref{tab:quantiles-iwildcam}.
Note that, in Table~\ref{tab:quantiles-iwildcam}, EQRM tends to record lower \textit{average} risks than ERM.  This
has several plausible explanations. First, the number of testing domains (48) is relatively small, which could result in a biased sample with respect to the training domains.  Second, the test domains may not represent i.i.d.\ draws from $\bbQ$, as WILDS~\citep{wilds2021} test domains tend to be more challenging.

\textbf{OGB-MolPCBA.} We next consider the \texttt{OGB-MolPCBA} (or \texttt{OGB}) dataset, which is a molecular graph-classification benchmark containing 44,930 training domains and 43,793 test domains with an average of $3.6$ samples per domain.  
Table~\ref{tab:quantiles-ogb} shows that ERM achieves the lowest \emph{average} test risk on \texttt{OGB},
in contrast to the \texttt{iWildCam} results,
while EQRM$_\alpha$ still achieves stronger quantile performance.
Of particular note is the fact that our methods significantly outperform ERM with respect to worst-case performance (columns/quantiles labeled 1.0); when QRM$_\alpha$ is run with large values of $\alpha$, we reduce the worst-case risk by more than a factor of two.  \looseness-1 In~\Cref{fig:real-world-pdfs}, we again see that the risk distributions of EQRM$_\alpha$ have lighter tails than that of ERM.

\textbf{A new evaluation protocol for DG.}
The analysis provided in Tables~\ref{tab:quantiles-iwildcam}-\ref{tab:quantiles-ogb} and \Cref{fig:real-world-pdfs} diverges from the standard evaluation protocol in DG~\cite{gulrajani2020search,wilds2021}. 
Rather than evaluating an algorithm's performance \emph{on average} across test domains, we seek to understand \emph{the distribution of its performance}---particularly in the tails by means of the quantile function.  This new evaluation protocol lays bare the importance of multiple test domains in DG benchmarks, allowing predictors' risk distributions to be analyzed and compared.  Indeed, as shown in Tables~\ref{tab:quantiles-iwildcam}-\ref{tab:quantiles-ogb},
solely reporting a predictor's average or worst risk over test domains can be misleading when assessing its ability to generalize OOD, indicating that the performance of DG algorithms was likely never ``lost'', as reported in~\cite{gulrajani2020search}, but rather invisible through the lens of average performance. This underscores the necessity of incorporating tail- or quantile-risk measures into a more holistic evaluation protocol for DG, ultimately providing a more nuanced and complete picture. In practice, which measure is preferred will depend on the application. For example, medical applications could have a human-specified robustness-level or quantile-of-interest.

\subsection{DomainBed datasets}\label{sec:exps:domainbed}\vspace{1mm}%
\begin{table}[tb]\vspace{-4mm}
    \centering
    \caption{\small DomainBed results. Model selection: training-domain validation set.}
    \label{tab:domainbed-results}
    \adjustbox{max width=0.8\linewidth}{%
    \begin{tabular}{@{}lcccccc@{}}
    \toprule
    \textbf{Algorithm}        & \textbf{VLCS}             & \textbf{PACS}             & \textbf{OfficeHome}       & \textbf{TerraIncognita}   & \textbf{DomainNet}        & \textbf{Avg}              \\
    \midrule
    ERM                      & 77.5 $\pm$ 0.4            & 85.5 $\pm$ 0.2            & 66.5 $\pm$ 0.3            & 46.1 $\pm$ 1.8            & 40.9 $\pm$ 0.1            & 63.3                      \\
    IRM                       & 78.5 $\pm$ 0.5            & 83.5 $\pm$ 0.8            & 64.3 $\pm$ 2.2            & 47.6 $\pm$ 0.8            & 33.9 $\pm$ 2.8            & 61.6                      \\
    GroupDRO                 & 76.7 $\pm$ 0.6            & 84.4 $\pm$ 0.8            & 66.0 $\pm$ 0.7            & 43.2 $\pm$ 1.1            & 33.3 $\pm$ 0.2            & 60.9                      \\
    Mixup                    & 77.4 $\pm$ 0.6            & 84.6 $\pm$ 0.6            & 68.1 $\pm$ 0.3            & 47.9 $\pm$ 0.8            & 39.2 $\pm$ 0.1            & 63.4                      \\
    MLDG                     & 77.2 $\pm$ 0.4            & 84.9 $\pm$ 1.0            & 66.8 $\pm$ 0.6            & 47.7 $\pm$ 0.9            & 41.2 $\pm$ 0.1            & 63.6                      \\
    CORAL                     & 78.8 $\pm$ 0.6            & 86.2 $\pm$ 0.3            & 68.7 $\pm$ 0.3            & 47.6 $\pm$ 1.0            & 41.5 $\pm$ 0.1            & \textbf{64.6}                      \\
    ARM                       & 77.6 $\pm$ 0.3            & 85.1 $\pm$ 0.4            & 64.8 $\pm$ 0.3            & 45.5 $\pm$ 0.3            & 35.5 $\pm$ 0.2            & 61.7                      \\
    VREx                      & 78.3 $\pm$ 0.2            & 84.9 $\pm$ 0.6            & 66.4 $\pm$ 0.6            & 46.4 $\pm$ 0.6            & 33.6 $\pm$ 2.9            & 61.9                      \\ \midrule
    EQRM                   & 77.8 $\pm$ 0.6            & 86.5 $\pm$ 0.2            & 67.5 $\pm$ 0.1            & 47.8 $\pm$ 0.6            & 41.0 $\pm$ 0.3            & 64.1                      \\
    \bottomrule
    \end{tabular}}
\end{table}\vspace{-2mm}

Finally, we consider the benchmark datasets of DomainBed~\cite{gulrajani2020search}, in particular \texttt{VLCS}~\citep{fang2013}, \texttt{PACS}~\citep{li2017deeper}, \texttt{OfficeHome}~\citep{venkateswara2017}, \texttt{TerraIncognita}~\citep{beery2018recognition} and \texttt{DomainNet}~\citep{peng2019moment}. As each of these datasets contain just 4 or 6 domains, it is not possible to meaningfully compare tail or quantile performance. Nonetheless, in line with much recent work, and to compare EQRM to a range of standard baselines on few-domain datasets, Table~\ref{tab:domainbed-results} reports DomainBed results in terms of the average performance across each choice of test domain. While EQRM outperforms most baselines, including ERM, we reiterate that comparing algorithms solely in terms of average performance can be misleading (see final paragraph of~\cref{sec:exps:real}). Full implementation details are given in \cref{sec:impl_details:domainbed}, with further results in \cref{sec:additional_exps:domainbed} (additional baselines, per-dataset results, and test-domain model selection).

\section{Discussion}\label{sec:discussion}\vspace{-1mm}
\textbf{Interpretable model selection.} $\alpha$ approximates the probability with which our predictor will generalize with risk below the associated $\alpha$-quantile value. Thus, $\alpha$ represents an interpretable parameterization of the risk-robustness trade-off. Such interpretability is critical for model selection in DG, and for practitioners with application-specific requirements on performance and/or robustness.

\textbf{The assumption of i.i.d.\ domains.} For $\alpha$ to approximate the probability of generalizing, training and test domains must be i.i.d.-sampled. While this is rarely true in practice---e.g.\ hospitals have shared funders, service providers, etc.---we can better satisfy this assumption by subscribing to a new data collection process in which we collect training-domain data which is representative of how the underlying system tends to change. For example: (i) randomly select 100 US hospitals; (ii) gather and label data from these hospitals; (iii) train our system with the desired $\alpha$; (iv) deploy our system to all US hospitals, where it will be successful with probability $\approx \alpha$. While this process may seem expensive, time-consuming and vulnerable (e.g.\ to new hospitals), it offers a promising path to machine learning systems which \textit{generalize with high probability}. Moreover, it is worth noting the alternative: prior works achieve generalization by assuming that only particular types of shifts can occur, e.g.\ covariate shifts~\citep{quinonero2008dataset,Storkey09,robey2021modelbased}, label shifts~\citep{Storkey09, lipton2018detecting}, concept shifts~\citep{moreno2012unifying}, measurement shifts~\citep{eastwood2021source}, mean shifts~\citep{rothenhausler2021anchor}, shifts which leave the mechanism of $Y$ invariant~\cite{schoelkopf2012causal,peters2016causal,arjovsky2020invariant,krueger21rex}, etc. In real-world settings, where the underlying shift mechanisms are often unknown, such assumptions are both difficult to justify and impossible to test. Future work could look to relax the i.i.d.-domains assumption by leveraging knowledge of domain dependencies (e.g.\ time).

\textbf{The wider value of risk distributions.} As demonstrated in \cref{sec:exps}, a predictor's risk distribution has value beyond quantile-minimization---it estimates the probability associated with each level of risk. 
Thus, regardless of the algorithm used, risk distributions can be used to analyze trained predictors. 

\section{Conclusion}\vspace{-1mm}
We have presented Quantile Risk Minimization for achieving \textit{Probable} Domain Generalization, i.e., learning predictors that perform well \emph{with high probability} rather than \emph{on-average} or \emph{in the worst case}. After explicitly relating training and test domains as draws from the same underlying meta-distribution, we proposed to learn predictors with minimal $\alpha$-quantile risk. We then introduced the EQRM algorithm, for which we proved a generalization bound and recovery of the causal predictor
as $\alpha\! \to\! 1$.
\looseness-1 Finally, in our experiments, we introduced a more holistic quantile-focused evaluation protocol for DG, and demonstrated that EQRM outperforms state-of-the-art baselines on several DG benchmarks.

\acksection\vspace{-2.5mm}
We thank Chris Williams, Ian Mason and Krikamol Muandet for providing feedback on an earlier draft, as well as Lars Lorch, David Krueger, Francesco Locatello and members of the MPI T\"ubingen causality group for helpful discussions and comments. We also thank Minsu Kim for catching an error in an earlier proof of Theorem~\ref{thm:generalization}.
This work was supported by the German Federal Ministry of Education and Research (BMBF): Tübingen AI Center, FKZ: 01IS18039A, 01IS18039B; and by the Machine Learning Cluster of Excellence, EXC number 2064/1 – Project number 390727645.

\bibliography{bib}
\bibliographystyle{unsrtnat}

\newpage
\appendix

\addcontentsline{toc}{section}{Appendices}%
\part{Appendices} %
\changelinkcolor{black}{}
\parttoc%
\newpage
\changelinkcolor{red}{}

\section{Causality}
\label{app:causality}

\subsection{Definitions and example}
\label{app:causality:defs}
As in previous causal works on DG~\citep{arjovsky2020invariant, krueger21rex, peters2016causal, christiansen2021causal, rojas2018invariant}, our causality results assume
all domains share the same underlying \textit{structural causal model}~(SCM)~\cite{pearl2009causality}, with different domains corresponding to different interventions. \looseness-1 For example, the different camera-trap deployments depicted in~\cref{fig:fig1:train-test} may induce changes in (or interventions on) equipment, lighting, and animal-species prevalence rates. 
\begin{definition}\upshape
An \emph{SCM}\footnote{A Non-parametric Structural Equation Model with Independent Errors (NP-SEM-IE) to be precise.
} $\calM =(\calS, \bbP_{N})$ consists of a collection of $d$ \emph{structural assignments}
\begin{equation}
\label{eq:scm-def}
    \calS=\{ X_j \gets g_j(\PA(X_j), N_j)\}_{j = 1}^d,
\end{equation}
where $\PA(X_j) \subseteq \{X_1, \dots, X_d\} \setminus \{X_j\}$ are the \emph{parents} or \emph{direct causes} of $X_j$, and $\bbP_{N}=\prod_{j=1}^d \bbP_{N_j}$, a joint distribution over the (jointly) independent noise variables $N_1, \dots, N_d$. An SCM $\calM$ induces a (``causal'') graph $\calG$ which is obtained by creating a node for each $X_j$ and then drawing a directed edge from each parent in $\PA(X_j)$ to $X_j$. We assume this graph to be acyclic.
\end{definition}

We can draw samples from the \emph{observational distribution} $\bbP_{\calM}(X)$ by first sampling a noise vector $n
\sim \bbP_{N}$, and then using the structural assignments to generate a data point $x
\sim \bbP_{\calM}(X)$, recursively computing the value of every node $X_j$ whose parents' values are known. We can also manipulate or \emph{intervene} upon the structural assignments of $\calM$ to obtain a related SCM $\calM^e$.

\begin{definition}\upshape
\looseness-1 An \emph{intervention} $e$ is a modification to one or more of the structural assignments of $\calM$, resulting in a new SCM $\calM^e=(\calS^e, \bbP^e_N)$ and (potentially) new graph $\calG^e$, with structural assignments
\begin{equation}
\label{eq:interv-def}
    \calS^e=\{ X^e_j \gets g^e_j(\PA^e(X^e_j), N^e_j)\}_{j = 1}^d.
\end{equation}%
\end{definition}%
We can draw samples from the \textit{intervention distribution} $\bbP_{\calM^e}(X^e)$ in a similar manner to before, now using the modified structural assignments. We can connect these ideas to DG by noting that each intervention $e$ creates a new domain or \emph{environment} $e$ with interventional distribution $\bbP(X^e, Y^e)$.

\begin{example}\label{ex:example}
Consider the following linear
SCM, with $N_j \sim \calN(0, \sigma^2_j)$:
\begin{align*}
\textstyle
    X_1     \gets              N_1, \qquad\qquad\qquad
    Y       \gets X_1        + N_Y, \qquad\qquad\qquad
    X_2     \gets Y        + N_2.
\end{align*}%
\end{example}%
Here, interventions could
replace the structural assignment of $X_1$ with $X^e_1 \gets 10$ and
change the noise variance of $X_2$, resulting in a set of training environments $\Etr = \{\text{fix}\ X_1\ \text{to}\ 10,\ \text{replace}\ \sigma_2\ \text{with}\ 10\}$.

\subsection{EQRM recovers the causal predictor}
\label{app:causality:discovery}
\paragraph{Overview.} We now prove that EQRM recovers the causal predictor in two stages. First, we prove the formal versions of \cref{prop:equalize_main}, i.e.\ that EQRM learns a minimal invariant-risk predictor as $\alpha \to 1$ when using the following estimators of $\bbT_{f}$: (i) a Gaussian estimator (\cref{prop:Gaussian_QRM_invariant} of \cref{app:causality:discovery:gaussian}); and (ii) kernel-density estimators with certain bandwidth-selection methods (\cref{prop:KDE_QRM_invariant} of \cref{app:causality:discovery:kde}). Second, we prove \cref{thm:causal_predictor}, i.e.\ that learning a minimal invariant-risk predictor is sufficient to recover the causal predictor under weaker assumptions than those of~\citet[Thm~2]{peters2016causal} and~\citet[Thm~1]{krueger21rex} (\cref{app:causal_recovery}). Throughout this section, we consider the ``population'' setting within each domain (i.e., $n \to \infty$); in general, with only finitely-many observations from each domain, only approximate versions of these results are possible.

\paragraph{Notation.} Given $m$ training risks $\{\calR^{e_1}(f), \dots, \calR^{e_m}(f) \}$ corresponding to the risks of a fixed predictor $f$ on $m$ training domains, let
\[\hat{\mu}_f=\frac{1}{m} \sum_{i=1}^m \calR^{e_i}(f)\]
denote the sample mean and
\[\hat{\sigma}^2_f=\frac{1}{m-1} \sum_{i=1}^m (\calR^{e_i}(f) - \hat{\mu}_f)^2\]
the sample variance of the risks of $f$.

\subsubsection{Gaussian estimator}
\label{app:causality:discovery:gaussian}
When using a Gaussian estimator for $\widehat{\bbT}_f$, we can rewrite the EQRM objective of~\eqref{eq:qrm2}
in terms of the standard-Normal inverse CDF $\Phi^{-1}$ as
\begin{align}\label{eq:qrm-gaussian}
    \hat f_\alpha := \argmin_{f\in\calF}\ \hat{\mu}_f + \Phi^{-1}(\cutoff) \cdot \hat{\sigma}_f.
\end{align}
Informally, we see that $\alpha\! \to\! 1\! \implies\! \Phi^{-1}(\alpha)\! \to\! \infty\! \implies\! \hat{\sigma}_f\! \to\! 0$. More formally, we now show that, as $\alpha \to 1$, minimizing~\eqref{eq:qrm-gaussian} leads to a predictor with minimal invariant-risk:
\begin{proposition}[Gaussian QRM learns a minimal invariant-risk predictor as $\alpha \to 1$]
    \label{prop:Gaussian_QRM_invariant}
    Assume
    \begin{enumerate}
        \item $\calF$ contains an invariant-risk predictor $f_0 \in \calF$ with finite mean risk (i.e., $\hat\sigma_{f_0} = 0$ and $\hat\mu_{f_0} < \infty$), and
        \item there are no arbitrarily negative mean risks (i.e., $\mu_* := \inf_{f \in \calF} \mu_f > -\infty$).
    \end{enumerate}
    Then, for the Gaussian QRM predictor $\hat f_\alpha$ given in Eq.~\eqref{eq:qrm-gaussian},
    \[\lim_{\alpha \to 1} \hat\sigma_{\hat f_\alpha} = 0 \quad \text{ and } \quad \limsup_{\alpha \to 1} \hat\mu_{\hat f_\alpha} \leq \hat\mu_{f_0}.\]
\end{proposition}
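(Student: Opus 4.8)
The plan is to exploit the defining optimality of $\hat f_\alpha$ by comparing it against the invariant-risk predictor $f_0$ supplied by assumption~1. Write $c_\alpha := \Phi^{-1}(\alpha)$ and note that $c_\alpha \to +\infty$ as $\alpha \to 1$, with $c_\alpha > 0$ once $\alpha > 1/2$; I will work throughout in this regime. Since $\hat f_\alpha$ minimizes $f \mapsto \hat\mu_f + c_\alpha \hat\sigma_f$ and $\hat\sigma_{f_0} = 0$, evaluating the objective at $f_0$ yields the single key inequality
\[
  \hat\mu_{\hat f_\alpha} + c_\alpha\, \hat\sigma_{\hat f_\alpha} \;\leq\; \hat\mu_{f_0}.
\]
Everything else follows by reading off the two nonnegative summands on the left-hand side.

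For the mean bound, I would observe that $\hat\sigma_{\hat f_\alpha} \geq 0$ and $c_\alpha > 0$, so dropping the second term gives $\hat\mu_{\hat f_\alpha} \leq \hat\mu_{f_0}$ for every $\alpha > 1/2$; taking $\limsup_{\alpha\to1}$ of both sides immediately yields $\limsup_{\alpha\to1}\hat\mu_{\hat f_\alpha} \leq \hat\mu_{f_0}$. For the variance bound, I would instead lower-bound the mean using assumption~2, namely $\hat\mu_{\hat f_\alpha} \geq \mu_* > -\infty$. Rearranging the key inequality then gives
\[
  0 \;\leq\; c_\alpha\, \hat\sigma_{\hat f_\alpha} \;\leq\; \hat\mu_{f_0} - \hat\mu_{\hat f_\alpha} \;\leq\; \hat\mu_{f_0} - \mu_*,
\]
where the right-hand side is a finite, nonnegative constant (finite by assumptions~1--2, nonnegative since $f_0 \in \calF$ forces $\mu_* \leq \hat\mu_{f_0}$). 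Dividing by $c_\alpha$ and letting $\alpha \to 1$ sends the upper bound to zero, so $\lim_{\alpha\to1}\hat\sigma_{\hat f_\alpha} = 0$.

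The argument is short precisely because the Gaussian reformulation in~\eqref{eq:qrm-gaussian} turns the quantile into the explicit penalty $c_\alpha \hat\sigma_f$; the substantive work was already done in deriving that closed form. The one point that needs care --- and the main obstacle --- is the existence of the minimizer $\hat f_\alpha$: the statement is phrased with an $\argmin$, but if the infimum of the objective is not attained I would run the same comparison with an $\eta_\alpha$-approximate minimizer, choosing $\eta_\alpha \to 0$ (e.g.\ $\eta_\alpha = 1/c_\alpha$) so that the key inequality only picks up an additive $\eta_\alpha$ term which is absorbed harmlessly into both conclusions. No compactness or continuity of $f \mapsto (\hat\mu_f, \hat\sigma_f)$ is otherwise required, since the estimates are purely algebraic in these two scalar statistics.
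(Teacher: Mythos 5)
Your argument is exactly the paper's: compare the objective at $\hat f_\alpha$ against $f_0$ to get $\hat\mu_{\hat f_\alpha} + \Phi^{-1}(\alpha)\hat\sigma_{\hat f_\alpha} \leq \hat\mu_{f_0}$, drop the nonnegative penalty term for the mean bound, and rearrange with $\mu_*$ for the variance bound. Your added remark about handling non-attainment of the infimum via approximate minimizers is a sensible extra precaution the paper does not bother with, but the substance is identical and correct.
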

\Cref{prop:Gaussian_QRM_invariant} essentially states that, if an invariant-risk predictor exists, then Gaussian EQRM equalizes risks across the $m$ domains, to a value at most the risk of the invariant-risk predictor.
As we discuss in~\cref{app:causal_recovery},
an invariant-risk predictor $f_0$ (Assumption 1.~of ~\cref{prop:Gaussian_QRM_invariant} above) exists under the assumption that the mechanism generating the labels $Y$ does not change between domains and is contained in the hypothesis class $\calF$, together with a homoscedasticity assumption (see \cref{sec:additional_exps:linear_regr:risks_vs_functions}).
Meanwhile, Assumption 2.~of \cref{prop:Gaussian_QRM_invariant} above is quite mild and holds automatically for most loss functions used in supervised learning (e.g., squared loss, cross-entropy, hinge loss, etc.). We now prove \cref{prop:Gaussian_QRM_invariant}.
\begin{proof}
    By definitions of $\hat f_\alpha$ and $f_0$,
    \begin{equation}
        \hat{\mu}_{\hat f_\alpha} + \Phi^{-1}(\cutoff) \cdot \hat{\sigma}_{\hat f_\alpha}
        \leq \hat{\mu}_{f_0} + \Phi^{-1}(\cutoff) \cdot \hat{\sigma}_{f_0}
        = \hat{\mu}_{f_0}.
        \label{ineq:gaussian_basic_ineq}
    \end{equation}
    Since for $\alpha \geq 0.5$ we have that $\Phi^{-1}(\cutoff) \hat{\sigma}_{\hat f_\alpha} \geq 0$, it follows that $\hat{\mu}_{\hat f_\alpha} \leq \hat{\mu}_{f_0}$. Moreover, rearranging and using the definition of $\mu_*$, we obtain
    \[\hat{\sigma}_{\hat f_\alpha}
      \leq \frac{\hat{\mu}_{f_0} - \hat{\mu}_{\hat f_\alpha}}{\Phi^{-1}(\cutoff)}
      \leq \frac{\hat{\mu}_{f_0} - \mu_*}{\Phi^{-1}(\cutoff)}
      \to 0
      \quad \text{ as } \quad \alpha \to 1.\]
\end{proof}

\paragraph{Connection to VREx.} For the special case of using a Gaussian estimator for $\widehat{\bbT}_f$, we can equate the EQRM objective of \eqref{eq:qrm-gaussian} with the $\calR_{\VREx}$ objective of \cite[Eq.~8]{krueger21rex}. To do so, we rewrite $\calR_{\VREx}$ in terms of the sample mean and variance:
\begin{align}\label{eq:vrex-rewritten}
    \argmin_{f\in\calF}\ \calR_{\VREx}(f) = \argmin_{f\in\calF}\ m \cdot \hat{\mu}_f + \beta \cdot \hat{\sigma}^2_f.
\end{align}
Note that as $\beta \to \infty$, $\calR_{\VREx}$ learns a minimal invariant-risk predictor
under the same assumptions, and by the same argument, as~\cref{prop:Gaussian_QRM_invariant}. Dividing this objective by the positive constant $m>0$, we can rewrite it in a form that allows a direct comparison of our $\alpha$ parameter and this $\beta$ parameter:
\begin{align}\label{eq:vrex-rewritten-comparison}
    \argmin_{f\in\calF}\ \hat{\mu}_f + \left(\frac{\beta \cdot \hat{\sigma}_f}{m}\right) \cdot \hat{\sigma}_f.
\end{align}
Comparing \eqref{eq:vrex-rewritten-comparison} and \eqref{eq:qrm-gaussian}, we note the relation $\beta = m \cdot \Phi^{-1}(\cutoff) / \hat{\sigma}_f$ for a fixed $f$. For different $f$s, a particular setting of our parameter $\alpha$ corresponds to different settings of \citeauthor{krueger21rex}'s $\beta$ parameter, depending on the sample standard deviation over training risks $\hat{\sigma}_f$.

\subsubsection{Kernel density estimator}
\label{app:causality:discovery:kde}
We now consider the case of using a kernel density estimate, in particular,
\begin{equation}
    \hat F_{\text{KDE},f}(x)
      = \frac{1}{m} \sum_{i = 1}^m \Phi \left( \frac{x - R^{e_i}(f)}{h_{f}} \right)
    \label{eq:KDE}
\end{equation}
to estimate the cumulative risk distribution.
\begin{proposition}[Kernel EQRM learns a minimal risk-invariant predictor as $\alpha \to 1$]
    Let
    \[\hat f_{\alpha} := \argmin_{f \in \calF} \hat F^{-1}_{\text{KDE},f}(\alpha),\]
    be the kernel EQRM predictor, where $\hat F^{-1}_{\text{KDE},f}$ denotes the quantile function computed from the kernel density estimate over (empirical) risks of $f$ with a standard 
    Gaussian kernel. Suppose we use a data-dependent bandwidth $h_f$ such that $h_f \to 0$ implies $\hat\sigma_f \to 0$ (e.g., the ``Gaussian-optimal'' rule $h_f = (4/3m)^{0.2} \cdot \hat{\sigma}_f$~\citep{silverman1986density}). As in Proposition~\ref{prop:Gaussian_QRM_invariant}, suppose also that
    \begin{enumerate}
        \item $\calF$ contains an invariant-risk predictor $f_0 \in \calF$ with finite training risks (i.e., $\hat\sigma_{f_0} = 0$ and each $R^{e_i}(f_0) < \infty$), and
        \item there are no arbitrarily negative training risks (i.e., $R_* := \inf_{f \in \calF, i \in [m]} R^{e_i}(f) > -\infty$).
    \end{enumerate}
    For any $f \in \calF$, let $R_f^* := \min_{i \in [m]} R^{e_i}(f)$ denote the smallest of the (empirical) risks of $f$ across domains.
    Then,
    \[\lim_{\alpha \to 1} \hat\sigma_{\hat f_\alpha} = 0 \quad \text{ and } \quad \limsup_{\alpha \to 1} R_{\hat f_\alpha}^* \leq R_{f_0}^*.\]
    \label{prop:KDE_QRM_invariant}
\end{proposition}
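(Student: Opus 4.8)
The plan is to mirror the proof of Proposition~\ref{prop:Gaussian_QRM_invariant}, with the sample minimum $R_f^*$ playing the role previously played by the sample mean $\hat\mu_f$. Two ingredients drive the argument. The first is the optimality inequality: since $\hat f_\alpha$ minimizes $\hat F^{-1}_{\text{KDE},f}(\alpha)$ over $\calF$ and $f_0 \in \calF$, we have $\hat F^{-1}_{\text{KDE},\hat f_\alpha}(\alpha) \leq \hat F^{-1}_{\text{KDE},f_0}(\alpha)$. For the invariant-risk predictor $f_0$ all $m$ risks coincide at $R_{f_0}^*$ and $\hat\sigma_{f_0}=0$, so the data-dependent bandwidth degenerates to $h_{f_0}=0$ and the kernel estimate~\eqref{eq:KDE} collapses to the step function at $R_{f_0}^*$; hence $\hat F^{-1}_{\text{KDE},f_0}(\alpha) = R_{f_0}^*$ for every $\alpha \in (0,1]$, and the optimality inequality reads $\hat F^{-1}_{\text{KDE},\hat f_\alpha}(\alpha) \leq R_{f_0}^*$.

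The second and crucial ingredient is a lower bound on the KDE quantile of an arbitrary $f$ in terms of its smallest risk and its bandwidth. Since $R^{e_i}(f) \geq R_f^*$ for every $i$ and $\Phi$ is increasing, each summand in~\eqref{eq:KDE} satisfies $\Phi((x - R^{e_i}(f))/h_f) \leq \Phi((x - R_f^*)/h_f)$, so the whole estimate is dominated by a single Gaussian CDF centred at $R_f^*$, i.e. $\hat F_{\text{KDE},f}(x) \leq \Phi((x-R_f^*)/h_f)$. Inverting this pointwise domination (any $x$ with $\hat F_{\text{KDE},f}(x)\geq\alpha$ must have $(x-R_f^*)/h_f \geq \Phi^{-1}(\alpha)$) yields the clean bound $\hat F^{-1}_{\text{KDE},f}(\alpha) \geq R_f^* + h_f \, \Phi^{-1}(\alpha)$, the exact analogue of the Gaussian expression $\hat\mu_f + \Phi^{-1}(\alpha)\hat\sigma_f$ from~\eqref{eq:qrm-gaussian}. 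This is the step I expect to be the heart of the argument: it converts the informal claim ``the $\alpha\to1$ tail of a mixture of Gaussians blows up unless the bandwidth shrinks'' into a usable inequality.

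Combining the two ingredients gives $R_{\hat f_\alpha}^* + h_{\hat f_\alpha}\, \Phi^{-1}(\alpha) \leq R_{f_0}^*$. For $\alpha \geq 0.5$ both $h_{\hat f_\alpha}\geq 0$ and $\Phi^{-1}(\alpha)\geq 0$, so dropping the nonnegative second term yields $R_{\hat f_\alpha}^* \leq R_{f_0}^*$, and taking $\limsup_{\alpha\to 1}$ gives the minimality claim. For the variance claim, rearrange to $h_{\hat f_\alpha} \leq (R_{f_0}^* - R_{\hat f_\alpha}^*)/\Phi^{-1}(\alpha)$ and use Assumption~2 in the form $R_{\hat f_\alpha}^* \geq R_*$ to bound the numerator by the finite constant $R_{f_0}^* - R_*$; since $\Phi^{-1}(\alpha)\to\infty$ as $\alpha\to 1$, we conclude $h_{\hat f_\alpha}\to 0$. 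The bandwidth hypothesis (that $h_f\to0$ forces $\hat\sigma_f\to0$, which for the Gaussian-optimal rule $h_f = (4/3m)^{0.2}\hat\sigma_f$ is mere proportionality) then delivers $\lim_{\alpha\to1}\hat\sigma_{\hat f_\alpha}=0$.

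Two points will require care beyond the lower bound itself. First, making the degenerate $h_{f_0}=0$ case rigorous: either adopt the step-function convention above, or note that the lower bound applied to $f_0$ already gives $\hat F^{-1}_{\text{KDE},f_0}(\alpha) \geq R_{f_0}^*$, while evaluating~\eqref{eq:KDE} at $x=R_{f_0}^*$ supplies the matching upper bound, pinning the value to $R_{f_0}^*$. Second, interpreting the bandwidth hypothesis across the \emph{varying} sequence of minimizers $\hat f_\alpha$: it should be read as a property of the selection rule $\sigma \mapsto h(\sigma)$, namely that $h(\sigma_k)\to0 \Rightarrow \sigma_k\to0$ for any sequence $\sigma_k$, which again holds trivially for the proportional rule.
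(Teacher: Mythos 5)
Your proposal is correct and follows essentially the same route as the paper's proof: the key step of dominating the kernel CDF mixture by a single Gaussian CDF centred at $R_f^*$ to obtain $R_f^* + h_f\,\Phi^{-1}(\alpha) \leq \hat F^{-1}_{\text{KDE},f}(\alpha)$, combined with the optimality inequality against $f_0$, is exactly the argument in \cref{app:causality:discovery:kde}. The two care points you flag (the degenerate $h_{f_0}=0$ case and reading the bandwidth hypothesis along the sequence of minimizers) are handled implicitly in the paper in the same way.
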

As in \cref{prop:Gaussian_QRM_invariant}, Assumption 1 depends on invariance of the label-generating mechanism across domains (as discussed further in \cref{app:causal_recovery} below), while Assumption 2 automatically holds for most loss functions used in supervised learning. We now prove \cref{prop:KDE_QRM_invariant}.
\begin{proof}
    By our assumption on the choice of bandwidth, it suffices to show that, as $\alpha \to 1$, $h_{\hat f_{\alpha}} \to 0$.
    
    Let $\Phi$ denote the standard Gaussian CDF. Since $\Phi$ is non-decreasing, for all $x \in \bbR$,
    \[\hat F_{\text{KDE},\hat f_{\alpha}}(x)
      = \frac{1}{m} \sum_{i = 1}^m \Phi \left( \frac{x - R^{e_i}(\hat f_{\alpha})}{h_{\hat f_{\alpha}}} \right)
      \leq \Phi \left( \frac{x - R_{\hat f_{\alpha}}^*}{h_{\hat f_{\alpha}}} \right).\]
    In particular, for $x = \hat F^{-1}_{\text{KDE},\hat f_{\alpha}}(\alpha)$, we have
    \[\alpha = \hat F_{\text{KDE},\hat f_{\alpha}}(\hat F^{-1}_{\text{KDE},\hat f_{\alpha}}(\alpha))
      \leq \Phi \left( \frac{\hat F^{-1}_{\text{KDE},\hat f_{\alpha}}(\alpha) - R_f^*}{h_{\hat f_{\alpha}}} \right).\]
    Inverting $\Phi$ and rearranging gives
    \[R_f^* + h_{\hat f_{\alpha}} \cdot \Phi^{-1}(\alpha)
      \leq \hat F^{-1}_{\text{KDE},\hat f_{\alpha}}(\alpha).\]
    Hence, by definitions of $\hat f_{\alpha}$ and $f_0$,
    \begin{equation}
        R_f^* + h_{\hat f_{\alpha}} \cdot \Phi^{-1}(\alpha)
        \leq \hat F^{-1}_{\text{KDE},\hat f_{\alpha}}(\alpha)
        \leq \hat F^{-1}_{\text{KDE},f_0}(\alpha)
        = R_{f_0}^*.
        \label{ineq:KDE_basic_ineq}
    \end{equation}
    Since, for $\alpha \geq 0.5$ we have that $h_{\hat f_{\alpha}} \cdot \Phi^{-1}(\alpha) \geq 0$, it follows that $R_{\hat f_\alpha}^* \leq R_{f_0}^*$. Moreover, rearranging Inequality~\eqref{ineq:KDE_basic_ineq} and using the definition of $R_*$, we obtain
    \[h_{\hat f_{\alpha}}
     \leq \frac{R_{f_0}^* - R^*_{\hat f_{\alpha}}}{\Phi^{-1}(\alpha)}
     \leq \frac{R_{f_0}^* - R_*}{\Phi^{-1}(\alpha)}
     \to 0\]
     as $\alpha \to 1$.
\end{proof}

\subsubsection{Causal recovery}
\label{app:causal_recovery}
We now discuss and prove our main result, \cref{thm:causal_predictor}, regarding the conditions under which the causal predictor is the only minimal invariant-risk predictor. Together with \Cref{prop:Gaussian_QRM_invariant,prop:KDE_QRM_invariant}, this provides the conditions under which EQRM successfully performs ``causal recovery'', i.e., correctly recovers the true causal coefficients in a linear causal model of the data. As discussed in~\cref{sec:additional_exps:linear_regr:risks_vs_functions}, EQRM recovers the causal predictor by seeking \textit{invariant risks} across domains, which differs from seeking \textit{invariant functions} or coefficients (as in IRM~\citep{arjovsky2020invariant}). As we discuss below, \Cref{thm:causal_predictor} generalizes related results in the literature regarding causal recovery based on \textit{invariant risks}~\citep{krueger21rex,peters2016causal}.

\textbf{Assumption (v).} In contrast to both \citet{peters2016causal} and \citet{krueger21rex}, we do not require specific types of interventions on the covariates. In particular, our main assumption on the distributions of the covariates across domains, namely that the system of $d$-variate quadratic equations in~\eqref{eq:causal_recovery_equations_main} has a unique solution,
is more general than these comparable results. For example, whereas both \citet{peters2016causal} and \citet{krueger21rex} require one or more separate interventions for \emph{every} covariate $X_j$, Example 4 below shows that we only require interventions on the subset of covariates that are effects of $Y$, while weaker conditions suffice for other covariates. Although this generality comes at the cost of abstraction, we now provide some concrete examples with different types of interventions to aid understanding. Note that, to simplify calculations and provide a more intuitive form, \eqref{eq:causal_recovery_equations_main} of \cref{thm:causal_predictor} assumes, without loss of generality, that all covariates are standardized to have mean $0$ and variance $1$, except where interventions change these. We can, however, rewrite \eqref{eq:causal_recovery_equations_main} of \cref{thm:causal_predictor} in a slightly more general form which does not require this assumption of standardized covariates:
\begin{align}
    \notag
    0 \geq
    & x^\intercal \bbE_{X \sim e_1}[X X^\intercal] x
        + 2 x^\intercal \bbE_{N,X \sim e_1} \left[ N X \right] \\
    \notag
    = & \cdots \\
    \label{eq:causal_recovery_equations_app}
    = & x^\intercal \bbE_{X \sim e_m}[X X^\intercal] x
        + 2 x^\intercal \bbE_{N,X \sim e_m} \left[ N X \right].
\end{align}
We now present a number of concrete examples or special cases in which Assumption~(v) of \cref{thm:causal_predictor} would be satisfied, using this slightly more general form. In each example, we assume that variables are generated according to an SCM with an acyclic causal graph, as described in~\cref{app:causality:defs}.
\begin{enumerate}[itemsep=1em]
    \item \textit{No effects of $Y$.} In the case that there are no effects of $Y$ (i.e., no $X_j$ is a causal descendant of $Y$, and hence each $X_j$ is uncorrelated with $N$), it suffices for there to exists at least one environment $e_i$ in which the covariance $\operatorname{Cov}_{X \sim e}[X]$ has full rank. These are standard conditions for identifiability in linear regression. More generally, it suffices for $\sum_{i = 1}^m \operatorname{Cov}_{X \sim e_i}[X]$ to have full rank; this is the same condition one would require if simply performing linear regression on the pooled data from all $m$ environments. Intuitively, this full-rank condition guarantees that the observed covariate values are sufficiently uncorrelated to distinguish the effect of each covariate on the response $Y$. 
    However, it does not necessitate interventions on the covariates, which are necessary to identify the \emph{direction of causation} in a linear model; hence, this full-rank condition fails to imply causal recovery in the presence of effects of $Y$. See \cref{sec:additional_exps:linear_regr:risks_vs_functions} for a concrete example of this failure.

    \item \textit{\emph{Hard} interventions.} For each covariate $X_j$, compared to some baseline environment $e_0$, there is some environment $e_{X_j}$ arising from a hard single-node intervention $do(X_j = z)$, with $z \neq 0$. If $X_j$ is any leaf node in the causal DAG, then in $e_{X_j}$, $X_j$ is uncorrelated with $N$ and with each $X_k$ ($k \neq j$), so the inequality in~\eqref{eq:causal_recovery_equations_app} gives
    \begin{align*}
        0 \geq x^\intercal \bbE_{X \sim e_{X_j}}[X X^\intercal] x
          = x_j^2 z^2 + x_{-j}^\intercal \bbE_{X \sim e_0}[X X^\intercal] x_{-j}.
    \end{align*}
    Since the matrix $\bbE_{X \sim e}[X X^\intercal]$ is positive semidefinite (and $z \neq 0$ implies $z^2 > 0$), it follows that $x_j = 0$. The terms in~\eqref{eq:causal_recovery_equations_app} containing $x_j$ thus vanish, and iterating this argument for parents of leaf nodes in the causal DAG, and so on, gives $x = 0$. This condition is equivalent to that in Theorem 2(a) of \citet{peters2016causal} and is a strict improvement over Corollary~2 of \citet{yin2021optimization} and Theorem~1 of \citet{krueger21rex}, which respectively require two and three distinct hard interventions on each variable.
    
    \item \textit{Shift interventions.} For each covariate $X_j$, compared to some baseline environment $e_0$, there is some environment $e_{X_j}$ consisting of the shift intervention $X_j \leftarrow g_j(\PA(X_j), N_j) + z$, for some $z \neq 0$. Recalling that we assumed each covariate was centered (i.e., $\E_{X \sim e_0}[X_k] = 0$) in $e_0$, if $X_j$ is any leaf node in the causal DAG, then every other covariate remains centered in $e_{X_j}$ (i.e., $\E_{X \sim e_{X_j}}[X_k] = 0$ for each $k \neq j$). Hence, the excess risk is
    \[x^\intercal \bbE_{X \sim e_{X_j}}[X X^\intercal] x
            + 2 x^\intercal \bbE_{N,X \sim e_{X_j}} \left[ N X \right]
      = x_j^2 z^2 + x^\intercal \bbE_{X \sim e_0}[X X^\intercal] x + 2 x^\intercal \bbE_{N,X \sim e_0} \left[ N X \right].\]
    Since, by \eqref{eq:causal_recovery_equations_app},
    \[x^\intercal \bbE_{X \sim e_0}[X X^\intercal] x
            + 2 x^\intercal \bbE_{N,X \sim e_0} \left[ N X \right]
      = x^\intercal \bbE_{X \sim e_{X_j}}[X X^\intercal] x
            + 2 x^\intercal \bbE_{N,X \sim e_{X_j}} \left[ N X \right],\]
    it follows that $x_j^2z^2 = 0$, and so, since $z \neq 0$, $x_j = 0$. As above, the terms in~\eqref{eq:causal_recovery_equations_app} containing $x_j$ thus vanish, and iterating this argument for parents of leaf nodes in the causal DAG, and so on, gives $x = 0$.
    \looseness-1 This condition is equivalent to the additive setting of Theorem 2(b) of \citet{peters2016causal}.
    
    \item \textit{Noise interventions.} Suppose that each covariate is related to its causal parents through an additive noise model; i.e.,
    \[X_j = g_j(\PA(X_j)) + N_j,\]
    where
    $\E[N_j] = 0$ and $0 < \E[N^2] < \infty$. Theorem 2(b) of \citet{peters2016causal} considers ``noise'' interventions, of the form
    \[X_j \leftarrow g_j(\PA(X_j)) + \sigma N_j,\]
    where $\sigma^2 \neq 1$. Suppose that, for each covariate $X_j$, compared to some baseline environment $e_0$, there exists an environment $e_{X_j}$ consisting of the above noise intervention. If $X_j$ is any leaf node in the causal DAG, then, since we assumed $\bbE_{X \sim e_0}[X_j^2] = 1$,
    \begin{align*}
        & x^\intercal \bbE_{X \sim e_{X_j}}[X X^\intercal] x
            + 2 x^\intercal \bbE_{N,X \sim e_{X_j}} \left[ N X \right] \\
        & = (\sigma^2 - 1) x_j^2 \bbE[N_j^2]
          + x^\intercal \bbE_{X \sim e_0}[X X^\intercal] x
            + 2 x^\intercal \bbE_{N,X \sim e_0} \left[ N X \right].
    \end{align*}
    Hence, the system \eqref{eq:causal_recovery_equations_app} implies $0 = (\sigma^2 - 1) x_j^2 \bbE[N_j^2]$.
    Since $\sigma^2 \neq 1$ and $\bbE[N_j^2] > 0$, it follows that $x_j = 0$.

    \item \textit{Scale interventions.} For each covariate $X_j$, compared to some baseline environment $e_0$, there exist two environments $e_{X_j,i}$ ($i \in \{1,2\}$) consisting of scale interventions $X_j \leftarrow \sigma_i g_j(\PA(X_j), N_j)$, for some $\sigma_i \neq \pm 1$, with $\sigma_1 \neq \sigma_2$. If $X_j$ is any leaf node in the causal DAG, then, since we assumed $\bbE_{X \sim e_0}[X_j^2] = 1$,
    \begin{align*}
        & x^\intercal \bbE_{X \sim e_{X_j}}[X X^\intercal] x
            + 2 x^\intercal \bbE_{N,X \sim e_{X_j}} \left[ N X \right] \\
        & = (\sigma_i^2 - 1) x_j^2
          + 2 (\sigma_i - 1) x_j \bbE_{X \sim e_0}[X_j X_{-j}^\intercal] x_{-j}^\intercal
          + x^\intercal \bbE_{X \sim e_0}[X X^\intercal] x \\
        & + 2 (\sigma_i - 1) x_j \bbE_{N,X \sim e_0} \left[ X_j N \right]
          + 2 x^\intercal \bbE_{N,X \sim e_0} \left[ N X \right].
    \end{align*}
    Hence, the system \eqref{eq:causal_recovery_equations_app} implies
    \begin{align*}
        0 & = (\sigma_i^2 - 1) x_j^2
          + 2 (\sigma_i - 1) x_j \left( \bbE_{X \sim e_0}[X_j X_{-j}^\intercal] x_{-j}^\intercal
          + \bbE_{N,X \sim e_0} \left[ X_j N \right] \right).
    \end{align*}
    Since $\sigma_i^2 \neq 1$, if $x_j \neq 0$, then solving for $x_j$ gives
    \[x_j = -2 \frac{\bbE_{X \sim e_0}[X_j X_{-j}^\intercal] x_{-j}^\intercal
        + \bbE_{N,X \sim e_0} \left[ X_j N \right]}{\sigma_i + 1}.\]
    Since $\sigma_1 \neq \sigma_2$, this is possible only if $x_j = 0$.
    This provides an example where a single intervention per covariate would be insufficient to guarantee causal recovery, but two distinct interventions per covariate suffice.
    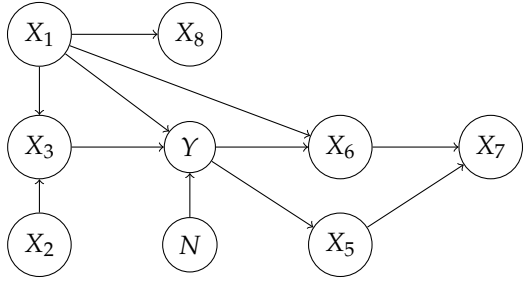
\begin{figure}[tb]
    \centering
    \begin{tikzpicture}
        \node[shape=circle,draw=black] (X1) at (0,1.5) {$X_1$};
        \node[shape=circle,draw=black] (X8) at (2,1.5) {$X_8$};
        \node[shape=circle,draw=black] (X2) at (0,-1.3) {$X_2$};
        \node[shape=circle,draw=black] (X3) at (0,0) {$X_3$};
        \node[shape=circle,draw=black] (N) at (2,-1.3) {$N$};
        \node[shape=circle,draw=black] (Y) at (2,0) {$Y$};
        \node[shape=circle,draw=black] (X5) at (4,-1.3) {$X_5$};
        \node[shape=circle,draw=black] (X6) at (4,0) {$X_6$};
        \node[shape=circle,draw=black] (X7) at (6,0) {$X_7$};
        \path [->](X1) edge node[below] {} (X8);
        \path [->](X1) edge node[below] {} (X3);
        \path [->](X1) edge node[below] {} (Y);
        \path [->](X1) edge node[below] {} (X6);
        \path [->](X2) edge node[left] {} (X3);
        \path [->](X3) edge node[below] {} (Y);
        \path [->](N) edge node[left] {} (Y);
        \path [->](Y) edge node[left] {} (X5);
        \path [->](Y) edge node[left] {} (X6);
        \path [->](X5) edge node[left] {} (X7);
        \path [->](X6) edge node[left] {} (X7);
    \end{tikzpicture}
    \caption{Example causal DAG with various types of covariates. $X_1$ and $X_3$ are the parents of $Y$, and so the true causal coefficient $\beta$ has only two non-zero coordinates $\beta_1$ and $\beta_3$. $X_1$, $X_2$, and $X_3$ are ancestors of $Y$. $X_5$, $X_6$, and $X_7$ are effects, or descendants, of $Y$ and are the only covariates for which $\E[X_j N]$ can be nonzero; hence, $X_5$, $X_6$, and $X_7$ are the only covariates on which interventions are generally necessary.}
    \label{fig:causal_recovery_example}
\end{figure}

    \item \textit{Sufficiently uncorrelated causes and intervened-upon effects.} Suppose that, within the true causal DAG, $\DE(Y) \subseteq [d]$ indexes the \emph{descendants}, or \emph{effects} of $Y$ (e.g., in Figure~\ref{fig:causal_recovery_example}, $\DE(Y) = \{5,6,7\}$). Suppose that for every $j \in \DE(Y)$, compared to a single baseline environment $e_0$, there is a environment $e_{X_j}$ consisting of either a $do(X_j = z)$ intervention or a shift intervention $X_j \leftarrow g_j(\PA(X_j), N_j) + z$, with $z \neq 0$ and that the matrix
    \begin{equation}
        \sum_{i = 1}^m \operatorname{Cov}_{X \sim e_i} \left[ X_{[d]\backslash \DE(Y)} \right]
        \label{exp:ancestor_full_rank_condition}
    \end{equation}
    has full rank. Then, as argued in the previous two cases, for each $j \in \DE(Y)$, $x_j = 0$. Moreover, for any $j \in [d] \backslash \DE(Y)$, $\E[X_j N] = 0$, and so the system of equations~\eqref{eq:causal_recovery_equations_app} reduces to
    \begin{align*}
        0
        & \geq x_{[d]\backslash \DE(Y)}^\intercal \E_{X \sim e_1} \left[ X_{[d]\backslash \DE(Y)} X_{[d]\backslash \DE(Y)}^\intercal \right] x_{[d]\backslash \DE(Y)}^\intercal \\
        & = \cdots \\
        & = x_{[d]\backslash \DE(Y)}^\intercal \E_{X \sim e_m} \left[ X_{[d]\backslash \DE(Y)} X_{[d]\backslash \DE(Y)}^\intercal \right] x_{[d]\backslash \DE(Y)}^\intercal.
    \end{align*}
    Since each $\E_{X \sim e_m} \left[ X_{[d]\backslash \DE(Y)} X_{[d]\backslash \DE(Y)}^\intercal \right]$ is positive semidefinite, the solution $x = 0$ to this reduced system of equations is unique if (and only if) the matrix~\eqref{exp:ancestor_full_rank_condition} has full rank.
    This example demonstrates that interventions are only needed for effect covariates, while a weaker full-rank condition suffices for the remaining ones. In many practical settings, it may be possible to determine \textit{a priori} that a particular covariate $X_j$ is not a descendant of $Y$; in this case, the practitioner need not intervene on $X_j$, as long as sufficiently diverse observational data on $X_j$ is available.
    To the best of our knowledge, this does not follow from any existing results in the literature, such as Theorem 2 of \citet{peters2016causal} or Corollary~2 of \cite{yin2021optimization}. 

\end{enumerate}

We conclude this section with the proof of \Cref{thm:causal_predictor}:
\begin{proof}
    Under the linear SEM setting with squared-error loss, for any estimator $\hat\beta$,
    \begin{align*}
        \calR^e(\hat\beta)
        & = \bbE_{N, X \sim e} \left[ \left( (\beta - \hat \beta)^\intercal X + N \right)^2 \right] \\
        & = \bbE_{X \sim e} \left[ \left( (\beta - \hat \beta)^\intercal X \right)^2 \right] + 2\bbE_{N,X \sim e} \left[ (\beta - \hat \beta)^\intercal N X \right] + \bbE_N \left[ N^2 \right]. \\
    \end{align*}
    Since the second moment of the noise term $\bbE_N[N^2]$ is equal to the risk $\E_{(X,Y) \sim e}[(Y\! -\! \beta^T X)^2]$ of the causal predictor $\beta$, by the definition of $Y = \beta^T X + N$, we have that $\bbE_N[N^2]$ is invariant across environments.
    Thus, minimizing the squared error risk $\calR^e(\hat\beta)$ is equivalent to minimizing the excess risk
    \begin{align*}
        & \bbE_{X \sim e} \left[ \left( (\beta - \hat \beta)^\intercal X \right)^2 \right]
            + 2\bbE_{N,X \sim e} \left[ (\beta - \hat \beta)^\intercal N X \right] \\
        & = (\beta - \hat \beta)^\intercal \bbE_{X \sim e}[X X^\intercal] (\beta - \hat \beta)
            + 2(\beta - \hat \beta)^\intercal \bbE_{N,X \sim e} \left[ N X \right]
    \end{align*}
    over estimators $\hat\beta$.
    Since the true coefficient $\beta$ is an invariant-risk predictor with $0$ excess risk, if $\hat\beta$ is a minimal invariant-risk predictor, it has at most $0$ invariant excess risk; i.e.,
    \begin{align}
        \notag
        0 \geq
        & (\beta - \hat \beta)^\intercal \bbE_{X \sim e_1}[X X^\intercal] (\beta - \hat \beta)
            + 2(\beta - \hat \beta)^\intercal \bbE_{N,X \sim e_1} \left[ N X \right] \\
        \notag
        = & \cdots \\
        \label{eq:equal_squared_error_risks}
        = & (\beta - \hat \beta)^\intercal \bbE_{X \sim e_m}[X X^\intercal] (\beta - \hat \beta)
            + 2(\beta - \hat \beta)^\intercal \bbE_{N,X \sim e_m} \left[ N X \right].
    \end{align}
    By Assumption (v), the unique solution to this is $\beta - \hat\beta = 0$; i.e., $\hat\beta = \beta$.
\end{proof}

\section{On the equivalence of different DG formulations} 
\label{app:sup-and-esssup}

In Section~\ref{sec:qrm}, we claimed that under mild conditions, the minimax domain generalization problem in~\eqref{eq:domain-gen} is equivalent to the essential supremum problem in~\eqref{eq:domain-gen-rewritten}.
In this subsection, we formally describe the conditions under which these two problems are equivalent.  We also highlight several examples in which the assumptions needed to prove this equivalence hold.  

Specifically, this appendix is organized as follows.  First, in \S~\ref{sec:formal-connections} we offer a more formal analysis of the equivalence between the probable domain general problems in~\eqref{eq:prob_gen} and~\eqref{eq:qrm}.  Next, in \S~\ref{sec:sup-and-esssup-connections}, we connect the domain generalization problem in~\eqref{eq:domain-gen} to the essential supremum problem in~\eqref{eq:domain-gen-rewritten}.

\subsection{Connecting formulations for QRM via a push-forward measure} \label{sec:formal-connections}

To begin, we consider the abstract measure space $(\Eall, \calA, \bbQ)$, where $\calA$ is a $\sigma$-algebra defined on the subsets of $\Eall$.  Recall that in our setting, the domains $e\in\Eall$ are assumed to be drawn from the distribution $\bbQ$.  Given this setting, in \S~\ref{sec:qrm} we introduced the probable domain generalization problem in~\eqref{eq:prob_gen}, which we rewrite below for convenience:
\begin{alignat}{2}
    &\min_{f\in\calF,\, \thres \in \bbR} &&\thres  \qquad \subjto \Pr_{e\sim\bbQ} \left\{\calR^e(f) \leq t \right\} \geq \alpha.
\end{alignat}
Our objective is to formally show that this problem is equivalent to~\eqref{eq:qrm}.  To do so, for each $f\in\calF$, let consider a second measurable space $(\R_+, \calB)$, where $\R_+$ denotes the set of non-negative real numbers and $\calB$ denotes the Borel $\sigma$-algebra over this space.  For each $f\in\calF$, we can now define the $(\R_+, \calB)$-valued random variable\footnote{For brevity, we will assume that $G_f$ is always measurable with respect to the underlying $\sigma$-algebra $\calA$.} $G_f:\Eall\to\R_+$ via
\begin{align}
    G_f : e \mapsto \calR^e(f) = \E_{\Prob(X^e,Y^e)}[\ell(f(X^e),Y^e)].
\end{align}
Concretely, $G_f$ maps an domain $e$ to the corresponding risk $\calR^e(f)$ of $f$ in that domain.  In this way, $G_f$ effectively summarizes $e$ by its effect on our predictor's risk, thus projecting from the often-unknown and potentially high-dimensional space of possible distribution shifts or interventions to the one-dimensional space of observed, real-valued risks.  However, note that $G_f$ is not necessarily injective, meaning that two domains $e_1$ and $e_2$ may be mapped to the same risk value under $G_f$.

The utility of defining $G_f$ is that it allows us to formally connect~\eqref{eq:prob_gen} with~\eqref{eq:qrm} via a push-forward measure through $G_f$.  That is, given any $f\in\calF$, we can define the measure\footnote{Here $\bbT_f$ is defined over the induced measurable space $(\R_+, \calB)$.}
\begin{align}
    \bbT_f =^d G_f\:\#\: \bbQ \label{eq:def-of-risk-dist}
\end{align}
where $\#$ denotes the \emph{push-forward} operation and $=^d$ denotes equality in distribution.  Observe that the relationship in~\eqref{eq:def-of-risk-dist} allows us to explicitly connect $\bbQ$---the often unknown distribution over (potentially high-dimensional and/or non-Euclidean) domain shifts in \cref{fig:fig1:q-dist}---to $\bbT_f$---the distribution over real-valued risks in \cref{fig:fig1:risk}, from which we can directly observe samples.  In this way, we find that for each $f\in\calF$,
\begin{align}
    \Pr_{e\sim\bbQ} \{\calR^e(f) \leq t \} = \Pr_{R\sim\bbT_f} \{R \leq t\}.
\end{align}
This relationship lays bare the connection between~\eqref{eq:prob_gen} and~\eqref{eq:qrm}, in that the domain or environment distribution $\bbQ$ can be replaced by a distribution over risks $\bbT_f$.

\subsection{Connecting~\texorpdfstring{\eqref{eq:domain-gen}}{DG} to the essential supremum problem~\texorpdfstring{\eqref{eq:domain-gen-rewritten}}{3.1}} \label{sec:sup-and-esssup-connections}

We now study the relationship between~\eqref{eq:domain-gen} and~\eqref{eq:domain-gen-rewritten}. In particular, in \S~\ref{sec:cont-domain-sets} and \S~\ref{sec:disc-domain-set}, we consider the distinct settings wherein $\Eall$ comprises continuous and discrete spaces respectively. 

\subsubsection{Continuous domain sets \texorpdfstring{$\Eall$}{E}} \label{sec:cont-domain-sets}

When $\Eall$ is a continuous space, it can be shown that~\eqref{eq:domain-gen} and~\eqref{eq:domain-gen-rewritten} are \emph{equivalent} whenever: (a) the map $G_f$ defined in Section~\ref{sec:formal-connections} is continuous; and (b) the measure $\bbQ$ satisfies very mild regularity conditions.  

\paragraph{The case when $\bbQ$ is the Lebesgue measure.}  Our first result concerns the setting in which $\Eall$ is a subset of Euclidean space and where $\bbQ$ is chosen to be the Lebesgue measure on $\Eall$.  We summarize this result in the following proposition.

\begin{proposition} \label{prop:cont-equivalent-of-dg}
Let us assume that the map $G_f$ is continuous for each $f\in\calF$.  Further, let $\bbQ$ denote the Lebesgue measure over $\Eall$; that is, we assume that domains are drawn uniformly at random from $\Eall$. Then~\eqref{eq:domain-gen} and~\eqref{eq:domain-gen-rewritten} are equivalent.
\end{proposition}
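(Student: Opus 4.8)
The plan is to reduce the equivalence of the two minimization problems to a pointwise-in-$f$ identity between their objectives, and then to prove that identity from continuity of $G_f$ together with the full support of the Lebesgue measure. Since both~\eqref{eq:domain-gen} and~\eqref{eq:domain-gen-rewritten} minimize over the \emph{same} class $\calF$, it suffices to show that for every fixed $f\in\calF$ the two objective values agree, i.e. $\sup_{e\in\Eall}\calR^e(f)=\esssup_{e\sim\bbQ}\calR^e(f)$; once this holds, the two problems have identical objective functions and therefore the same optimal value and the same set of minimizers. I would work throughout with the map $G_f(e)=\calR^e(f)$ from \S\ref{sec:formal-connections}, writing $M:=\sup_{e\in\Eall}G_f(e)$ and $M':=\esssup_{\bbQ}G_f$ (reading the $\max$ in~\eqref{eq:domain-gen} as a supremum, which coincides with a maximum whenever it is attained).

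The easy direction is $M'\le M$, and it holds for \emph{any} measure: since $G_f\le M$ everywhere, this holds in particular $\bbQ$-almost everywhere, and $M'$ is by definition the least essential upper bound on $G_f$. The substance of the proof is the reverse inequality $M\le M'$, which is where assumptions (a) and (b) enter. I would argue by contradiction: suppose $M'<M$ and pick any $c$ with $M'<c<M$. By the definition of the supremum there exists a point $e_0\in\Eall$ with $G_f(e_0)>c$, and by continuity of $G_f$ at $e_0$ (in the subspace topology of $\Eall$) there is a relatively open neighborhood $U\subseteq\Eall$ of $e_0$ on which $G_f>c>M'$. Invoking the full-support property of the Lebesgue measure---every nonempty relatively open subset of $\Eall$ has positive $\bbQ$-measure---we get $\bbQ(U)>0$, so the set $\{e:G_f(e)>M'\}\supseteq U$ has positive measure. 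This contradicts the defining property of $M'=\esssup_{\bbQ}G_f$, namely that $\bbQ\{G_f>M'\}=0$. Hence $M'=M$, completing the pointwise identity and thus the equivalence.

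The step I expect to be the main obstacle---and the place where the stated hypotheses are genuinely needed---is guaranteeing $\bbQ(U)>0$ for the neighborhood $U$. Continuity alone only produces a relatively open set on which $G_f$ is large; it is the regularity of $\bbQ$ that converts ``relatively open'' into ``positive measure.'' For the Lebesgue measure this full-support condition is automatic when $\Eall$ is open (or, more generally, equals the closure of its interior), but it can fail if $\Eall$ is lower-dimensional or has empty interior, in which case a relatively open set may be Lebesgue-null and the $\esssup$ can drop strictly below the $\sup$. I would therefore state this regularity condition explicitly as the ``mild'' assumption (b), note that it holds in the uniform-sampling setting of the proposition, and flag that it is exactly what rules out the measure-zero pathologies that the $\esssup$ is designed to ignore.
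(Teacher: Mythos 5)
Your proof is correct and follows essentially the same route as the paper's: both establish the pointwise identity $\sup_{e\in\Eall}\calR^e(f)=\esssup_{e\sim\bbQ}\calR^e(f)$ for each fixed $f$, with the easy inequality holding for any measure and the reverse inequality obtained by using continuity of $G_f$ to produce a neighborhood of near-supremal risk and then the full-support property of Lebesgue measure to give that neighborhood positive mass (your contradiction framing with an intermediate value $c$ versus the paper's direct $A-2\epsilon\le B$ argument is only a cosmetic difference). If anything, you are slightly more careful than the paper at the one load-bearing step---the paper asserts $\bbQ\{e\in\calB_\epsilon:\calR^e(f)>A-2\epsilon\}>0$ without comment, whereas you correctly isolate this as the place where full support is needed and note it can fail when $\Eall$ has empty interior.
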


\begin{proof}
To prove this claim, it suffices to show that under the assumptions in the statement of the proposition, it holds for any $f\in\calF$ that 
\begin{align}
    \sup_{e\in\Eall} R^e(f) = \esssup_{e\sim\bbQ} R^e(f). \label{eq:equality-of-sup-and-esssup}
\end{align} 
To do so, let us fix an arbitrary $f\in\calF$ and write
\begin{align}
    A := \sup_{e\in\Eall} R^e(f) \quad\text{and}\quad B := \esssup_{e\sim\bbQ} R^e(f).
\end{align}
At a high-level, our approach is to show that $B \leq A$, and then that $A\leq B$, which together will imply the result in~\eqref{eq:equality-of-sup-and-esssup}.  To prove the first inequality, observe that by the definition of the supremum, it holds that $R^e(f) \leq A$ $\forall e\in\Eall$.  Therefore, $\bbQ\{e\in\Eall : R^e(f) > A\} = 0$, which directly implies that $B\leq A$.  Now for the second inequality, let $\epsilon>0$ be arbitrarily chosen.  Consider that due to the continuity of $G_f$, there exists an $e_0\in\Eall$ such that
\begin{align}
    R^{e_0}(f) +\epsilon > A. \label{eq:apply-def-of-sup}
\end{align}
Now again due to the continuity of $G_f$, we can choose a ball $\calB_\epsilon\subset\Eall$ centered at $e_0$ such that $|R^e(f)-R^{e_0}(f)| \leq \epsilon$ $\forall e\in\calB_\epsilon$.  Given such a ball, observe that $\forall e\in\calB_\epsilon$, it holds that
\begin{align}
    R^e(f) \geq R^{e_0}(f) - \epsilon > A - 2\epsilon 
\end{align}
where the first inequality follows from the reverse triangle inequality and the second inequality follows from~\eqref{eq:apply-def-of-sup}.  Because $\bbQ\{e\in\calB_\epsilon : R^e(f) > A-2\epsilon\} > 0$, it directly follows that $A-2\epsilon \leq B$.  As $\epsilon>0$ was chosen arbitrarily, this inequality holds for any $\epsilon>0$, and thus we can conclude that $A\leq B$, completing the proof.
\end{proof}

\paragraph{Generalizing Prop.\  \ref{prop:cont-equivalent-of-dg} to other measure $\bbQ$.} We note that this proof can be generalized to measures $\bbQ$ other than the Lebesgue measure.  Indeed, the result holds for any measure $\bbQ$ taking support on $\Eall$ for which it holds that $\bbQ$ places non-zero probability mass on any closed subset of $\Eall$.  This would be the case, for instance, if $\bbQ$ was a truncated Gaussian distribution with support on $\Eall$.  Furthermore, if we let $\bbL$ denote the Lebesgue measure on $\Eall$, then another more general instance of this property occurs whenever $\bbL$ is absolutely continuous with respect to $\bbQ$, i.e., whenever $\bbL \ll \bbQ$.

\begin{corollary}
Let us assume that $\bbQ$ places nonzero mass on every open ball with radius strictly larger than zero.  Then under the continuity assumptions of Prop.~\ref{prop:cont-equivalent-of-dg}, it holds that~\eqref{eq:domain-gen} and~\eqref{eq:domain-gen-rewritten} are equivalent.
\end{corollary}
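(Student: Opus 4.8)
The plan is to recognize that the proof of Proposition~\ref{prop:cont-equivalent-of-dg} invokes the specific choice $\bbQ =$ Lebesgue measure in exactly one place, and that the corollary's weaker hypothesis is precisely what that step requires. So I would reuse the entire structure of the previous proof and re-justify only the single step that depended on $\bbQ$ being Lebesgue. As before, it suffices to fix an arbitrary $f \in \calF$, set $A := \sup_{e\in\Eall} R^e(f)$ and $B := \esssup_{e\sim\bbQ} R^e(f)$, and show $A = B$.

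First I would dispatch the easy direction $B \leq A$, which transfers verbatim: since $R^e(f) \leq A$ for every $e \in \Eall$, the set $\{e : R^e(f) > A\}$ is empty and hence $\bbQ$-null, so $B \leq A$. This uses only that $\bbQ$ is a measure, not any structural property, so nothing needs to change. For the reverse inequality $A \leq B$ I would retrace the original argument: given $\epsilon > 0$, the definition of the supremum yields some $e_0 \in \Eall$ with $R^{e_0}(f) + \epsilon > A$, and continuity of $G_f$ at $e_0$ supplies a (relatively) open ball $\calB_\epsilon \subset \Eall$ of strictly positive radius on which $|R^e(f) - R^{e_0}(f)| \leq \epsilon$, so that $R^e(f) > A - 2\epsilon$ throughout $\calB_\epsilon$. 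The only point where the Lebesgue structure was used in the proposition was the assertion $\bbQ(\calB_\epsilon) > 0$; this is now supplied directly by the corollary's hypothesis that $\bbQ$ places nonzero mass on every open ball of positive radius. Consequently $\bbQ\{e \in \calB_\epsilon : R^e(f) > A - 2\epsilon\} = \bbQ(\calB_\epsilon) > 0$, which forces $A - 2\epsilon \leq B$; letting $\epsilon \downarrow 0$ gives $A \leq B$, and combining the two inequalities yields $A = B$, i.e.\ the equivalence of~\eqref{eq:domain-gen} and~\eqref{eq:domain-gen-rewritten}.

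Because the argument is essentially a direct transplant of the proof of Proposition~\ref{prop:cont-equivalent-of-dg}, I do not expect a substantive obstacle. The one point I would be careful to state explicitly is that continuity of $G_f$ genuinely produces a \emph{relatively open} neighborhood of $e_0$ in $\Eall$ that contains an open ball of strictly positive radius, so that the corollary's hypothesis is applicable as written (rather than applying only to some degenerate or lower-dimensional neighborhood). Once that is made precise, the conclusion follows immediately, and I would simply remark that the corollary strictly weakens the Lebesgue assumption while preserving the same proof skeleton.
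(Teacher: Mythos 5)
Your proposal is correct and is essentially the paper's own proof: the paper likewise observes that the $B\leq A$ direction carries over unchanged and that the only Lebesgue-specific step in Proposition~\ref{prop:cont-equivalent-of-dg} is the claim $\bbQ(\calB_\epsilon)>0$, which the corollary's hypothesis supplies directly. Your added remark about checking that continuity yields a relatively open neighborhood containing a ball of strictly positive radius is a sensible precision the paper leaves implicit, but it does not change the argument.
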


\begin{proof}
The proof of this fact follows along the same lines as that of Prop.~\ref{prop:cont-equivalent-of-dg}.  In particular, the same argument shows that $B\leq A$.  Similarly, to show that $A\leq B$, we can use the same argument, noting that $\bbQ\{e\in\calB_\epsilon : R^e(f) > A-2\epsilon\}$ continues to hold, due to our assumption that $\bbQ$ places nonzero mass on $\calB_\epsilon$.
\end{proof}

\paragraph{Examples.}  We close this subsection by considering several real-world examples in which the conditions of Prop.~\ref{prop:cont-equivalent-of-dg} hold.  In particular, we focus on examples in the spirit of ``Model-Based Domain Generalization''~\cite{robey2021modelbased}.  In this setting, it is assumed that the variation from domain to domain is parameterized by a \emph{domain transformation model} $x^e\mapsto D(x^e,e') =: x^{e'}$, which maps the covariates $x^e$ from a given domain $e\in\Eall$ to another domain $e'\in\Eall$.  As discussed in~\cite{robey2021modelbased}, domain transformation models cover settings in which inter-domain variation is due to \emph{domain shift}~\cite[\S 1.8]{quinonero2008dataset}.  Indeed, under this model (formally captured by Assumptions 4.1 and 4.2 in~\cite{robey2021modelbased}), the domain generalization problem in~\eqref{eq:domain-gen} can be equivalently rewritten as
\begin{align}
    \min_{f\in\calF} \max_{e\in\Eall} \E_{(X,Y)} [\ell(f(D(X,e)),Y)]. \label{eq:model-based-dg}
\end{align}
For details, see Prop.\ 4.3 in~\cite{robey2021modelbased}.  In this problem, $(X,Y)$ denote an underlying pair of random variables such that 
\begin{align}
    \Prob(X^e) =^d D \:\#\: (\Prob(X), \delta(e)) \quad\text{and}\quad \Prob(Y^e) =^d \Prob(Y) 
\end{align}

for each $e\in\Eall$ where $\delta(e)$ is a Dirac measure placed at $e\in\Eall$.  Now, turning our attention back to Prop.\ ~\ref{prop:cont-equivalent-of-dg}, we can show the following result for~\eqref{eq:model-based-dg}.

\begin{remark} \label{rmk:model-based}
Let us assume that the map $e\mapsto D(\cdot, e)$ is continuous with respect to a metric $d_{\Eall}(e,e')$ on $\Eall$ and that $x\mapsto \ell(x,\cdot)$ is continuous with respect to the absolute value.  Further, assume that each predictor $f\in\calF$ is continuous in the standard Euclidean metric on $\R^d$.  Then~\eqref{eq:domain-gen} and~\eqref{eq:domain-gen-rewritten} are equivalent.
\end{remark}

\begin{proof}
By Prop.~\ref{prop:cont-equivalent-of-dg}, it suffices to show that the map
\begin{align}
    G_f : e \mapsto \E_{(X,Y)} [\ell(f(D(X,e)),Y)]
\end{align}
is a continuous function.  To do so, recall that the composition of continuous functions is continuous, and therefore we have, by the assumptions listed in the above remark, that the map $e\mapsto \ell(f(D(x,e)),y)$ is continuous for each $(x,y)\sim(X,Y)$.  To this end, let us define the function $h_f(x,y,e) := \ell(f(D(x,e)),y)$ and let $\epsilon>0$.  By the continuity of $h_f$ in $e$, there exists a $\delta=\delta(\epsilon) > 0$ such that $| h_f(x,y,e) - h_f(x,y,e')|<\epsilon$ whenever $d_{\Eall}(e,e') < \delta$.  Now observe that
\begin{align}
    &\left|\E_{(X,Y)} [h_f(X,Y,e)] - \E_{(X,Y)} [h_f(X,Y,e')]  \right| \\
    &\qquad = \left| \int_{\Eall} h_f(X,Y,e) \text{d}\Prob(X,Y) - \int_{\Eall} h_f(X,Y,e') \text{d}\Prob(X,Y) \right| \\
    &\qquad= \left| \int_{\Eall} (h_f(X,Y,e) - h_f(X,Y,e')) \text{d}\Prob(X,Y) \right| \\
    &\qquad\leq \int_{\Eall} \left| h_f(X,Y,e) - h_f(X,Y,e') \right| \text{d}\Prob(X,Y). \label{eq:last-inequal-continuity}
\end{align}
Therefore, whenever $d_{\Eall}(e,e')<\delta$ it holds that
\begin{align}
    \left|\E_{(X,Y)} [h_f(X,Y,e)] - \E_{(X,Y)} [h_f(X,Y,e')]  \right| \leq \int_{\Eall} \epsilon\text{d}\Prob(X,Y) = \epsilon
\end{align} 
by the monotonicity of expectation.  This completes the proof that $G_f$ is continuous.
\end{proof}

In this way, provided that the risks in each domain vary in a continuous way through $e$,~\eqref{eq:domain-gen} and~\eqref{eq:domain-gen-rewritten} are equivalent.  As a concrete example, consider an image classification setting in which the variation from domain to domain corresponds to different rotations of the images.  This is the case, for instance, in the \texttt{RotatedMNIST} dataset~\cite{ilse2020diva,gulrajani2020search}, wherein the training domains correspond to different rotations of the \texttt{MNIST} digits.  Here, a domain transformation model $D$ can be defined by
\begin{align}
    D(x, e) = R(e)x \quad\text{where}\quad e\in\Eall \subseteq [0, 2\pi),
\end{align}
and where $R(e)$ is a rotation matrix.  In this case, it is clear that $D$ is a continuous function of $e$ (in fact, the map is \emph{linear}), and therefore the result in~\eqref{rmk:model-based} holds.

\subsubsection{Discrete domain sets \texorpdfstring{$\Eall$}{E}} \label{sec:disc-domain-set}

When $\Eall$ is a discrete set, the conditions we require for~\eqref{eq:domain-gen} and~\eqref{eq:domain-gen-rewritten} to be equivalent are even milder.  In particular, the only restriction we place on the problems is that $\bbQ$ must place non-zero mass on each element of $\Eall$; that is, $\bbQ(e) > 0$ $\forall e\in\Eall$.  We state this more formally below.

\begin{proposition} 
Let us assume that $\Eall$ is discrete, and that $\bbQ$ is such that $\forall e\in\Eall$, it holds that $\bbQ(e) > 0$.  Then it holds that~\eqref{eq:domain-gen} and~\eqref{eq:domain-gen-rewritten} are equivalent.
\end{proposition}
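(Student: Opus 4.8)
The plan is to mirror the continuous-domain argument of Prop.~\ref{prop:cont-equivalent-of-dg}: since the two problems~\eqref{eq:domain-gen} and~\eqref{eq:domain-gen-rewritten} share the same feasible set $\calF$ and differ only in their inner objective, it suffices to establish the pointwise identity
\[
    \sup_{e\in\Eall} \calR^e(f) = \esssup_{e\sim\bbQ} \calR^e(f)
    \qquad \text{for every fixed } f\in\calF.
\]
Writing $A := \sup_{e\in\Eall} \calR^e(f)$ and $B := \esssup_{e\sim\bbQ} \calR^e(f)$, I would prove $A = B$ via the usual two inequalities, exactly as in the proof of Prop.~\ref{prop:cont-equivalent-of-dg}.

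First I would dispatch $B \leq A$, which requires no assumption on $\bbQ$: by definition of the supremum, $\calR^e(f) \leq A$ for every $e\in\Eall$, so $\bbQ\{e : \calR^e(f) > A\} = 0$, and hence $A$ lies in the admissible set $\{t \geq 0 : \Pr_{e\sim\bbQ}\{\calR^e(f)\leq t\}=1\}$ whose infimum is $B$; thus $B \leq A$.

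The reverse inequality $A \leq B$ is where the hypothesis $\bbQ(e) > 0$ for all $e\in\Eall$ enters, encoding the fact that on a discrete space with full support the only $\bbQ$-null set is empty, so the essential supremum cannot discard any domain. Concretely, I would fix an arbitrary $t < A$. By definition of the supremum there exists $e_0 \in \Eall$ with $\calR^{e_0}(f) > t$, and since $\bbQ(e_0) > 0$ we obtain $\Pr_{e\sim\bbQ}\{\calR^e(f) \leq t\} \leq 1 - \bbQ(e_0) < 1$. Hence no $t < A$ belongs to the admissible set defining $B$, so $B \geq A$. Combining the two inequalities gives $A = B$, and since this holds for every $f\in\calF$ the objectives of the two minimization problems coincide, so the problems are equivalent.

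I do not expect a genuine obstacle here, as the discrete, full-support setting is the mildest of the cases treated in this appendix. The only point requiring a little care is that the supremum over $\Eall$ need not be attained (e.g.\ when $\Eall$ is countably infinite) and may even equal $+\infty$; this is precisely why I argue with an arbitrary threshold $t < A$ rather than with a maximizing domain, so that the same argument uniformly covers finite and countably-infinite $\Eall$ as well as finite and infinite values of $A$.
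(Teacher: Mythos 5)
Your argument is correct. The paper in fact states this proposition without supplying a proof, so there is nothing to compare against line by line; what you have written is exactly the intended discrete analogue of the paper's proof of Prop.~\ref{prop:cont-equivalent-of-dg}: the direction $B\leq A$ is identical (no assumption on $\bbQ$ needed), and for $A\leq B$ the full-support hypothesis $\bbQ(e_0)>0$ replaces the continuity-plus-ball argument, since any witness $e_0$ with $\calR^{e_0}(f)>t$ already carries positive mass and so forces $\Pr_{e\sim\bbQ}\{\calR^e(f)\leq t\}<1$. Your choice to argue with an arbitrary threshold $t<A$ rather than a maximizing domain is the right precaution for countably infinite $\Eall$ and for $A=+\infty$, and no gap remains.
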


\section{Notes on KDE bandwidth selection}
\label{app:kde}
In our setting, we are interested in bandwidth-selection methods which: (i) work well for 1D distributions and small sample sizes $m$; and (ii) guarantee recovery of the causal predictor as $\alpha \to 1$ by satisfying $h_f \to 0 \implies \hat{\sigma}_f \to 0$, where $h_f$ is the data-dependent bandwidth and $\hat{\sigma}_f$ is the sample standard deviation (see \cref{app:causality:discovery:kde,app:causal_recovery}). We thus investigated three popular bandwidth-selection methods: (1) the Gaussian-optimal rule~\citep{silverman1986density}, $h_f = (4/3m)^{0.2} \cdot \hat{\sigma}_f$; (2) Silverman's rule-of-thumb~\citep{silverman1986density}, $h_f = m^{-0.2} \cdot \min (\hat{\sigma}_f, \frac{\text{IQR}}{1.34})$, with IQR the interquartile range; and (3) the median-heuristic~\citep{scholkopf1997kernel, takeuchi2006nonparametric, sriperumbudur2009kernel}, which sets the bandwidth to be the median pairwise-distance between data points. Note that many sensible methods exist, as do more complete studies on bandwidth selection---see e.g.~\cite{silverman1986density}.

For (i), we found Silverman's rule-of-thumb~\citep{silverman1986density} to perform very well, the Gaussian-optimal rule~\citep{silverman1986density} to perform well, and the median-heuristic~\citep{scholkopf1997kernel, takeuchi2006nonparametric, sriperumbudur2009kernel} to perform poorly. For (ii), only the Gaussian-optimal rule satisfies $h_f \to 0 \implies \hat{\sigma}_f \to 0$. Thus, in practice, we use either the Gaussian-optimal rule (particularly when causal predictor's are sought as $\alpha \to 1$), or Silverman's rule-of-thumb.

\section{Generalization bounds}
\label{app:gen_bounds}

This appendix states and proves our main generalization bound, Theorem~\ref{thm:generalization}. Theorem~\ref{thm:generalization} applies for many possible estimates $\widehat{\bbT}_f$, and we further show how to apply Theorem~\ref{thm:generalization} to the specific case of using a kernel density estimate.

\subsection{Main generalization bound and proof}

Suppose that, from each of $N$ IID environments $e_1,...,e_N \sim \bbP(e)$, we observe $n$ IID labeled samples $(X_{i,1},Y_{i,1}),...,(X_{n,1},Y_{n,1}) \sim \bbP(X^e, Y^e)$. Fix a hypothesis class $\calF$ and confidence level $\alpha \in [0, 1]$. For any hypothesis $f : \calX \to \calY$, define the
\emph{empirical risk on environment $e_i$} by
\[\widehat \calR^{e_i}(f)
  := \frac{1}{n} \sum_{j = 1}^n \ell \left( Y_{i,j}, f(X_{i,j}) \right),
  \quad \text{ for each } \quad
  i \in [N].\]
Throughout this section, we will abbreviate the distribution $F_{\bbT_f}(t) = \Pr_e[\calR^e(f) \leq t]$ of $f$'s risk by $F_f(t)$ and its estimate
$F_{\widehat{\bbT}_f}$, computed from the observed empirical risks $\widehat \calR^{e_1}(f),...,\widehat \calR^{e_N}(f)$, by $\widehat F_f$.

We propose to select a hypothesis by minimizing this over our hypothesis class:
\begin{equation}
    \widehat f := \argmin_{f \in \calF} F^{-1}_{\widehat\bbT_f}(\alpha).
    \label{estimator:empirical_VaR_min}
\end{equation}
In this section, we prove a uniform generalization bound, which in particular, provides conditions under which the estimator \eqref{estimator:empirical_VaR_min} generalizes uniformly over $\calF$. Because the novel aspect of the present paper is the notion of generalizing \emph{across} environments, we will take for granted that the hypothesis class $\calF$ generalizes uniformly \emph{within} each environments (i.e., that each $\sup_{f \in \calF} \calR^{e_i}(f) - \widehat{\calR}^{e_i}(f)$ can be bounded with high probability); myriad generalization bounds from learning theory can be used to show this.

\begin{theorem}
    Let $\calG := \{ \widehat F(\calR^{e_1}(f),\calR^{e_2}(f),...,\calR^{e_N}(f)) : f \in \calF, e_1,...,e_n \in \Eall \}$ denote the class of possible estimated risk distributions over $N$ environments, and, for any $\epsilon > 0$, let $\calN_\epsilon(\calG)$ denote the $\epsilon$-covering number of $\calG$ under $\calL_\infty(\bbR)$.
    Suppose the class $\calF$ generalizes uniformly within environments; i.e., for any $\delta > 0$, there exists $t_{n,\delta,\calF}$ such that
    \[\esssup_e \Pr_{\{(X_j,Y_j)\}_{j = 1}^n \sim \bbP(X^e, Y^e)} \left[ \sup_{f \in \calF} \rfe - \widehat \calR^e(f) > t_{n,\delta,\calF} \right] \leq \delta.\]
    Let
    \[\operatorname{Bias}(\calF, \widehat F)
      := \sup_{f \in \calF, t \in \bbR} F_f(t) - \E_{e_1,...,e_N}[\widehat F_f(t)]\]
    denote the worst-case bias of the estimator $\widehat F$ over the class $f$.
    Noting that $\widehat F_f$ is a function of the empirical risk CDF
    \[\widehat Q_f(t) := \frac{1}{N} \sum_{i = 1}^N 1\{\calR^{e_i}(f) \leq t\},\]
    suppose that the function $\widehat Q_f \mapsto \widehat F_f$ is $L$-Lipschitz under $\calL_\infty(\bbR)$.
    Then, for any $\epsilon,\delta > 0$,
    \begin{equation}
        \Pr_{\substack{e_1,...,e_N\\\{(X_j,Y_j)\}_{j = 1}^n \sim \bbP(X^{e_i}, Y^{e_i})}} \left[ \sup_{f \in \calF} F^{-1}_f\left(\alpha - B(\calF, \widehat F) - \epsilon \right) - \widehat F^{-1}_f(\alpha) > t_{n,\frac{\delta}{N},\calF}
        \right]
        \leq \delta + 8 \mathcal{N}_{\epsilon/16}(\calG) e^{-\frac{N\epsilon^2}{64L}}.
        \label{ineq:main_generalization_bound}
    \end{equation}
    \label{thm:generalization}
\end{theorem}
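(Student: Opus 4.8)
The plan is to prove the quantile bound by first establishing \emph{uniform convergence of the CDFs} $\widehat F_f \to F_f$ and then transferring this to the quantile (inverse-CDF) functions. There are two qualitatively different ``axes'' of error to control: a shift along the \emph{risk axis}, caused by estimating each $\calR^{e_i}(f)$ by its finite-sample analogue $\widehat\calR^{e_i}(f)$, which I would control by $t_{n,\delta/N,\calF}$; and a shift along the \emph{probability axis}, caused by the bias and stochastic fluctuation of $\widehat F_f$ built from only $N$ environments, which I would control by $B(\calF,\widehat F)+\epsilon$. The reason to route everything through CDFs rather than quantiles directly is that quantile functions are neither linear nor Lipschitz in general, whereas $\widehat F_f$ is, by assumption, a smooth ($L$-Lipschitz) functional of the empirical risk CDF $\widehat Q_f$ --- precisely the structure that concentration inequalities exploit.

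First I would record the one-sided quantile transfer: if $\widehat F_f(t)\geq F_f(t)-\eta$ for all $t\in\bbR$, then evaluating at $t=\widehat F_f^{-1}(\alpha)$ and using right-continuity (so that $\widehat F_f(\widehat F_f^{-1}(\alpha))\geq\alpha$) gives $F_f(\widehat F_f^{-1}(\alpha))\geq\alpha-\eta$, whence $F_f^{-1}(\alpha-\eta)\leq\widehat F_f^{-1}(\alpha)$ by definition of the generalized inverse. Thus, modulo the within-environment risk shift, it suffices to establish the one-sided CDF bound $\widehat F_f(t)\geq F_f(t)-(B+\epsilon)$ uniformly in $f\in\calF$ and $t\in\bbR$. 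I would obtain this by splitting $F_f(t)-\widehat F_f(t)=\big(F_f(t)-\E\,\widehat F_f(t)\big)+\big(\E\,\widehat F_f(t)-\widehat F_f(t)\big)$: the first term is at most $B(\calF,\widehat F)$ by the very definition of the worst-case bias, and the second (fluctuation) term is handled by a covering argument. Concretely, I would cover $\calG$ to accuracy $\epsilon/16$ in $\calL_\infty(\bbR)$ using $\calN_{\epsilon/16}(\calG)$ representatives, bound the fluctuation of each representative by a bounded-differences (McDiarmid) inequality, and union-bound over the cover. The key structural fact is that replacing a single environment moves the empirical risk CDF $\widehat Q_f$ by at most $1/N$ in $\calL_\infty$, so by the assumed $L$-Lipschitzness of $\widehat Q_f\mapsto\widehat F_f$ it moves $\widehat F_f(t)$ by at most $L/N$; this bounded-differences constant is what drives the exponential rate $e^{-N\epsilon^2/(64L)}$ and the prefactor $8\,\calN_{\epsilon/16}(\calG)$ after the standard covering bookkeeping (the budget $\epsilon/16$ trades the discretization error against the deviation, leaving the remainder for the fluctuation).

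Finally I would reintroduce the within-environment error. Invoking the uniform within-environment guarantee at confidence $\delta/N$ in each of the $N$ environments and union-bounding gives, with probability at least $1-\delta$, that $\calR^{e_i}(f)-\widehat\calR^{e_i}(f)\leq t_{n,\delta/N,\calF}$ holds simultaneously over all $i\in[N]$ and all $f\in\calF$. Because shifting all $N$ sample risks by at most $t_{n,\delta/N,\calF}$ along the risk axis shifts the computed quantile $\widehat F_f^{-1}(\alpha)$ by at most the same amount (by monotonicity of the construction), this discrepancy between the idealized true-risk estimator used in the fluctuation analysis and the actually-computed empirical-risk estimator contributes exactly the $t_{n,\delta/N,\calF}$ slack on the right-hand side of the event. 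Union-bounding the two failure events --- failure of the uniform CDF bound and failure of the within-environment guarantee --- yields the stated probability $\delta+8\,\calN_{\epsilon/16}(\calG)e^{-N\epsilon^2/(64L)}$.

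I expect the main obstacle to be the fluctuation step: obtaining a \emph{uniform} (over the infinite class $\calF$) concentration of $\widehat F_f$ around its mean in $\calL_\infty(\bbR)$. The difficulty is two-fold. First, one must cover the right object --- the composed class $\calG$ of estimated CDFs rather than $\calF$ itself --- and verify that $\calL_\infty$ closeness of two elements of $\calG$ translates into closeness of both their empirical and population quantiles, so that the cover actually controls the quantity of interest. Second, one must propagate the two qualitatively different perturbations (a risk-axis shift of size $t_{n,\delta/N,\calF}$ and a probability-axis shift of size $B+\epsilon$) through the generalized inverse without the convenience of continuity, which forces the careful one-sided inequalities and right-continuity arguments used throughout.
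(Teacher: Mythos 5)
Your overall architecture matches the paper's: decompose the CDF error into bias plus stochastic fluctuation, control the fluctuation uniformly over $\calF$ by covering the composed class $\calG$ at scale $\epsilon/16$ in $\calL_\infty(\bbR)$, absorb the within-environment estimation error via a union bound at level $\delta/N$ over the $N$ environments together with monotonicity of the estimator in the risks, and finally transfer the one-sided CDF bound to the quantile by evaluating at $t=\widehat F_f^{-1}(\alpha)$ and applying the non-decreasing generalized inverse $F_f^{-1}$. All of these steps appear in the paper's proof in essentially the form you describe. The one place you genuinely diverge is the concentration tool for a fixed cover representative: the paper uses symmetrization (introducing a ghost sample $e_1',\dots,e_N'$, which is where the factor-of-two cascades producing the prefactor $8$ come from) followed by the Dvoretzky--Kiefer--Wolfowitz inequality applied to the empirical risk CDF $\widehat Q_f$, routed through the $L$-Lipschitz assumption; you propose McDiarmid's bounded-differences inequality with increment $L/N$ instead.

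The McDiarmid route as you state it has a gap. Bounded differences gives concentration of the random variable $Z_g:=\sup_{t\in\bbR}\bigl(\E[\widehat F_g(t)]-\widehat F_g(t)\bigr)$ \emph{around its own mean} $\E[Z_g]$; it does not by itself show that $Z_g$ is small. You still need a separate argument that $\E[Z_g]\lesssim\epsilon$ for each representative $g$, i.e.\ you need to control the supremum over $t\in\bbR$ \emph{inside} the expectation --- this is exactly the uniformity over the risk axis that DKW delivers for free (the class of half-line indicators is a VC class of dimension one, so $\E\|\widehat Q_f-F_f\|_\infty\lesssim N^{-1/2}$, which combined with the Lipschitz assumption would patch your argument, but the step must be made). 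Separately, your claim that the increment $L/N$ ``drives the rate $e^{-N\epsilon^2/(64L)}$'' does not quite work out: McDiarmid with increment $L/N$ yields an exponent of order $-N\epsilon^2/L^2$, quadratic rather than linear in $L$ (the paper's own stated constant has the same discrepancy relative to its DKW computation, so this is a minor bookkeeping issue rather than a substantive one). With the expected-supremum step added, your variant is a valid and arguably more self-contained alternative to symmetrization-plus-DKW; without it, the fluctuation bound is incomplete.
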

The key technical observation of Theorem~\ref{thm:generalization} is that we can pull the supremum over $\calF$ outside the probability by incurring a $\calN_{\epsilon/16}(\calG)$ factor increase in the probability of failure. To ensure $\calN_{\epsilon/16}(\calG) < \infty$, we need to limit the space of possible empirical risk profiles $\calG$ (e.g., by kernel smoothing), incurring an additional bias term $B(\calF, \widehat F)$. As we demonstrate later, for common distribution estimators, such as kernel density estimators, one can bound the covering number $\calN_{\epsilon/16}(\calG)$ in Inequality~\eqref{ineq:main_generalization_bound} by standard methods, and the Lipschitz constant $L$ is typically $1$. Under mild (e.g., smoothness) assumptions on the family of possible true risk profiles, one can additionally bound the Bias Term, again by standard arguments.

Before proving Theorem~\ref{thm:generalization}, we state two standard lemmas used in the proof:
\begin{lemma}[Symmetrization; Lemma 2 of \citep{bousquet2003introduction}]
    Let $X$ and $X'$ be independent realizations of a random variable with respect to which $\mathcal{F}$ is a family of integrable functions. Then, for any $\epsilon > 0$,
    \[\Pr \left[ \sup_{f \in \calF} f(X) - \E f(X) > \epsilon \right]
        \leq 2\Pr \left[ \sup_{f \in \calF} f(X) - f(X') > \frac{\epsilon}{2} \right].\]
    \label{lemma:symmetrization}
\end{lemma}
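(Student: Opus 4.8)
The plan is to run the classical \emph{ghost-sample} (second-sample) symmetrization argument. First I would fix $\epsilon > 0$ and work on the event $A := \{\sup_{f \in \calF} f(X) - \E f(X) > \epsilon\}$, which depends on $X$ alone. On $A$ the supremum strictly exceeds $\epsilon$, so—measurably in $X$, up to an arbitrarily small slack $\eta$ that I would send to $0$ at the end to handle non-attainment of the supremum—I may select a data-dependent hypothesis $f^\star = f^\star(X) \in \calF$ with $f^\star(X) - \E f^\star > \epsilon$, where $\E f^\star$ abbreviates the expectation of the (now fixed, given $X$) function under the common law. The copy $X'$ is independent of $X$, which lets me freeze $f^\star$ by conditioning on $X$ and then treat it as a deterministic function when integrating over $X'$.

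Conditioning on $X$, the key step is to show that the independent draw $X'$ ``behaves typically'' for the frozen function $f^\star$. Since $X'$ has the same law as $X$, we have $\E_{X'} f^\star(X') = \E f^\star$, the very centering constant appearing in $A$. By Chebyshev's inequality applied to the fixed function $f^\star$,
\[
\Pr_{X'}\!\left[ f^\star(X') - \E f^\star \geq \tfrac{\epsilon}{2} \right]
  \leq \frac{4\,\var\!\big(f^\star(X')\big)}{\epsilon^2}
  \leq \frac{1}{2},
\]
where the final inequality uses that the functions of interest are empirical averages over many i.i.d.\ draws, so their variance is $O(1/N)$ and hence at most $\epsilon^2/8$ in the regime $N\epsilon^2$ large (the implicit $n t^2 \geq 2$-type condition behind the ``integrable family'' hypothesis). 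This quantitative variance control—not mere integrability—is exactly where the factor $\tfrac12$ and the halved threshold $\tfrac{\epsilon}{2}$ originate. Consequently $\Pr_{X'}[f^\star(X') - \E f^\star < \tfrac{\epsilon}{2} \mid X] \geq \tfrac12$, and on this sub-event
\[
f^\star(X) - f^\star(X')
  = \big(f^\star(X) - \E f^\star\big) - \big(f^\star(X') - \E f^\star\big)
  > \epsilon - \tfrac{\epsilon}{2} = \tfrac{\epsilon}{2},
\]
so a fortiori $\sup_{f \in \calF} f(X) - f(X') > \tfrac{\epsilon}{2}$.

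Combining these observations gives, for every realization $X \in A$,
\[
\Pr_{X'}\!\left[ \sup_{f \in \calF} f(X) - f(X') > \tfrac{\epsilon}{2} \,\Big|\, X \right] \geq \tfrac12.
\]
Multiplying by $\indicator_A$ and taking expectation over $X$ then yields $\Pr[\sup_{f \in \calF} f(X) - f(X') > \epsilon/2] \geq \tfrac12\,\Pr[A]$, and rearranging is precisely the claimed inequality. The main obstacle is the conditional Chebyshev step: one must guarantee that the ghost copy concentrates strongly enough for the ``good'' event to carry probability at least $\tfrac12$, which needs a genuine variance (hence sample-size) bound rather than mere integrability, and one must verify that the maximizing selector $f^\star(X)$ can be chosen measurably—most cleanly by passing to a countable dense subfamily or by the $\eta$-slack limiting argument. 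Everything else is bookkeeping around the shared centering constant $\E f^\star$ and the independence of $X'$ from $X$.
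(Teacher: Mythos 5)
Your proposal is correct and is essentially the standard ghost-sample argument behind the cited Lemma 2 of \citet{bousquet2003introduction}, which the paper invokes without reproving: select a (measurably chosen, up to an $\eta$-slack) near-maximizer $f^\star(X)$, condition on $X$, and apply Chebyshev to the frozen function so the independent copy is $\epsilon/2$-typical with probability at least $\tfrac{1}{2}$, then integrate over $X$ and rearrange. You are also right to flag that this step requires genuine variance control rather than mere integrability---in the application here the $f(X)$ are empirical CDFs, i.e.\ $[0,1]$-valued averages over $N$ domains with variance at most $1/(4N)$, so the needed condition is $N\epsilon^2 \geq 2$, which appears explicitly in the source's statement of the lemma but is elided in the paper's paraphrase.
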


\begin{lemma}[Dvoretzky–Kiefer–Wolfowitz (DKW) Inequality; Corollary 1 of \citep{massart1990tight}]
    Let $X_1,...,X_n$ be IID $\bbR$-valued random variables with CDF $P$. Then, for any $\epsilon > 0$,
    \[\Pr \left[ \sup_{t \in \bbR} \left| F_f(t) - \frac{1}{n} \sum_{i = 1}^n 1\{X_i \leq t\} \right| > \epsilon \right] \leq 2e^{-2n\epsilon^2}.\]
    \label{lemma:DKW}
\end{lemma}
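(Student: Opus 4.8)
The plan is to prove the two-sided bound by reducing it to a one-sided estimate with the optimal constant, after first making the problem distribution-free. Write $P_n(t) := \frac{1}{n}\sum_{i=1}^n \mathbf{1}\{X_i \le t\}$ for the empirical CDF and $P$ for the true CDF. The first step is the elementary set inclusion
\[
\Big\{\sup_t |P_n(t) - P(t)| > \epsilon\Big\} \subseteq \Big\{\sup_t (P_n(t) - P(t)) > \epsilon\Big\} \cup \Big\{\sup_t (P(t) - P_n(t)) > \epsilon\Big\},
\]
so that a union bound reduces the whole statement to controlling each one-sided supremum by $e^{-2n\epsilon^2}$. The prefactor $2$ in the conclusion is precisely the price of this union, and the two one-sided tails turn out to be identically distributed once we pass to the uniform case (via the reflection $u \mapsto 1-u$), so it suffices to bound a single one-sided deviation.

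Second, I would make the statistic distribution-free. For continuous $P$, the probability-integral transform $U_i := P(X_i)$ produces i.i.d.\ uniform$[0,1]$ variables, and $\sup_t(P_n(t)-P(t))$ equals in distribution $\sup_{u\in[0,1]}(G_n(u)-u)$, where $G_n$ is the empirical CDF of the $U_i$; hence the distribution of the one-sided statistic does not depend on $P$. For a general $P$ with atoms, a standard monotonicity/coupling argument shows the supremum can only decrease, so the continuous (uniform) case dominates, and it suffices to establish the bound there.

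Third, and this is the crux, I would prove the sharp one-sided uniform inequality
\[
\Pr\Big[\sup_{u\in[0,1]}(G_n(u)-u) > \epsilon\Big] \le e^{-2n\epsilon^2}, \qquad \epsilon > 0 .
\]
Since the supremum is attained at one of the uniform order statistics, the event is a boundary-crossing event for the uniform empirical process, which I would control following Massart: a Chernoff-type exponential estimate on the one-sided deviation, sharpened using the exchangeability structure of the ordered uniforms. The delicate feature is that the naive exponential bound is vacuous (exceeds $1$) for small $\epsilon$, so the regime $\epsilon \lesssim \sqrt{(\log 2)/(2n)}$ must be handled by a separate direct argument, while for larger $\epsilon$ a tight optimization recovers exactly the exponent $2n\epsilon^2$ with prefactor $1$.

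I expect this sharp constant to be the main obstacle. A crude discretization of $t$ into $O(1/\epsilon)$ cells with Hoeffding applied pointwise, or a generic concentration inequality for the empirical process, readily yields a bound of the form $C\,e^{-2n\epsilon^2}$ with $C$ strictly larger than $2$; shaving the prefactor down to the optimal value $2$ (equivalently, the one-sided prefactor to $1$) is exactly the content of Massart's theorem and requires the refined, regime-split analysis above rather than any soft union-bound reasoning. Because the statement is quoted verbatim as a corollary of that work, in the present paper the lemma is used as a black box, and the outline above is the route one would take to establish it from first principles.
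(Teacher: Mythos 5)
The paper offers no proof of this lemma at all: it is imported verbatim as Corollary~1 of Massart (1990) and used as a black box in line~\eqref{l9} of the proof of Theorem~\ref{thm:generalization}, exactly as you observe in your closing remark. Your outline is a correct reconstruction of how that corollary is actually obtained from Massart's Theorem~1: the union bound over the two one-sided events accounts for the prefactor $2$; the reflection $u \mapsto 1-u$ shows the two one-sided suprema are identically distributed in the uniform case; the probability-integral transform makes the statistic distribution-free for continuous $P$, with the atomic case dominated by the continuous one via the standard generalized-inverse coupling; and the tight one-sided bound with prefactor $1$ is precisely the hard content of Massart's theorem, which you rightly invoke rather than reprove. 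One refinement: your ``separate direct argument'' for the regime $\epsilon \lesssim \sqrt{(\log 2)/(2n)}$ is unnecessary for the \emph{two-sided} statement quoted here. Massart's one-sided inequality carries the restriction $e^{-2n\epsilon^2} \leq 1/2$, i.e.\ $\epsilon \geq \sqrt{\ln 2/(2n)}$, but below that threshold one has $2e^{-2n\epsilon^2} \geq 1$, so the two-sided bound holds vacuously; this triviality observation, not a refined small-$\epsilon$ analysis, is how the corollary dispenses with the restriction. With that simplification noted, your proposal is sound as a reduction to the cited theorem, and appropriately so, since carrying out the sharp one-sided estimate itself would amount to reproducing Massart's paper rather than anything this paper supplies.
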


We now prove our main result, Theorem~\ref{thm:generalization}.
\begin{proof}[Proof of Theorem~\ref{thm:generalization}]
    For convenience, let $F_f(t) := \bbP_{e \sim \bbP(e)}[\rfe \leq t]$.
    In preparation for Symmetrization, for any $f \in \calF$, let $\widehat F_f'$ denote $\widehat F_f$ computed on an independent ``ghost'' sample $e_1',...,e_N' \sim \bbP(e)$. Let $P_{\epsilon/16} \subseteq \mathcal{G}$ denote an $(\epsilon/16)$-cover of $\mathcal{G}$ with $|P_{\epsilon/16}| = \mathcal{N}_{\epsilon/16}$. For any $F \in \mathcal{G}$, let $DF \in \argmin_{G \in P_{\epsilon/16}} \|G - F\|_\infty$ denote any projection of $F$ onto $P_{\epsilon/16}$. Let $\hat Q_f$ denote the empirical CDF, as defined in Theorem~\ref{thm:generalization}. Then,
    \begin{align}
        & \Pr_{e_1,...,e_N} \left[ \sup_{f \in \calF, t \in \bbR} \E_{e_1,...,e_N} \left[ \widehat F_f(t) \right] - \widehat F_f(t) > \epsilon \right] \\
        \label{l1}
        & \leq 2 \Pr_{\substack{e_1,...,e_N\\e_1',...,e_N'}} \left[ \sup_{f \in \calF, t \in \R} \widehat F_f'(t) - \widehat F_f(t) > \epsilon/2 \right] \\
        \label{l2}
        & \leq 2 \Pr_{\substack{e_1,...,e_N\\e_1',...,e_N'}} \left[ \sup_{f \in \calF} \left\| \widehat F_f' - \widehat F_f \right\|_\infty > \epsilon/2 \right] \\
        \label{l3}
        & \leq 2 \Pr_{\substack{e_1,...,e_N\\e_1',...,e_N'}} \left[ \sup_{f \in \mathcal{F}} \epsilon/8 + \left\| D \widehat F_f' - D \widehat F_f \right\|_\infty > \epsilon/2 \right] \\
        \label{l4}
        & \leq 2 \mathcal{N}_{\epsilon/16} \sup_{f \in \mathcal{F}} \Pr_{\substack{e_1,...,e_N\\e_1',...,e_N'}} \left[ \epsilon/8 + \left\| D \widehat F_f' - D \widehat F_f \right\|_\infty > \epsilon/2 \right] \\
        \label{l5}
        & \leq 2 \mathcal{N}_{\epsilon/16} \sup_{f \in \mathcal{F}} \Pr_{\substack{e_1,...,e_N\\e_1',...,e_N'}} \left[ \epsilon/4 + \left\| \widehat F_f' - \widehat F_f \right\|_\infty > \epsilon/2 \right] \\
        \label{l6}
        & = 2 \mathcal{N}_{\epsilon/16} \sup_{f \in \mathcal{F}} \Pr_{\substack{e_1,...,e_N\\e_1',...,e_N'}} \left[ \left\| \widehat F_f' - \widehat F_f \right\|_\infty > \epsilon/4 \right] \\
        \label{l7}
        & \leq 2 \mathcal{N}_{\epsilon/16} \sup_{f \in \mathcal{F}} \Pr_{\substack{e_1,...,e_N\\e_1',...,e_N'}} \left[ \left\| \widehat Q_f' - \widehat Q_f \right\|_\infty > \frac{\epsilon}{4L} \right] \\
        \label{l8}
        & \leq 4 \mathcal{N}_{\epsilon/16} \sup_{f \in \mathcal{F}} \Pr_{\substack{e_1,...,e_N\\e_1',...,e_N'}} \left[ \left\| \E \left[ \widehat Q_f \right] - \widehat Q_f \right\|_\infty > \frac{\epsilon}{8L} \right] \\
        \label{l9}
        & = 4 \mathcal{N}_{\epsilon/16} \sup_{f \in \mathcal{F}} \Pr_{e_1,...,e_N} \left[ \sup_{t \in \bbR} \left| F_f(t) - \frac{1}{N} \sum_{i = 1}^N 1\{\calR^e(f) \leq t\} \right| > \frac{\epsilon}{8L} \right] \\
        \label{l10}
        & \leq 8 \mathcal{N}_{\epsilon/16} \exp \left( - \frac{N\epsilon^2}{64L} \right).
    \end{align}
    Here, line~\eqref{l1} follows from the Symmetrization Lemma (Lemma~\ref{lemma:symmetrization}), lines~\eqref{l3} and \eqref{l5} follow from the definition of $D$, line~\eqref{l4} is a union bound over $\widehat{\mathcal{P}}_{\epsilon/16}$, line~\eqref{l7} follows from the Lipschitz assumption, line~\eqref{l8} follows from the triangle inequality, line~\eqref{l9} follows from the fact that the empirical CDF is an unbiased estimate of the true CDF, and line~\eqref{l10} follows from the DKW Inequality (Lemma~\ref{lemma:DKW}).
    
    Since $\sup_x f(x) - \sup_x g(x) \leq \sup_x f(x) - g(x)$,
    \begin{align}
        \notag
        & \Pr_{e_1,...,e_N} \left[ \sup_{f \in \calF, t \in \bbR} F_f(t) - \widehat F_f(t) > \epsilon + \operatorname{Bias}(\calF, \widehat F) \right] \\
        \notag
        & = \Pr_{e_1,...,e_N} \left[ \sup_{f \in \calF, t \in \bbR} F_f(t) - \widehat F_f(t) > \epsilon + \sup_{f \in \calF, t \in \bbR} F_f(t) - \E_{e_1,...,e_N} \left[ \widehat F_f(t) \right] \right] \\
        \notag
        & \leq \Pr_{e_1,...,e_N} \left[ \sup_{f \in \calF, t \in \bbR} \E_{e_1,...,e_N} \left[ \widehat F_f(t) \right] - \widehat F_f(t) > \epsilon \right] \\
        \label{l11}
        & \leq 8 \mathcal{N}_{\epsilon/16} \exp \left( - \frac{N\epsilon^2}{64L} \right),
    \end{align}
    by~\eqref{l10}.
    Meanwhile, applying the presumed uniform bound on within-environment generalization error together with a union bound over the $N$ environments, gives us a high-probability bound on the maximum generalization error of $f$ within any of the $N$ environments:
    \[\Pr_{\substack{\{e_i\}_{i = 1}^N \sim \bbP(e)\\\{(X_{i,j},Y_{i,j})\}_{j = 1}^n \sim \bbP(X^{e_i},Y^{e_i})}} \left[ \max_{i \in [N]} \sup_{f \in \calF} \calR^{e_i}(f) - \widehat{R}^{e_i}(f) \leq t_{n,\frac{\delta}{2N},\calF} \right] \leq \delta/2,\]
    It follows that, with probability at least $1 - \delta/2$, for all $f \in \calF$ and $t \in \bbR$,
    \[\widehat F_f \left( t + t_{n,\frac{\delta}{2N},\calF} \right)
      \leq \widehat F_{\widehat \calR^{e_1}(f),...,\widehat \calR^{e_1}(f)}(t),\]
    where $\widehat F_{\widehat \calR^{e_1}(f),...,\widehat \calR^{e_1}(f)}(t)$ is the actually empirical estimate $\widehat F_f(t)$ of computed using the $N$ empirical risks $\widehat \calR^{e_1}(f),...,\widehat \calR^{e_N}(f)$.
    Plugging this into the left-hand side of Inequality~\eqref{l11},
    \[\Pr_{e_1,...,e_N} \left[ \sup_{f \in \calF, t \in \bbR} F_f \left( t + t_{n,\frac{\delta}{2N},\calF} \right) - \widehat F_{\widehat \calR^{e_1}(f),...,\widehat \calR^{e_1}(f)}(t) > \epsilon  + \operatorname{Bias}(\calF, \widehat F) \right]
    \leq 8 \mathcal{N}_{\epsilon/16} \exp \left( - \frac{N\epsilon}{64L} \right).\]
    Setting $t = \widehat F^{-1}_{\widehat \calR^{e_1}(f),...,\widehat \calR^{e_1}(f)}(\alpha)$ and applying the non-decreasing function $F_f^{-1}$ gives the desired result:
    \[\Pr_{e_1,...,e_N} \left[ \sup_{f \in \calF, t \in \bbR} F_f^{-1} \left( \alpha - \epsilon - \operatorname{Bias}(\calF, \widehat F) \right) - \widehat F^{-1}_{\widehat \calR^{e_1}(f),...,\widehat \calR^{e_1}(f)}(\alpha) \geq t_{n,\frac{\delta}{2N},\calF} \right]
    \leq 8 \mathcal{N}_{\epsilon/16} \exp \left( - \frac{N\epsilon}{64L} \right).\]
\end{proof}

\subsection{Kernel density estimator}

In this section, we apply our generalization bound Theorem~\eqref{thm:generalization} to the kernel density estimator (KDE)
\[\widehat F_h(t)
  = \int_{-\infty}^t \frac{1}{nh} \sum_{i = 1}^n K \left( \frac{\tau - X_i}{h} \right) \, d\tau\]
of the cumulative risk distribution under the assumptions that:
\begin{enumerate}
    \item the loss $\ell$ takes values in a bounded interval $[a, b] \subseteq \bbR$, and
    \item for all $f \in \mathcal{F}$, the true risk profile $F_f$ is $\beta$-H\"older continuous with constant $L$, for any $\beta > 0$. 
\end{enumerate}
We also make standard integrability and symmetry assumptions on the kernel $K : \bbR \to \bbR$ (see Section 1.2.2 \citep{tsybakov2004introduction} for discussion of these assumptions):
\[\int_\bbR |K(u)| \, du < \infty, \quad
  \int_\bbR K(u) \, du = 1, \quad
  \int_\bbR |u|^\beta |K(u) \, du < \infty,\]
and, for each positive integer $j < \beta$,
\begin{equation}
    \int_\bbR u^j K(u) \, du = 0.
    \label{eq:kernel_order_assumption}
\end{equation}

We will use Theorem~\ref{thm:generalization} to show that, for an appropriately chosen bandwidth $h$,
\[\sup_{f \in \calF, t \in \bbR} F_f(t) - \widehat F_f(t)
  \in O_P \left( \left( \frac{\log N}{N} \right)^{\frac{\beta}{2\beta + 1}} \right).\]

We start by bounding the bias term $B(\calF, \widehat F)$. Since
\begin{align*}
    \E_{X_1,...,X_n} \left[ \int_{-\infty}^t \left| \frac{1}{nh} \sum_{i = 1}^n K \left( \frac{\tau - X_i}{h} \right) \right| \right] \, d\tau
      & \leq \frac{1}{h} \E_X \left[ \int_{-\infty}^\infty \left| K \left( \frac{\tau - X_i}{h} \right) \right| \right] \, d\tau \\
      & \leq \|K\|_1 < \infty,
\end{align*}
applying Fubini's theorem, linearity of expectation, the change of variables $x \mapsto \tau + xh$, Fubini's theorem again, and the fact that $\int_\bbR K(u) \, dx = 1$,
\begin{align*}
    F_f(t) - \E_{X_1,...,X_n} \left[ \widehat F_h(t) \right]
    & = F_f(t) - \E_{e_1,...,e_N} \left[ \int_{-\infty}^t \frac{1}{nh} \sum_{i = 1}^n K \left( \frac{\tau - X_i}{h} \right) \right] \\
    & = F_f(t) - \int_{-\infty}^t \E_{X_1,...,X_n} \left[ \frac{1}{nh} \sum_{i = 1}^n K \left( \frac{\tau - X_i}{h} \right) \right] \\
    & = F_f(t) - \int_{-\infty}^t \int_{\bbR} \frac{1}{h} K \left( \frac{\tau - x}{h} \right) p(x) \, dx \, d\tau \\
    & = F_f(t) - \int_{-\infty}^t \int_{\bbR} K(x) p(\tau + xh) \, dx \, d\tau \\
    & = F_f(t) - \int_{\bbR} K(x) \int_{-\infty}^t p(\tau + xh) \, d\tau \, dx \\
    & = \int_{\bbR} K(x) \left( F_f(t) - F(t + xh) \right) \, dx.
\end{align*}
By Taylor's theorem for some $\pi \in [0, 1]$,
\begin{align*}
    F(t + xh)
    & = \sum_{j = 0}^{\lfloor \beta \rfloor - 1} \frac{(xh)^j}{j!} \frac{d^j}{dt^j} F_f(t)
      + \frac{(xh)^{\lfloor \beta \rfloor}}{\lfloor \beta \rfloor!} \frac{d^{\lfloor \beta \rfloor}}{dt^{\lfloor \beta \rfloor}} F(t + \pi xh).
\end{align*}
Hence, by the assumption~\eqref{eq:kernel_order_assumption},
\begin{align*}
    F_f(t) - \E_{X_1,...,X_n} \left[ \widehat F_h(t) \right]
    & = \int_{\bbR} K(x) \left( F_f(t) - \sum_{j = 0}^{\lfloor \beta \rfloor - 1} \frac{(xh)^j}{j!} \frac{d^j}{dt^j} F_f(t)
      + \frac{(xh)^{\lfloor \beta \rfloor}}{\lfloor \beta \rfloor!} \frac{d^{\lfloor \beta \rfloor}}{dt^{\lfloor \beta \rfloor}} F(t + \pi xh) \right) \, dx \\
    & = \int_{\bbR} K(x) \left( \frac{(xh)^{\lfloor \beta \rfloor}}{\lfloor \beta \rfloor!} \frac{d^{\lfloor \beta \rfloor}}{dt^{\lfloor \beta \rfloor}} F(t + \pi xh) \right) \, dx \\
    & = \int_{\bbR} K(x) \frac{(xh)^{\lfloor \beta \rfloor}}{\lfloor \beta \rfloor!} \left( \frac{d^{\lfloor \beta \rfloor}}{dt^{\lfloor \beta \rfloor}} F(t + \pi xh) - \frac{d^{\lfloor \beta \rfloor}}{dt^{\lfloor \beta \rfloor}} F_f(t) \right) \, dx.
\end{align*}
Thus, by the H\"older continuity assumption,
\begin{align}
    \notag
    \left| F_f(t) - \E_{X_1,...,X_n} \left[ \widehat F_h(t) \right] \right|
    & \leq \int_{\bbR} K(x) \frac{(xh)^{\lfloor \beta \rfloor}}{\lfloor \beta \rfloor!} \left| \frac{d^{\lfloor \beta \rfloor}}{dt^{\lfloor \beta \rfloor}} F(t + \pi xh) - \frac{d^{\lfloor \beta \rfloor}}{dt^{\lfloor \beta \rfloor}} F_f(t) \right| \, dx \\
    \label{ineq:KDE_bias_bound}
    & \leq \int_{\bbR} K(x) \frac{(xh)^{\lfloor \beta \rfloor}}{\lfloor \beta \rfloor!} L (\pi xh)^{\beta - \lfloor \beta \rfloor} \, dx
    \leq C h^\beta,
\end{align}
where $C := \frac{L}{\lfloor \beta \rfloor!} \int_{\bbR} |x|^\beta |K(x)| \, dx$ is a constant.

Next, since, by the Fundamental Theorem of Calculus,
\begin{align*}
    \frac{d^{\lfloor \beta+1 \rfloor}}{dt^{\lfloor \beta+1 \rfloor}} \widehat F_f(t)
      = \frac{d^{\lfloor \beta+1 \rfloor}}{dt^{\lfloor \beta+1 \rfloor}} \int_{-\infty}^t \frac{1}{nh} \sum_{i = 1}^N K \left( \frac{\tau - X_i}{h} \right) \, d\tau
      = \frac{1}{nh} \sum_{i = 1}^N \frac{d^{\lfloor \beta \rfloor}}{dt^{\lfloor \beta \rfloor}} K \left( \frac{t - X_i}{h} \right),
\end{align*}
$\|F_f\|_{\calC^{\beta+1}} \leq \|K_h\|_{\calC^\beta} = h^{-(\beta+1)} \|K\|_{\calC^\beta}$.
Hence, by standard bounds on the covering number of H\"older continuous functions~\citep{devore1993constructive}, there exists a constant $c > 0$ depending only on $\beta$ such that
\begin{equation}
    \calN_{\epsilon/16}(\calN)
      \leq \exp \left( c (b - a) \left( \frac{\|K\|_{\calC^\beta}}{h^{\beta+1} \epsilon} \right)^{\frac{1}{\beta+1}} \right)
      = \exp \left( c \frac{(b - a)}{h} \left( \frac{\|K\|_{\calC^\beta}}{\epsilon} \right)^{\frac{1}{\beta+1}} \right).
    \label{ineq:KDE_covering_number_bound}
\end{equation}

Finally, since $\widehat F_h = \widehat Q * K_h$ (where $*$ denotes convolution), by linearity of the convolution and Young's convolution inequality~\citep[p.34]{ball1997elementary},
\begin{align*}
    \left\| \widehat F_h - \widehat F_h' \right\|_\infty
    & \leq \left\| \widehat Q - \widehat Q' \right\|_\infty \|K_h\|_1.
\end{align*}
Since, by a change of variables, $\|K_h\|_1 = \|K\|_1 = 1$, the KDE is a $1$-Lipschitz function of the empirical CDF, under $\calL_\infty(\bbR)$.

Thus, plugging Inequality~\eqref{ineq:KDE_bias_bound}, Inequality~\eqref{ineq:KDE_covering_number_bound}, and $L = 1$ into Theorem~\ref{thm:generalization} and taking $n \to \infty$ gives, for any $\epsilon > 0$,
\[\Pr_{e_1,...,e_N} \left[ \sup_{f \in \calF} F^{-1}_f\left(\alpha - C h^\beta - \epsilon \right) - \widehat F^{-1}_f(\alpha) > 0
        \right]
        \leq 8 \exp \left( c \frac{b - a}{h} \left( \frac{\|K\|_{\calC^\beta}}{\epsilon} \right)^{\frac{1}{\beta+1}} \right) e^{-\frac{N\epsilon^2}{64}}.\]
Plugging in $\epsilon = \sqrt{\frac{\log \frac{1}{\delta} + c\frac{b - a}{h}}{N}}$ gives
\[\Pr_{e_1,...,e_N} \left[ \sup_{f \in \calF} F^{-1}_f\left(\alpha - C h^\beta - \sqrt{\frac{\log \frac{1}{\delta} + c\frac{b - a}{h}}{N}} \right) - \widehat F^{-1}_f(\alpha) > 0
        \right]
        \leq \delta.\]
This bound is optimized by $h \asymp \left( (b - a) \frac{\log N}{N} \right)^{\frac{1}{2\beta + 1}}$, giving an overall bound of
\[\Pr_{e_1,...,e_N} \left[ \sup_{f \in \calF, t \in \bbR} F_f(t) - \widehat F_f(t) > c h^\frac{\beta}{2\beta + 1} \right] \leq \delta\]
\[\Pr_{e_1,...,e_N} \left[ \sup_{f \in \calF} F^{-1}_f\left(\alpha - c h^\frac{\beta}{2\beta + 1} + \sqrt{\frac{\log \frac{1}{\delta}}{N}} \right) - \widehat F^{-1}_f(\alpha) > 0
        \right]
        \leq \delta.\]
for some $c > 0$.
In particular, as $N, n \to \infty$, the EQRM estimate $\widehat f$ satisfies
\[F^{-1}_{\widehat f}(\alpha) \to \inf_{f \in \calF} F^{-1}_f(\alpha).\]

\newpage
\section{Further implementation details}%
\label{sec:impl_details}%

\subsection{Algorithm}%
\label{sec:impl_details:algs}%
Below we detail the EQRM algorithm. Note that: (i) any distribution estimator may be used in place of $\textsc{dist}$ so long as the functions $\textsc{dist}.\textsc{estimate\_params}$ and $\textsc{dist}.\textsc{icdf}$ are differentiable; (ii) other bandwidth-selection methods may be used on line 14, with the Gaussian-optimal rule serving as the default; and (iii) the bisection method \textsc{bisect} on line 20 requires an additional parameter, the maximum number of steps, which we always set to 32.

  \begin{algorithm}[H]
    \caption{Empirical Quantile Risk Minimization (EQRM).}
    \label{alg:eqrm}
      \kwInit{Predictor $f_{\theta}$, loss function $\ell$, desired probability of generalization $\alpha$, learning rate $\eta$, distribution estimator \textsc{dist}, $M$ datasets with $D^m = \{(x^m_i, y^m_i)\}_{i=1}^{n_m}$.}
      \vspace{1mm}
      Initialize $f_{\theta}$\;
      \vspace{1mm}
      \While{not converged}{
        \vspace{1mm}
        \tcc{Get per-domain risks (i.e.\ average losses)}
        $L^m \gets \frac{1}{n_m} \sum_{i=1}^{n_m} \ell(f_{\theta}(x^m_i), y^m_i)$, for $m = 1, \dots, M$ \;
        \vspace{1mm}
        \tcc{Estimate the parameters of $\widehat{\bbT}_f$}
        \textsc{dist.estimate\_params}($\bL$) \;
        \vspace{1mm}
        \tcc{Compute the $\alpha$-quantile of $\widehat{\bbT}_f$}
        $q \gets$ \textsc{dist.icdf}($\alpha$) \;
        \vspace{1mm}
        \tcc{Update $f_\theta$}
        $\theta \gets \theta - \eta \cdot \nabla_{\theta} q$ \;
      }
      \vspace{1mm}
      \kwOutput{$f_{\theta}$}
      \vspace{4mm}
      \myproc{\textsc{gauss.estimate\_params}($\bL$)}{
        \tcc{Compute the sample mean and variance}
        $\hat{\mu} \gets \frac{1}{M} \sum_{m=1}^M L^m$ \;
        $\hat{\sigma}^2 \gets \frac{1}{M-1} \sum_{m=1}^M (L^m - \hat{\mu})^2$ \;
      }
      \vspace{1mm}
      \myproc{\textsc{gauss.icdf}($\alpha$)}{
        \KwRet $\hat{\mu} + \hat{\sigma} \cdot \Phi^{-1}(\alpha)$\;
      }
      \vspace{1mm}
      \myproc{\textsc{kde.estimate\_params}($\bL$)}{
        \tcc{Set bandwidth $h$ (Gaussian-optimal rule used as default)}
        $\hat{\sigma}^2 \gets \frac{1}{M-1} \sum_{m=1}^M (L^m - \frac{1}{M} \sum_{j=1}^M L^j)^2$\;
        \vspace{0.5mm}
        $h\! \gets\! (\frac{4}{3M})^{0.2} \cdot \hat{\sigma}$
      }
      \vspace{1mm}
      \myproc{\textsc{kde.icdf}($\alpha$)}{
        \tcc{Define the CDF when using $M$ Gaussian kernels}
        $F_m(x')$ $\gets L^m + h \cdot \Phi(x')$ \;
        $F(x') \gets \frac{1}{M} \sum_{m=1}^M F_m(x')$ \;
        \vspace{2mm}
        \tcc{Invert the CDF via bisection}\vspace{-0.5mm}
        mn $\gets \min_m F^{-1}_m(\alpha)$ \;
        mx $\gets \max_m F^{-1}_m(\alpha)$ \;\vspace{1mm}
        \KwRet  \textsc{bisect}($F, \alpha$, mn, mx) \;
      }

  \end{algorithm}

\subsection{ColoredMNIST}%
\label{sec:impl_details:cmnist}%
For the \texttt{CMNIST} results of \cref{sec:exps:synthetic}, we used full batches (size $25000$), $400$ steps for ERM pretraining, $600$ total steps for IRM, VREx, EQRM, and $1000$ total steps for GroupDRO, SD, and IGA. We used the original \texttt{MNIST} training set to create training and validation sets for each domain, and the original \texttt{MNIST} test set for the test sets of each domain. We also decayed the learning rate using cosine annealing/scheduling. We swept over penalty weights in $\{50, 100, 500, 1000, 5000\}$ for IRM, VREx and IGA, penalty weights in $\{0.001, 0.01, 0.1, 1\}$ for SD, $\eta$'s in $\{0.001, 0.01, 0.1, 0.5, 1.0\}$ for GroupDRO, and $\alpha$'s in $1 - \{e^{-100}, e^{-250}, e^{-500}, e^{-750}, e^{-1000}\}$ for EQRM. To allow these values of $\alpha$, which are \emph{very} close to 1, we used an asymptotic expression for the Normal inverse CDF, namely $\Phi^{-1}(\alpha) \approx \sqrt{-2 \ln (1 - \alpha)}$ as $\alpha \to 1$~\citep{blair1976rational}. This allowed us to parameterize $\alpha = 1 - e^{-1000}$ as $\ln (1 - \alpha) = \ln (e^{-1000})= -1000$, avoiding issues with floating-point precision. As is the standard for \texttt{CMNIST}, we used a test-domain validation set to select the best settings (after the total number of steps), then reported the mean and standard deviation over 10 random seeds on a test-domain test set. As in previous works, the hyperparameter ranges of all methods were selected by peeking at test-domain performance. While not ideal, this is quite difficult to avoid with \texttt{CMNIST} and highlights the problem of model selection more generally in DG---as discussed by many previous works~\citep{arjovsky2020invariant, krueger21rex, gulrajani2020search, zhang2022rich}. Finally, we note several observations from our \texttt{CMNIST}, WILDS and DomainBed experiments which, despite not being thoroughly investigated with their own set of experiments (yet), may prove useful for future work: (i) ERM pretraining seems an effective strategy for DG methods, and can likely replace the more delicate penalty-annealing strategies (as also observed in \citep{zhang2022rich}); (ii) lowering the learning rate after ERM pretraining seems to stabilize DG methods; and (iii) EQRM often requires a lower learning rate than other DG methods after ERM pretraining, with its loss and gradients tending to be significantly larger.

\subsection{DomainBed}
\label{sec:impl_details:domainbed}%

For EQRM, we used the default algorithm setup: a kernel-density estimator of the risk distribution with the ``Gaussian-optimal'' rule~\citep{silverman1986density} for bandwidth selection. We used the standard hyperparameter-sampling procedure of Domainbed, running over 3 trials for 20 randomly-sampled hyperparameters per trial. For EQRM, this involved:

\begin{center}
\begin{tabular}{@{}lcc@{}}
\toprule
\textbf{Hparam}         & \textbf{Default}             & \textbf{Sampling} \\
\midrule
$\alpha$                & 0.75          & $U(0.5, 0.99)$ \\
Burn-in/anneal iters    & 2500          & $10^k$, with $k \sim U(2.5, 3.5)$ \\
EQRM learning rate (post burn-in)  & $10^{-6}$     & $10^k$, with $k \sim U(-7, -5)$ \\               
\bottomrule
\end{tabular}
\end{center}

All other all hyperparameters remained as their DomainBed-defaults, while the baseline results were taken directly from the most up-to-date DomainBed tables\footnote{\url{https://github.com/facebookresearch/DomainBed/tree/main/domainbed/results/2020_10_06_7df6f06}}. See our code for further details.

\subsection{WILDS}
\label{sec:impl_details:wilds}%

We considered two WILDS datasets: \texttt{iWildCam} and \texttt{OGB-MolPCBA} (henceforth \texttt{OGB}).  For both of these datasets, we used the architectures use in the original WILDS paper~\cite{wilds2021}; that is, for \texttt{iWildCam} we used a ResNet-50 architecture~\citep{he2016deep} 
pretrained on ImageNet~\citep{deng2009imagenet},
and for \texttt{OGB}, we used a Graph Isomorphism Network~\citep{xu2018powerful}
combined with virtual nodes~\cite{gilmer2017neural}. 
To perform model-selection, we followed the guidelines provided in the original WILDS paper~\cite{wilds2021}.  In particular, for each of the baselines we consider, we performed grid searches over the hyperparameter ranges listed in~\cite{wilds2021} with respect to the given validation sets; see \cite[Appendices E.1.2 and E.4.2]{wilds2021} for a full list of these hyperparameter ranges.

\paragraph{EQRM.}  For both datasets, we ran EQRM with KDE using the Gaussian-optimal bandwidth-selection method.  All EQRM models were initialized with the same ERM checkpoint, which is obtained by training ERM using the code provided by~\cite{wilds2021}.  Following~\cite{wilds2021}, for \texttt{iWildCam}, we trained ERM for 12 epochs, and for OGB, we trained ERM for 100 epochs.  We again followed~\cite{wilds2021} by using a batch size of 32 for \texttt{iWildCam} and 8 groups per batch.  For \texttt{OGB}, we performed grid searches over the batch size in the range $B\in\{32, 64, 128, 256, 512, 1024, 2048\}$, and we used $0.25B$ groups per batch.  We selected the learning rate for EQRM from $\eta\in\{10^{-2}, 10^{-3}, 10^{-4}, 10^{-5}, 10^{-6}, 10^{-7}, 10^{-8}\}$.

\paragraph{Computational resources.} 
All experiments on the WILDS datasets were run across two four-GPU workstations, comprising a total of eight Quadro RTX 5000 GPUs.

\section{Connections between QRM and DRO}\label{app:dro}
In this appendix we draw connections between quantile risk minimization~(QRM) and distributionally robust optimization (DRO) by considering an alternative optimization problem which we call \textit{superquantile risk minimization}~\footnote{This definition assumes that $\bbT_f$ is continuous; for a more general treatment, see~\cite{rockafellar2000CVaR}.}:
\begin{align}\label{eq:sqrm_def}
    \min_{f\in\calF} \: \SQ_\alpha(R; \bbT_f)
    \qquad \text{where} \qquad 
    \SQ_\alpha(R; \bbT_f) := \E_{R\sim\bbT_f} \left[ R \:\: \big| \:\: R \geq F^{-1}_{\bbT_f}(\alpha) \right].
\end{align}

Here, $\SQ_{\alpha}$ represents the \textit{superquantile}---also known as the \textit{conditional value-at-risk} (CVaR) or \textit{expected tail loss}---at level $\alpha$, which
can be seen as the conditional expectation of a random variable $R$ subject to $R$ being larger than the $\alpha$-quantile
$F^{-1}(\alpha)$. In our case, where $R$ represents the statistical risk on a randomly-sampled environment, $\SQ_\alpha$ can be seen as the expected risk in the worst $100 \cdot (1-\alpha)$\% of cases/domains. Below, we exploit the well-known duality properties of CVaR to formally connect~\eqref{eq:qrm} and GroupDRO~\cite{sagawa2019distributionally}; see Prop.~\ref{prop:dual-rep-cvar} for details. 

\subsection{Notation for this appendix}
Throughout this appendix, for each $f\in\calF$, we will let the risk random variable $R$ be a defined on the probability space $(\R_+, \calB, \bbT_f)$, where $\R_+$ denotes the nonnegative real numbers and $\calB$ denotes the Borel $\sigma$-algebra on $\R_+$.  We will also consider the Lebesgue spaces $L^p := L^p(\R_+, \calB, \bbT_f)$ of functions $h$ for which $\E_{r\sim\bbT_f}[|h(r)|^p]$ is finite.  For conciseness, we will use the notation
\begin{align}
    \left\langle g(r), h(r)\right\rangle := \int_{r\geq 0} g(r)h(r) \text{d}r
\end{align}
to denote the standard inner product on $\R_+$.  Furthermore, we will use the notation $\bbU\ll\bbV$ to signify that $\bbU$ is \emph{absolutely continuous} with respect to $\bbV$, meaning that if $\bbU(A)=0$ for every set $A$ for which $\bbV(A)=0$.  We also use the abbreviation ``a.e.`` to mean ``almost everywhere.''  Finally, the notation $\Pi_{[a,b]}(c)$ denotes the projection of a number $c$ into the real interval $[a,b]$.

\subsection{(Strong) Duality of the superquantile}
We begin by proving that strong duality holds for the superquantile function $\SQ_\alpha$.  We note that this duality result is well-known in the literature (see, e.g.,~\cite{shapiro2021lectures}), and has been exploited in the context of adaptive sampling~\cite{curi2020adaptive} and offline reinforcement learning~\cite{urpi2021risk}.  We state this result and proof for the sake of exposition.
\begin{proposition}[Dual representation of $\SQ_\alpha$]\label{prop:dual-rep-cvar}
If $R\in L^P$ for some $p\in(1,\infty)$, then 
\begin{align}
    \SQ_\alpha(R; \bbT_f) = \max_{\bbU\in\calU_f(\alpha)} \E_{\bbU}[R] \label{eq:cvar-prob}
\end{align}
where the uncertainty set $\calU_f(\alpha)$ is defined as
\begin{align}
    \calU_f(\alpha) := \left\{ \bbU \in L^q : \bbU \ll \bbT_f, \: \bbU\in[0, \nicefrac{1}{1-\alpha}] \text{ a.e. }, \norm{U}_{L^1}=1 \right\}. \label{eq:uncertainty-set-cvar}
\end{align}
\end{proposition}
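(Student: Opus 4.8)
The plan is to recognize this as the classical CVaR duality and to reduce the variational problem on the right-hand side of~\eqref{eq:cvar-prob} to a linear program over Radon--Nikodym densities, whose maximizer can be written down explicitly. First I would use the absolute-continuity constraint $\bbU \ll \bbT_f$ in~\eqref{eq:uncertainty-set-cvar}: by Radon--Nikodym, every $\bbU \in \calU_f(\alpha)$ admits a density $g := d\bbU/d\bbT_f \in L^q$, under which the remaining constraints become pointwise and integral constraints on $g$, namely $0 \le g \le \tfrac{1}{1-\alpha}$ $\bbT_f$-a.e.\ and $\E_{\bbT_f}[g] = \norm{\bbU}_{L^1} = 1$. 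Since $\E_{\bbU}[R] = \langle g, R\rangle$, which is finite by H\"older's inequality as $R \in L^p$ and $g \in L^q$ with conjugate exponents, the right-hand side of~\eqref{eq:cvar-prob} is exactly
\[\max_{g}\ \E_{\bbT_f}[g R] \quad \text{subject to} \quad 0 \le g \le \tfrac{1}{1-\alpha}\ \text{a.e.}, \quad \E_{\bbT_f}[g] = 1.\]

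Next I would exhibit the candidate maximizer. Writing $t := F^{-1}_{\bbT_f}(\alpha)$, the natural ``greedy'' choice is $g^\star := \tfrac{1}{1-\alpha}\,\mathbbm{1}\{R \ge t\}$, which loads the maximal admissible weight on the upper tail of the risk. Because $\bbT_f$ is assumed continuous (so $\Pr_{R\sim\bbT_f}\{R \ge t\} = 1-\alpha$), this $g^\star$ is feasible: it plainly satisfies $0 \le g^\star \le \tfrac{1}{1-\alpha}$, and $\E_{\bbT_f}[g^\star] = \tfrac{1}{1-\alpha}\Pr\{R \ge t\} = 1$. Its objective value moreover matches the superquantile, since $\langle g^\star, R\rangle = \tfrac{1}{1-\alpha}\E_{\bbT_f}\!\big[R\,\mathbbm{1}\{R \ge t\}\big] = \E_{\bbT_f}[R \mid R \ge t] = \SQ_\alpha(R;\bbT_f)$.

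It then remains to prove optimality, i.e.\ $\langle g, R\rangle \le \langle g^\star, R\rangle$ for every feasible $g$, which I would do by a rearrangement (``bathtub'') argument rather than invoking a general minimax theorem. Set $c := g^\star - g$. On $\{R \ge t\}$ we have $g^\star = \tfrac{1}{1-\alpha} \ge g$, so $c \ge 0$; on $\{R < t\}$ we have $g^\star = 0 \le g$, so $c \le 0$. In both regions $cR \ge ct$ pointwise (on the first because $c \ge 0$ and $R \ge t$, on the second because $c \le 0$ and $R < t$). Integrating and using that both densities integrate to $1$,
\[\langle g^\star - g, R\rangle = \E_{\bbT_f}[cR] \;\ge\; t\,\E_{\bbT_f}[c] = t\big(\E_{\bbT_f}[g^\star] - \E_{\bbT_f}[g]\big) = 0,\]
so the supremum is attained at $g^\star$, yielding~\eqref{eq:cvar-prob}.

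The main obstacle---and the reason for the continuity hypothesis on $\bbT_f$ flagged in the footnote---is the behaviour exactly at the quantile level $t$. If $\bbT_f$ has an atom at $t$, then $\Pr\{R \ge t\} \neq 1-\alpha$ in general, the indicator density $g^\star$ fails the normalization $\E_{\bbT_f}[g^\star] = 1$, and one must instead ``split'' the atom by assigning a fractional weight on $\{R = t\}$ chosen to restore $\E_{\bbT_f}[g^\star] = 1$ exactly. The rearrangement inequality above then goes through verbatim with this adjusted $g^\star$ (the pointwise bound $cR \ge ct$ is unaffected on $\{R = t\}$, where $R = t$). I would therefore either invoke the stated continuity assumption directly or, for the general statement, handle the atomic case by this fractional construction, as in~\citep{rockafellar2000CVaR}.
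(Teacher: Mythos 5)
Your proof is correct and reaches the stated identity by a genuinely different route from the paper. The paper's proof starts from the Rockafellar--Uryasev variational formula $\SQ_\alpha(R;\bbT_f) = \min_{t\in\R}\{t + \tfrac{1}{1-\alpha}\langle (R-t)_+, \bbT_f\rangle\}$, rewrites it in epigraph form, and passes to \eqref{eq:cvar-prob} via a Lagrangian, invoking strong duality of linear programs to eliminate $t$, $s$, and the multiplier in turn until the uncertainty set $\calU_f(\alpha)$ emerges. You instead work entirely on the primal side of \eqref{eq:cvar-prob}: you reduce the maximization to Radon--Nikodym densities $g = d\bbU/d\bbT_f$, exhibit the explicit candidate $g^\star = \tfrac{1}{1-\alpha}\mathbbm{1}\{R \ge F^{-1}_{\bbT_f}(\alpha)\}$, and certify optimality with a pointwise rearrangement inequality. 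Your argument is more elementary and, in one respect, on firmer ground: the paper's appeal to ``strong duality of linear programs'' (citing a finite-dimensional reference) is the step that would require extra justification in the $L^p$/$L^q$ setting, whereas your bathtub argument needs only H\"older's inequality and the common normalization $\E_{\bbT_f}[g]=\E_{\bbT_f}[g^\star]=1$. It also makes the worst-case distribution explicit---the tail-conditional law of $R$---which the duality proof leaves implicit, and which is precisely what gives the GroupDRO interpretation discussed after the proposition. The price is that your construction leans on continuity of $\bbT_f$ so that $\Pr\{R \ge F^{-1}_{\bbT_f}(\alpha)\} = 1-\alpha$; since the paper's definition of $\SQ_\alpha$ already assumes continuity in its footnote, this is consistent, and your closing remark on splitting an atom at the quantile level correctly covers the general case.
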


\begin{proof}
Note that the primal objective can be equivalently written as
\begin{align}
    \SQ_\alpha(R; \bbT_f) = \min_{t\in\R} \:  \left\{ t + \frac{1}{1-\alpha} \langle  (R - t)_+, \bbT_f \rangle \right\}
\end{align}
where $(z)_+ = \max\{0,z\}$~\cite{rockafellar2000CVaR}, which in turn has the following epigraph form:
\begin{alignat}{2}
    &\min_{t\in\R, \:\: s\in L^p_+} && t + \frac{1}{1-\alpha} \langle s, \bbT_f \rangle \\
    &\subjto && R(r) - t \leq s(r) \:\text{ a.e. } r\in\R_+.
\end{alignat}
When written in Lagrangian form, we can express this problem as
\begin{align}
    \min_{t\in\R, \:\: s\in L^p_+} \max_{\lambda\in L^q_+}  \:\: \left\{t(1 - \langle 1, \lambda\rangle) +  \left\langle s, \frac{1}{1-\alpha}\bbT_f - \lambda \right\rangle + \langle R, \lambda\rangle \right\}.
\end{align}
Note that this objective is \emph{linear} in $t$, $s$, and $\lambda$, and therefore due to the strong duality of linear programs, we can optimize over $s$, $t$, and $\lambda$ in any order~\cite{boyd2004convex}.  Minimizing over $t$ reveals that the problem is unbounded unless $\int_{r\geq 0} \lambda(r)\text{d}r = 1$, meaning that $\lambda$ is a probability distribution since $\lambda(r)\geq 0$ almost everywhere.  Thus, the problem can be written as
\begin{align}
    \min_{s\in L^p_+} \max_{\lambda\in\calP(\R_+)} \:\: \left\{ \left\langle s, \frac{1}{1-\alpha}\bbT_f - \lambda \right\rangle + \langle R, \lambda\rangle \right\}
\end{align}
where $\calP^q(\R_+)$ denotes the subspace of $L^q$ of probability distributions on $\R_+$.

Now consider the maximization over $s$.  Note that if there is a set $A\subset\Eall$ of nonzero Lebesgue measure on which $\lambda(A) \geq (\nicefrac{1}{1-\alpha})\bbT_f(A)$, then the problem is unbounded below because $s(A)$ can be made arbitrarily large.  Therefore, it must be the case that $\lambda \leq (\nicefrac{1}{1-\alpha})\bbT_f$ almost everywhere.  On the other hand, if $\lambda(A) \leq (\nicefrac{1}{1-\alpha})\bbT_f(A)$, then $s(A) = 0$ minimizes the first term in the objective.  Therefore, $s$ can be eliminated provided that $\lambda\leq (\nicefrac{1}{1-\alpha})\bbT_f$ almost everywhere.  Thus, we can write the problem as
\begin{alignat}{2}
    &\max_{\lambda\in\calP^q(\bbR_+)} &&\langle R, \lambda\rangle = \E_{\lambda}[R] \\
    &\subjto && \lambda(r) \leq \frac{1}{1-\alpha}\bbT_f(r) \:\text{ a.e. } r\geq 0.
\end{alignat}
Now observe that the constraint in the above problem is equivalent to $\lambda\ll\bbQ$.  Thus, by defining $\bbU = \text{d}\lambda/\text{d}\bbT_f$ to be the Radon-Nikodym derivative of $\lambda$ with respect to $\bbQ$, we can write the problem in the form of~\eqref{eq:cvar-prob}, completing the proof.
\end{proof}

Succinctly, this proposition shows that provided that $R$ is sufficiently smooth (i.e., an element of $L^p$), it holds that minimizing the superquantile function is equivalent to solving
\begin{align}
    \min_{f\in\calF} \: \max_{\bbU\in\calU_f(\alpha)} \E_{\bbU}[R] \label{eq:dro-cvar}
\end{align}
which is a distributionally robust optimization (DRO) problem with uncertainty set $\calU_f(\alpha)$ as defined in~\eqref{eq:uncertainty-set-cvar}.  In plain terms, for any $\alpha\in(0,1)$, this uncertainty set contains probability distributions on $\R_+$ which can place no larger than $\nicefrac{1}{1-\alpha}$ on any risk value. 

At an intuitive level, this shows that by varying $\alpha$ in~\cref{eq:sqrm_def},
one can interpolate between a range DRO problems.  In particular, at level $\alpha=1$, we recover the problem in~\eqref{eq:domain-gen-rewritten}, which can be viewed as a DRO problem which selects a Dirac distribution which places solely on the essential supremum of $R\sim\bbT_f$.  On the other hand, at level $\alpha=0$, we recover a problem which selects a distribution that equally weights each of the risks in different domains equally.  A special case of this is the GroupDRO formulation in~\cite{sagawa2019distributionally}, wherein under the assumption that the data is partitioned into $m$ groups, the inner maximum in~\eqref{eq:dro-cvar} is taken over the $(m-1)$-dimensional simplex $\Delta_m$ (see, e.g., equation (7) in~\cite{sagawa2019distributionally}).

\section{Additional analyses and experiments}%
\label{sec:additional_exps}%
\subsection{Linear regression}%
\label{sec:additional_exps:linear_regr}%
In this section we extend \cref{sec:exps:synthetic} to provide further analyses and discussion of EQRM using linear regression datasets based on~\cref{ex:example}. In particular, we: (i) extend \cref{fig:exps:linear-regr} to include plots of the predictors' risk CDFs (\ref{sec:additional_exps:linear_reg:cdf-curves}); and (ii) discuss the ability of EQRM to recover the causal predictor when $\sigma_1^2$, $\sigma_2^2$ and/or $\sigma_Y^2$ change over environments, compared to IRM~\citep{arjovsky2020invariant} and VREx~\citep{krueger21rex} (\ref{sec:additional_exps:linear_regr:risks_vs_functions}).

\subsubsection{Risk CDFs as risk-robustness curves}
\label{sec:additional_exps:linear_reg:cdf-curves}
As an extension of~\cref{fig:exps:linear-regr}, in particular the PDFs in \cref{fig:exps:linear-regr}~\textbf{B}, \cref{fig:exps:linear-regr:cdfs} depicts the risk CDFs for different predictors. Here we see that a predictor's risk CDF depicts its risk-robustness curve, and also that each $\alpha$ results in a predictor $f_{\alpha}$ with minimial $\alpha$-quantile risk. That is, for each desired level of robustness (i.e.\ probability of the upper-bound on risk holding, y-axis), the corresponding $\alpha$ has minimal risk (x-axis).

\begin{figure}[h]
    \centering
        \centering
        \includegraphics[width=0.4\linewidth]{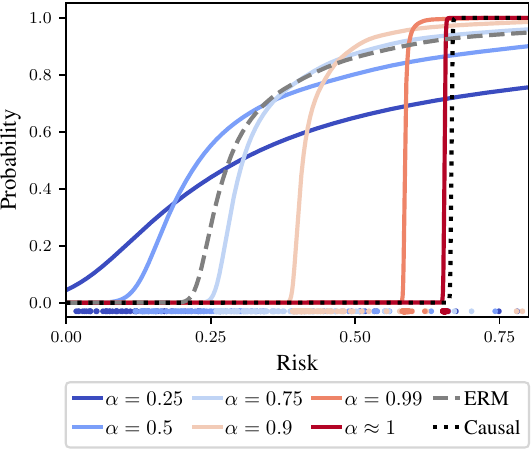}
    \caption{\small \textbf{Extension of \cref{fig:exps:linear-regr} showing the risk CDFs (i.e.\ risk-robustness curves) for different predictors.} For each risk upper-bound ($x$), we see the corresponding probability of it holding under the training domains ($y$). Note that, for each level of robustness ($y$, i.e.\ probability that the risk upper-bound holds), the corresponding $\alpha$ has the lowest upper-bound on risk ($x$). Also note that these CDFs correspond to the PDFs of \cref{fig:exps:linear-regr}~(\textbf{B}).}
    \label{fig:exps:linear-regr:cdfs}
\end{figure}

\subsubsection{Invariant risks vs.\ invariant functions}
\label{sec:additional_exps:linear_regr:risks_vs_functions}
We now compare seeking invariant \emph{risks} to seeking invariant \emph{functions} by analyzing linear regression datasets, based on \cref{ex:example}, in which $\sigma_1^2$, $\sigma_2^2$ and/or $\sigma_Y^2$ change over domains. This is turn allows us to compare EQRM (invariant risks), VREx~\citep{krueger21rex} (invariant risks), and IRM~\citep{arjovsky2020invariant} (invariant functions).

\paragraph{Domain-skedasticity.} For recovering the causal predictor, the key difference between using invariant \emph{risks} and invariant \emph{functions} lies in the assumption about \textit{domain-skedasticity}, i.e.\ the ``predicatability'' of $Y$ across domains. In particular, the causal predictor only has invariant risks in \textit{domain-homoskedastic} cases and not in \textit{domain-heteroskedastic} cases, the latter describing scenarios in which the predictability of $Y$ (i.e.\ the amount of irreducible error or intrinsic noise) varies across domains, meaning that the risk of the causal predictor will be smaller on some domains than others. Thus, methods seeking the causal predictor through invariant risks must assume domain homoskedasticity~\citep{peters2016causal, krueger21rex}. In contrast, methods seeking the causal predictor through invariant \emph{functions} need not make such a domain-homoskedasticity assumption, but instead the slightly weaker assumption of the conditional mean $\bbE[Y |\PA(Y)]$ being invariant across domains. As explained in the next paragraph and summarized in~\cref{tab:qrm-comparisons}, this translates into the coefficient $\beta_{\text{cause}}$ being invariant across domains for the linear SEM of \cref{ex:example}.

\begin{table}[tb]
    \centering
    \caption{\small Recovering the causal predictor for linear regression tasks based on \cref{ex:example}. A tick means that it is \emph{possible} to recover the causal predictor, under further assumptions.}\label{tab:qrm-comparisons}
    \resizebox{0.8\textwidth}{!}{
    \begin{tabular}{@{}cccccccc@{}} \toprule
         \multirow{2}{*}{Changing} & \multirow{2}{*}{\makecell{Domain \\ Scedasticity}} & \multicolumn{2}{c}{Invariant} & \multirow{2}{*}{IRM} & \multirow{2}{*}{VREx} & \multirow{2}{*}{EQRM}  \\ \cmidrule(lr){3-4}
         & & Risk & Function ($\beta_{\text{cause}}$) \\ \midrule
         $\sigma_1$ & \textit{Homo}scedastic & \cmark & \cmark & \cmark & \cmark & \cmark   \\
         $\sigma_2$ & \textit{Homo}scedastic & \cmark & \cmark & \cmark & \cmark & \cmark \\
         $\sigma_Y$ & \textit{Hetero}scedastic & \xmark & \cmark & \cmark & \xmark & \xmark \\ \bottomrule
    \end{tabular}}
\end{table}

\paragraph{Mathematical analysis.} We now analyze the risk-invariant solutions of \cref{ex:example}. We start by expanding the structural equations of \cref{ex:example} as:
\begin{align*}
    X_1 &= N_1, \\
    Y &= N_1 + N_Y, \\
    X_2 &= N_1 + N_Y + N_2.
\end{align*}
We then note that the goal is to learn a model $\widehat{Y} = \beta_1 \cdot X_1 + \beta_2 \cdot X_2$, which has residual error
\begin{align*}
    \widehat{Y} - Y &= \beta_1 \cdot N_1 + \beta_2 \cdot (N_1 + N_Y + N_2) - N_1 - N_Y \\
    &= (\beta_1 + \beta_2 - 1)\cdot N_1 + (\beta_2 - 1)\cdot N_Y + \beta_2 \cdot N_2.
\end{align*}
Then, since all variables have zero mean and the noise terms are independent, the risk (i.e.\ the MSE loss) is simply the variance of the residuals, which can be written as
\begin{align*}
    \bbE[(\widehat{Y} - Y)^2] &= (\beta_1 + \beta_2 - 1)^2\cdot \sigma^2_1 + (\beta_2 - 1)^2\cdot \sigma^2_Y + \beta_2^2 \cdot \sigma^2_2.
\end{align*}
Here, we have that, when:
\begin{itemize}\vspace{-2mm}
    \item \textbf{Only $\sigma_1$ changes:} the only way to keep the risk invariant across domains is to set $\beta_1 + \beta_2 = 1$. The minimal invariant-risk solution then depends on $\sigma_y$ and  $\sigma_2$:
    \begin{itemize}
        \item if $\sigma_y < \sigma_2$, the minimal invariant-risk solution sets $\beta_1 = 1$ and $\beta_2 = 0$  (causal predictor);
        \item if $\sigma_y > \sigma_2$, the minimal invariant-risk solution sets $\beta_1 = 0$ and $\beta_2 = 1$~(anti-causal predictor);
        \item if $\sigma_y = \sigma_2$, then any solution $(\beta_1, \beta_2)\! =\! (c, 1\! -\! c)$ with  $c\in[0,1]$ is a minimal invariant-risk solution, including the causal predictor $c\! =\! 1$, anti-causal predictor $c\! =\! 0$, and everything in-between. 
    \end{itemize}
    \item \textbf{Only $\sigma_2$ changes:} the invariant-risk solutions set $\beta_2 = 0$, with the minimal invariant-risk solution also setting $\beta_1 = 1$ (causal predictor).
    \item \textbf{$\sigma_1$ and $\sigma_2$ change:} \textit{the} invariant-risk solution sets $\beta_1=1, \beta_2=0$ (causal predictor).
    \item \textbf{Only $\sigma_Y$ changes:} the invariant-risk solutions set $\beta_2 = 1$, with the minimal invariant-risk solution also setting $\beta_1=0$ (anti-causal predictor).
    \item \textbf{$\sigma_1$ and $\sigma_Y$ change:} \textit{the} invariant-risk solution sets $\beta_1\! =\! 0, \beta_2\! =\! 1$ (anti-causal predictor).
    \item \textbf{$\sigma_2$ and $\sigma_Y$ change:} there is no invariant-risk solution.
    \item \textbf{$\sigma_1$, $\sigma_2$ and $\sigma_Y$ change:} there is no invariant-risk solution.
\end{itemize}

\paragraph{Empirical analysis.} To see this empirically, we refer the reader to Table 5 of \citet[App.~G.2]{krueger21rex}, which compares the invariant-risk solution of VREx to the invariant-function solution of IRM on the synthetic linear-SEM tasks of \citet[Sec.~5.1]{arjovsky2020invariant}, which calculate the MSE between the estimated coefficients $(\hat{\beta}_1, \hat{\beta}_2)$ and those of the causal predictor $(1, 0)$.

\paragraph{Different goals, solutions, and advantages.} We end by emphasizing the fact that 
the invariant-risk and invariant-function solutions have different pros and cons depending both on the goal and the assumptions made. If the goal is the recover the causal predictor or causes of $Y$, then the invariant-function solution has the advantage due to weaker assumptions on domain skedasticity. However, if the goal is learn predictors with stable or invariant performance, such that they perform well on new domains with high probability, then the invariant-risk solution has the advantage. For example, in the domain-heteroskedastic cases above where $\sigma_Y$ changes or $\sigma_Y$ and $\sigma_1$ change, the invariant-function solution recovers the causal predictor $\beta_1=1, \beta_2=0$ and thus has arbitrarily-large risk as $\sigma_Y \to \infty$ (i.e.\ in the worst-case). In contrast, the invariant-risk solution recovers the anti-causal predictor $\beta_1=0, \beta_2=1$ and thus has fixed risk $\sigma^2_2$ in all domains.

\subsection{DomainBed}%
\label{sec:additional_exps:domainbed}%

In this section, we include the full per-dataset DomainBed results. We consider the two most common model-selection methods of the DomainBed package---training-domain validation set and test-domain validation set (oracle)---and compare EQRM to a range of baselines. Implementation details for these experiments are provided in \S~\ref{sec:impl_details:domainbed} and our open-source code.

\subsubsection{Model selection: training-domain validation set} \label{sec:additional_exps:domainbed:train-dom}

\paragraph{VLCS}
\begin{center}
\adjustbox{max width=0.75\textwidth}{%
\begin{tabular}{lccccc}
\toprule
\textbf{Algorithm}   & \textbf{C}           & \textbf{L}           & \textbf{S}           & \textbf{V}           & \textbf{Avg}         \\
\midrule
ERM                  & 97.7 $\pm$ 0.4       & 64.3 $\pm$ 0.9       & 73.4 $\pm$ 0.5       & 74.6 $\pm$ 1.3       & 77.5                 \\
IRM                  & 98.6 $\pm$ 0.1       & 64.9 $\pm$ 0.9       & 73.4 $\pm$ 0.6       & 77.3 $\pm$ 0.9       & 78.5                 \\
GroupDRO             & 97.3 $\pm$ 0.3       & 63.4 $\pm$ 0.9       & 69.5 $\pm$ 0.8       & 76.7 $\pm$ 0.7       & 76.7                 \\
Mixup                & 98.3 $\pm$ 0.6       & 64.8 $\pm$ 1.0       & 72.1 $\pm$ 0.5       & 74.3 $\pm$ 0.8       & 77.4                 \\
MLDG                 & 97.4 $\pm$ 0.2       & 65.2 $\pm$ 0.7       & 71.0 $\pm$ 1.4       & 75.3 $\pm$ 1.0       & 77.2                 \\
CORAL                & 98.3 $\pm$ 0.1       & 66.1 $\pm$ 1.2       & 73.4 $\pm$ 0.3       & 77.5 $\pm$ 1.2       & 78.8                 \\
MMD                  & 97.7 $\pm$ 0.1       & 64.0 $\pm$ 1.1       & 72.8 $\pm$ 0.2       & 75.3 $\pm$ 3.3       & 77.5                 \\
DANN                 & 99.0 $\pm$ 0.3       & 65.1 $\pm$ 1.4       & 73.1 $\pm$ 0.3       & 77.2 $\pm$ 0.6       & 78.6                 \\
CDANN                & 97.1 $\pm$ 0.3       & 65.1 $\pm$ 1.2       & 70.7 $\pm$ 0.8       & 77.1 $\pm$ 1.5       & 77.5                 \\
MTL                  & 97.8 $\pm$ 0.4       & 64.3 $\pm$ 0.3       & 71.5 $\pm$ 0.7       & 75.3 $\pm$ 1.7       & 77.2                 \\
SagNet               & 97.9 $\pm$ 0.4       & 64.5 $\pm$ 0.5       & 71.4 $\pm$ 1.3       & 77.5 $\pm$ 0.5       & 77.8                 \\
ARM                  & 98.7 $\pm$ 0.2       & 63.6 $\pm$ 0.7       & 71.3 $\pm$ 1.2       & 76.7 $\pm$ 0.6       & 77.6                 \\
VREx                 & 98.4 $\pm$ 0.3       & 64.4 $\pm$ 1.4       & 74.1 $\pm$ 0.4       & 76.2 $\pm$ 1.3       & 78.3                 \\
RSC                  & 97.9 $\pm$ 0.1       & 62.5 $\pm$ 0.7       & 72.3 $\pm$ 1.2       & 75.6 $\pm$ 0.8       & 77.1                 \\ \midrule
EQRM                 & 98.3 $\pm$ 0.0       & 63.7 $\pm$ 0.8       & 72.6 $\pm$ 1.0       & 76.7 $\pm$ 1.1       & 77.8                 \\
\bottomrule
\end{tabular}}
\end{center}

\paragraph{PACS}
\begin{center}
\adjustbox{max width=0.75\textwidth}{%
\begin{tabular}{lccccc}
\toprule
\textbf{Algorithm}   & \textbf{A}           & \textbf{C}           & \textbf{P}           & \textbf{S}           & \textbf{Avg}         \\
\midrule
ERM                  & 84.7 $\pm$ 0.4       & 80.8 $\pm$ 0.6       & 97.2 $\pm$ 0.3       & 79.3 $\pm$ 1.0       & 85.5                 \\
IRM                  & 84.8 $\pm$ 1.3       & 76.4 $\pm$ 1.1       & 96.7 $\pm$ 0.6       & 76.1 $\pm$ 1.0       & 83.5                 \\
GroupDRO             & 83.5 $\pm$ 0.9       & 79.1 $\pm$ 0.6       & 96.7 $\pm$ 0.3       & 78.3 $\pm$ 2.0       & 84.4                 \\
Mixup                & 86.1 $\pm$ 0.5       & 78.9 $\pm$ 0.8       & 97.6 $\pm$ 0.1       & 75.8 $\pm$ 1.8       & 84.6                 \\
MLDG                 & 85.5 $\pm$ 1.4       & 80.1 $\pm$ 1.7       & 97.4 $\pm$ 0.3       & 76.6 $\pm$ 1.1       & 84.9                 \\
CORAL                & 88.3 $\pm$ 0.2       & 80.0 $\pm$ 0.5       & 97.5 $\pm$ 0.3       & 78.8 $\pm$ 1.3       & 86.2                 \\
MMD                  & 86.1 $\pm$ 1.4       & 79.4 $\pm$ 0.9       & 96.6 $\pm$ 0.2       & 76.5 $\pm$ 0.5       & 84.6                 \\
DANN                 & 86.4 $\pm$ 0.8       & 77.4 $\pm$ 0.8       & 97.3 $\pm$ 0.4       & 73.5 $\pm$ 2.3       & 83.6                 \\
CDANN                & 84.6 $\pm$ 1.8       & 75.5 $\pm$ 0.9       & 96.8 $\pm$ 0.3       & 73.5 $\pm$ 0.6       & 82.6                 \\
MTL                  & 87.5 $\pm$ 0.8       & 77.1 $\pm$ 0.5       & 96.4 $\pm$ 0.8       & 77.3 $\pm$ 1.8       & 84.6                 \\
SagNet               & 87.4 $\pm$ 1.0       & 80.7 $\pm$ 0.6       & 97.1 $\pm$ 0.1       & 80.0 $\pm$ 0.4       & 86.3                 \\
ARM                  & 86.8 $\pm$ 0.6       & 76.8 $\pm$ 0.5       & 97.4 $\pm$ 0.3       & 79.3 $\pm$ 1.2       & 85.1                 \\
VREx                 & 86.0 $\pm$ 1.6       & 79.1 $\pm$ 0.6       & 96.9 $\pm$ 0.5       & 77.7 $\pm$ 1.7       & 84.9                 \\
RSC                  & 85.4 $\pm$ 0.8       & 79.7 $\pm$ 1.8       & 97.6 $\pm$ 0.3       & 78.2 $\pm$ 1.2       & 85.2                 \\ \midrule
EQRM                 & 86.5 $\pm$ 0.4       & 82.1 $\pm$ 0.7       & 96.6 $\pm$ 0.2       & 80.8 $\pm$ 0.2       & 86.5                 \\
\bottomrule
\end{tabular}}
\end{center}

\paragraph{OfficeHome}
\begin{center}
\adjustbox{max width=0.75\textwidth}{%
\begin{tabular}{lccccc}
\toprule
\textbf{Algorithm}   & \textbf{A}           & \textbf{C}           & \textbf{P}           & \textbf{R}           & \textbf{Avg}         \\
\midrule
ERM                  & 61.3 $\pm$ 0.7       & 52.4 $\pm$ 0.3       & 75.8 $\pm$ 0.1       & 76.6 $\pm$ 0.3       & 66.5                 \\
IRM                  & 58.9 $\pm$ 2.3       & 52.2 $\pm$ 1.6       & 72.1 $\pm$ 2.9       & 74.0 $\pm$ 2.5       & 64.3                 \\
GroupDRO             & 60.4 $\pm$ 0.7       & 52.7 $\pm$ 1.0       & 75.0 $\pm$ 0.7       & 76.0 $\pm$ 0.7       & 66.0                 \\
Mixup                & 62.4 $\pm$ 0.8       & 54.8 $\pm$ 0.6       & 76.9 $\pm$ 0.3       & 78.3 $\pm$ 0.2       & 68.1                 \\
MLDG                 & 61.5 $\pm$ 0.9       & 53.2 $\pm$ 0.6       & 75.0 $\pm$ 1.2       & 77.5 $\pm$ 0.4       & 66.8                 \\
CORAL                & 65.3 $\pm$ 0.4       & 54.4 $\pm$ 0.5       & 76.5 $\pm$ 0.1       & 78.4 $\pm$ 0.5       & 68.7                 \\
MMD                  & 60.4 $\pm$ 0.2       & 53.3 $\pm$ 0.3       & 74.3 $\pm$ 0.1       & 77.4 $\pm$ 0.6       & 66.3                 \\
DANN                 & 59.9 $\pm$ 1.3       & 53.0 $\pm$ 0.3       & 73.6 $\pm$ 0.7       & 76.9 $\pm$ 0.5       & 65.9                 \\
CDANN                & 61.5 $\pm$ 1.4       & 50.4 $\pm$ 2.4       & 74.4 $\pm$ 0.9       & 76.6 $\pm$ 0.8       & 65.8                 \\
MTL                  & 61.5 $\pm$ 0.7       & 52.4 $\pm$ 0.6       & 74.9 $\pm$ 0.4       & 76.8 $\pm$ 0.4       & 66.4                 \\
SagNet               & 63.4 $\pm$ 0.2       & 54.8 $\pm$ 0.4       & 75.8 $\pm$ 0.4       & 78.3 $\pm$ 0.3       & 68.1                 \\
ARM                  & 58.9 $\pm$ 0.8       & 51.0 $\pm$ 0.5       & 74.1 $\pm$ 0.1       & 75.2 $\pm$ 0.3       & 64.8                 \\
VREx                 & 60.7 $\pm$ 0.9       & 53.0 $\pm$ 0.9       & 75.3 $\pm$ 0.1       & 76.6 $\pm$ 0.5       & 66.4                 \\
RSC                  & 60.7 $\pm$ 1.4       & 51.4 $\pm$ 0.3       & 74.8 $\pm$ 1.1       & 75.1 $\pm$ 1.3       & 65.5                 \\ \midrule
EQRM                 & 60.5 $\pm$ 0.1       & 56.0 $\pm$ 0.2       & 76.1 $\pm$ 0.4       & 77.4 $\pm$ 0.3       & 67.5                 \\
\bottomrule
\end{tabular}}
\end{center}

\paragraph{TerraIncognita}
\begin{center}
\adjustbox{max width=\textwidth}{%
\begin{tabular}{lccccc}
\toprule
\textbf{Algorithm}   & \textbf{L100}        & \textbf{L38}         & \textbf{L43}         & \textbf{L46}         & \textbf{Avg}         \\
\midrule
ERM                  & 49.8 $\pm$ 4.4       & 42.1 $\pm$ 1.4       & 56.9 $\pm$ 1.8       & 35.7 $\pm$ 3.9       & 46.1                 \\
IRM                  & 54.6 $\pm$ 1.3       & 39.8 $\pm$ 1.9       & 56.2 $\pm$ 1.8       & 39.6 $\pm$ 0.8       & 47.6                 \\
GroupDRO             & 41.2 $\pm$ 0.7       & 38.6 $\pm$ 2.1       & 56.7 $\pm$ 0.9       & 36.4 $\pm$ 2.1       & 43.2                 \\
Mixup                & 59.6 $\pm$ 2.0       & 42.2 $\pm$ 1.4       & 55.9 $\pm$ 0.8       & 33.9 $\pm$ 1.4       & 47.9                 \\
MLDG                 & 54.2 $\pm$ 3.0       & 44.3 $\pm$ 1.1       & 55.6 $\pm$ 0.3       & 36.9 $\pm$ 2.2       & 47.7                 \\
CORAL                & 51.6 $\pm$ 2.4       & 42.2 $\pm$ 1.0       & 57.0 $\pm$ 1.0       & 39.8 $\pm$ 2.9       & 47.6                 \\
MMD                  & 41.9 $\pm$ 3.0       & 34.8 $\pm$ 1.0       & 57.0 $\pm$ 1.9       & 35.2 $\pm$ 1.8       & 42.2                 \\
DANN                 & 51.1 $\pm$ 3.5       & 40.6 $\pm$ 0.6       & 57.4 $\pm$ 0.5       & 37.7 $\pm$ 1.8       & 46.7                 \\
CDANN                & 47.0 $\pm$ 1.9       & 41.3 $\pm$ 4.8       & 54.9 $\pm$ 1.7       & 39.8 $\pm$ 2.3       & 45.8                 \\
MTL                  & 49.3 $\pm$ 1.2       & 39.6 $\pm$ 6.3       & 55.6 $\pm$ 1.1       & 37.8 $\pm$ 0.8       & 45.6                 \\
SagNet               & 53.0 $\pm$ 2.9       & 43.0 $\pm$ 2.5       & 57.9 $\pm$ 0.6       & 40.4 $\pm$ 1.3       & 48.6                 \\
ARM                  & 49.3 $\pm$ 0.7       & 38.3 $\pm$ 2.4       & 55.8 $\pm$ 0.8       & 38.7 $\pm$ 1.3       & 45.5                 \\
VREx                 & 48.2 $\pm$ 4.3       & 41.7 $\pm$ 1.3       & 56.8 $\pm$ 0.8       & 38.7 $\pm$ 3.1       & 46.4                 \\
RSC                  & 50.2 $\pm$ 2.2       & 39.2 $\pm$ 1.4       & 56.3 $\pm$ 1.4       & 40.8 $\pm$ 0.6       & 46.6                 \\ \midrule
EQRM                 & 47.9 $\pm$ 1.9       & 45.2 $\pm$ 0.3       & 59.1 $\pm$ 0.3       & 38.8 $\pm$ 0.6       & 47.8                 \\
\bottomrule
\end{tabular}}
\end{center}

\paragraph{DomainNet}
\begin{center}
\adjustbox{max width=\textwidth}{%
\begin{tabular}{lccccccc}
\toprule
\textbf{Algorithm}   & \textbf{clip}        & \textbf{info}        & \textbf{paint}       & \textbf{quick}       & \textbf{real}        & \textbf{sketch}      & \textbf{Avg}         \\
\midrule
ERM                  & 58.1 $\pm$ 0.3       & 18.8 $\pm$ 0.3       & 46.7 $\pm$ 0.3       & 12.2 $\pm$ 0.4       & 59.6 $\pm$ 0.1       & 49.8 $\pm$ 0.4       & 40.9                 \\
IRM                  & 48.5 $\pm$ 2.8       & 15.0 $\pm$ 1.5       & 38.3 $\pm$ 4.3       & 10.9 $\pm$ 0.5       & 48.2 $\pm$ 5.2       & 42.3 $\pm$ 3.1       & 33.9                 \\
GroupDRO             & 47.2 $\pm$ 0.5       & 17.5 $\pm$ 0.4       & 33.8 $\pm$ 0.5       & 9.3 $\pm$ 0.3        & 51.6 $\pm$ 0.4       & 40.1 $\pm$ 0.6       & 33.3                 \\
Mixup                & 55.7 $\pm$ 0.3       & 18.5 $\pm$ 0.5       & 44.3 $\pm$ 0.5       & 12.5 $\pm$ 0.4       & 55.8 $\pm$ 0.3       & 48.2 $\pm$ 0.5       & 39.2                 \\
MLDG                 & 59.1 $\pm$ 0.2       & 19.1 $\pm$ 0.3       & 45.8 $\pm$ 0.7       & 13.4 $\pm$ 0.3       & 59.6 $\pm$ 0.2       & 50.2 $\pm$ 0.4       & 41.2                 \\
CORAL                & 59.2 $\pm$ 0.1       & 19.7 $\pm$ 0.2       & 46.6 $\pm$ 0.3       & 13.4 $\pm$ 0.4       & 59.8 $\pm$ 0.2       & 50.1 $\pm$ 0.6       & 41.5                 \\
MMD                  & 32.1 $\pm$ 13.3      & 11.0 $\pm$ 4.6       & 26.8 $\pm$ 11.3      & 8.7 $\pm$ 2.1        & 32.7 $\pm$ 13.8      & 28.9 $\pm$ 11.9      & 23.4                 \\
DANN                 & 53.1 $\pm$ 0.2       & 18.3 $\pm$ 0.1       & 44.2 $\pm$ 0.7       & 11.8 $\pm$ 0.1       & 55.5 $\pm$ 0.4       & 46.8 $\pm$ 0.6       & 38.3                 \\
CDANN                & 54.6 $\pm$ 0.4       & 17.3 $\pm$ 0.1       & 43.7 $\pm$ 0.9       & 12.1 $\pm$ 0.7       & 56.2 $\pm$ 0.4       & 45.9 $\pm$ 0.5       & 38.3                 \\
MTL                  & 57.9 $\pm$ 0.5       & 18.5 $\pm$ 0.4       & 46.0 $\pm$ 0.1       & 12.5 $\pm$ 0.1       & 59.5 $\pm$ 0.3       & 49.2 $\pm$ 0.1       & 40.6                 \\
SagNet               & 57.7 $\pm$ 0.3       & 19.0 $\pm$ 0.2       & 45.3 $\pm$ 0.3       & 12.7 $\pm$ 0.5       & 58.1 $\pm$ 0.5       & 48.8 $\pm$ 0.2       & 40.3                 \\
ARM                  & 49.7 $\pm$ 0.3       & 16.3 $\pm$ 0.5       & 40.9 $\pm$ 1.1       & 9.4 $\pm$ 0.1        & 53.4 $\pm$ 0.4       & 43.5 $\pm$ 0.4       & 35.5                 \\
VREx                 & 47.3 $\pm$ 3.5       & 16.0 $\pm$ 1.5       & 35.8 $\pm$ 4.6       & 10.9 $\pm$ 0.3       & 49.6 $\pm$ 4.9       & 42.0 $\pm$ 3.0       & 33.6                 \\
RSC                  & 55.0 $\pm$ 1.2       & 18.3 $\pm$ 0.5       & 44.4 $\pm$ 0.6       & 12.2 $\pm$ 0.2       & 55.7 $\pm$ 0.7       & 47.8 $\pm$ 0.9       & 38.9                 \\ \midrule
EQRM                 & 56.1 $\pm$ 1.3       & 19.6 $\pm$ 0.1       & 46.3 $\pm$ 1.5       & 12.9 $\pm$ 0.3       & 61.1 $\pm$ 0.0       & 50.3 $\pm$ 0.1       & 41.0                 \\
\bottomrule
\end{tabular}}
\end{center}

\paragraph{Averages}
    \begin{center}
        \adjustbox{max width=\linewidth}{%
        \begin{tabular}{lcccccc}
        \toprule
        \textbf{Algorithm}        & \textbf{VLCS}             & \textbf{PACS}             & \textbf{OfficeHome}       & \textbf{TerraIncognita}   & \textbf{DomainNet}        & \textbf{Avg}              \\
        \midrule
        ERM                      & 77.5 $\pm$ 0.4            & 85.5 $\pm$ 0.2            & 66.5 $\pm$ 0.3            & 46.1 $\pm$ 1.8            & 40.9 $\pm$ 0.1            & 63.3                      \\
        IRM                       & 78.5 $\pm$ 0.5            & 83.5 $\pm$ 0.8            & 64.3 $\pm$ 2.2            & 47.6 $\pm$ 0.8            & 33.9 $\pm$ 2.8            & 61.6                      \\
        GroupDRO                 & 76.7 $\pm$ 0.6            & 84.4 $\pm$ 0.8            & 66.0 $\pm$ 0.7            & 43.2 $\pm$ 1.1            & 33.3 $\pm$ 0.2            & 60.9                      \\
        Mixup                    & 77.4 $\pm$ 0.6            & 84.6 $\pm$ 0.6            & 68.1 $\pm$ 0.3            & 47.9 $\pm$ 0.8            & 39.2 $\pm$ 0.1            & 63.4                      \\
        MLDG                     & 77.2 $\pm$ 0.4            & 84.9 $\pm$ 1.0            & 66.8 $\pm$ 0.6            & 47.7 $\pm$ 0.9            & 41.2 $\pm$ 0.1            & 63.6                      \\
        CORAL                     & 78.8 $\pm$ 0.6            & 86.2 $\pm$ 0.3            & 68.7 $\pm$ 0.3            & 47.6 $\pm$ 1.0            & 41.5 $\pm$ 0.1            & 64.6                      \\
        MMD                       & 77.5 $\pm$ 0.9            & 84.6 $\pm$ 0.5            & 66.3 $\pm$ 0.1            & 42.2 $\pm$ 1.6            & 23.4 $\pm$ 9.5            & 63.3                      \\
        DANN                      & 78.6 $\pm$ 0.4            & 83.6 $\pm$ 0.4            & 65.9 $\pm$ 0.6            & 46.7 $\pm$ 0.5            & 38.3 $\pm$ 0.1            & 62.6                      \\
        CDANN                     & 77.5 $\pm$ 0.1            & 82.6 $\pm$ 0.9            & 65.8 $\pm$ 1.3            & 45.8 $\pm$ 1.6            & 38.3 $\pm$ 0.3            & 62.0                     \\
        MTL                       & 77.2 $\pm$ 0.4            & 84.6 $\pm$ 0.5            & 66.4 $\pm$ 0.5            & 45.6 $\pm$ 1.2            & 40.6 $\pm$ 0.1            & 62.9                      \\
        SagNet                    & 77.8 $\pm$ 0.5            & 86.3 $\pm$ 0.2            & 68.1 $\pm$ 0.1            & 48.6 $\pm$ 1.0            & 40.3 $\pm$ 0.1            & 64.2                      \\
        ARM                       & 77.6 $\pm$ 0.3            & 85.1 $\pm$ 0.4            & 64.8 $\pm$ 0.3            & 45.5 $\pm$ 0.3            & 35.5 $\pm$ 0.2            & 61.7                      \\
        VREx                      & 78.3 $\pm$ 0.2            & 84.9 $\pm$ 0.6            & 66.4 $\pm$ 0.6            & 46.4 $\pm$ 0.6            & 33.6 $\pm$ 2.9            & 61.9                      \\ \midrule
        EQRM                      & 77.8 $\pm$ 0.6            & 86.5 $\pm$ 0.2            & 67.5 $\pm$ 0.1            & 47.8 $\pm$ 0.6            & 41.0 $\pm$ 0.3            & 64.1                      \\
        \bottomrule
        \end{tabular}}
    \end{center}

\subsubsection{Model selection: test-domain validation set (oracle)}\label{sec:additional_exps:domainbed:test-dom}

\paragraph{VLCS}
\begin{center}
\adjustbox{max width=0.75\textwidth}{%
\begin{tabular}{lccccc}
\toprule
\textbf{Algorithm}   & \textbf{C}           & \textbf{L}           & \textbf{S}           & \textbf{V}           & \textbf{Avg}         \\
\midrule
ERM                  & 97.6 $\pm$ 0.3       & 67.9 $\pm$ 0.7       & 70.9 $\pm$ 0.2       & 74.0 $\pm$ 0.6       & 77.6                 \\
IRM                  & 97.3 $\pm$ 0.2       & 66.7 $\pm$ 0.1       & 71.0 $\pm$ 2.3       & 72.8 $\pm$ 0.4       & 76.9                 \\
GroupDRO             & 97.7 $\pm$ 0.2       & 65.9 $\pm$ 0.2       & 72.8 $\pm$ 0.8       & 73.4 $\pm$ 1.3       & 77.4                 \\
Mixup                & 97.8 $\pm$ 0.4       & 67.2 $\pm$ 0.4       & 71.5 $\pm$ 0.2       & 75.7 $\pm$ 0.6       & 78.1                 \\
MLDG                 & 97.1 $\pm$ 0.5       & 66.6 $\pm$ 0.5       & 71.5 $\pm$ 0.1       & 75.0 $\pm$ 0.9       & 77.5                 \\
CORAL                & 97.3 $\pm$ 0.2       & 67.5 $\pm$ 0.6       & 71.6 $\pm$ 0.6       & 74.5 $\pm$ 0.0       & 77.7                 \\
MMD                  & 98.8 $\pm$ 0.0       & 66.4 $\pm$ 0.4       & 70.8 $\pm$ 0.5       & 75.6 $\pm$ 0.4       & 77.9                 \\
DANN                 & 99.0 $\pm$ 0.2       & 66.3 $\pm$ 1.2       & 73.4 $\pm$ 1.4       & 80.1 $\pm$ 0.5       & 79.7                 \\
CDANN                & 98.2 $\pm$ 0.1       & 68.8 $\pm$ 0.5       & 74.3 $\pm$ 0.6       & 78.1 $\pm$ 0.5       & 79.9                 \\
MTL                  & 97.9 $\pm$ 0.7       & 66.1 $\pm$ 0.7       & 72.0 $\pm$ 0.4       & 74.9 $\pm$ 1.1       & 77.7                 \\
SagNet               & 97.4 $\pm$ 0.3       & 66.4 $\pm$ 0.4       & 71.6 $\pm$ 0.1       & 75.0 $\pm$ 0.8       & 77.6                 \\
ARM                  & 97.6 $\pm$ 0.6       & 66.5 $\pm$ 0.3       & 72.7 $\pm$ 0.6       & 74.4 $\pm$ 0.7       & 77.8                 \\
VREx                 & 98.4 $\pm$ 0.2       & 66.4 $\pm$ 0.7       & 72.8 $\pm$ 0.1       & 75.0 $\pm$ 1.4       & 78.1                 \\
RSC                  & 98.0 $\pm$ 0.4       & 67.2 $\pm$ 0.3       & 70.3 $\pm$ 1.3       & 75.6 $\pm$ 0.4       & 77.8                 \\ \midrule
EQRM                 & 98.2 $\pm$ 0.2       & 66.8 $\pm$ 0.8       & 71.7 $\pm$ 1.0       & 74.6 $\pm$ 0.3       & 77.8                 \\
\bottomrule
\end{tabular}}
\end{center}

\paragraph{PACS}
\begin{center}
\adjustbox{max width=0.75\textwidth}{%
\begin{tabular}{lccccc}
\toprule
\textbf{Algorithm}   & \textbf{A}           & \textbf{C}           & \textbf{P}           & \textbf{S}           & \textbf{Avg}         \\
\midrule
ERM                  & 86.5 $\pm$ 1.0       & 81.3 $\pm$ 0.6       & 96.2 $\pm$ 0.3       & 82.7 $\pm$ 1.1       & 86.7                 \\
IRM                  & 84.2 $\pm$ 0.9       & 79.7 $\pm$ 1.5       & 95.9 $\pm$ 0.4       & 78.3 $\pm$ 2.1       & 84.5                 \\
GroupDRO             & 87.5 $\pm$ 0.5       & 82.9 $\pm$ 0.6       & 97.1 $\pm$ 0.3       & 81.1 $\pm$ 1.2       & 87.1                 \\
Mixup                & 87.5 $\pm$ 0.4       & 81.6 $\pm$ 0.7       & 97.4 $\pm$ 0.2       & 80.8 $\pm$ 0.9       & 86.8                 \\
MLDG                 & 87.0 $\pm$ 1.2       & 82.5 $\pm$ 0.9       & 96.7 $\pm$ 0.3       & 81.2 $\pm$ 0.6       & 86.8                 \\
CORAL                & 86.6 $\pm$ 0.8       & 81.8 $\pm$ 0.9       & 97.1 $\pm$ 0.5       & 82.7 $\pm$ 0.6       & 87.1                 \\
MMD                  & 88.1 $\pm$ 0.8       & 82.6 $\pm$ 0.7       & 97.1 $\pm$ 0.5       & 81.2 $\pm$ 1.2       & 87.2                 \\
DANN                 & 87.0 $\pm$ 0.4       & 80.3 $\pm$ 0.6       & 96.8 $\pm$ 0.3       & 76.9 $\pm$ 1.1       & 85.2                 \\
CDANN                & 87.7 $\pm$ 0.6       & 80.7 $\pm$ 1.2       & 97.3 $\pm$ 0.4       & 77.6 $\pm$ 1.5       & 85.8                 \\
MTL                  & 87.0 $\pm$ 0.2       & 82.7 $\pm$ 0.8       & 96.5 $\pm$ 0.7       & 80.5 $\pm$ 0.8       & 86.7                 \\
SagNet               & 87.4 $\pm$ 0.5       & 81.2 $\pm$ 1.2       & 96.3 $\pm$ 0.8       & 80.7 $\pm$ 1.1       & 86.4                 \\
ARM                  & 85.0 $\pm$ 1.2       & 81.4 $\pm$ 0.2       & 95.9 $\pm$ 0.3       & 80.9 $\pm$ 0.5       & 85.8                 \\
VREx                 & 87.8 $\pm$ 1.2       & 81.8 $\pm$ 0.7       & 97.4 $\pm$ 0.2       & 82.1 $\pm$ 0.7       & 87.2                 \\
RSC                  & 86.0 $\pm$ 0.7       & 81.8 $\pm$ 0.9       & 96.8 $\pm$ 0.7       & 80.4 $\pm$ 0.5       & 86.2                 \\ \midrule
EQRM                 & 88.3 $\pm$ 0.6       & 82.1 $\pm$ 0.5       & 97.2 $\pm$ 0.4       & 81.6 $\pm$ 0.5       & 87.3                 \\
\bottomrule
\end{tabular}}
\end{center}

\paragraph{OfficeHome}
\begin{center}
\adjustbox{max width=0.75\textwidth}{%
\begin{tabular}{lccccc}
\toprule
\textbf{Algorithm}   & \textbf{A}           & \textbf{C}           & \textbf{P}           & \textbf{R}           & \textbf{Avg}         \\
\midrule
ERM                  & 61.7 $\pm$ 0.7       & 53.4 $\pm$ 0.3       & 74.1 $\pm$ 0.4       & 76.2 $\pm$ 0.6       & 66.4                 \\
IRM                  & 56.4 $\pm$ 3.2       & 51.2 $\pm$ 2.3       & 71.7 $\pm$ 2.7       & 72.7 $\pm$ 2.7       & 63.0                 \\
GroupDRO             & 60.5 $\pm$ 1.6       & 53.1 $\pm$ 0.3       & 75.5 $\pm$ 0.3       & 75.9 $\pm$ 0.7       & 66.2                 \\
Mixup                & 63.5 $\pm$ 0.2       & 54.6 $\pm$ 0.4       & 76.0 $\pm$ 0.3       & 78.0 $\pm$ 0.7       & 68.0                 \\
MLDG                 & 60.5 $\pm$ 0.7       & 54.2 $\pm$ 0.5       & 75.0 $\pm$ 0.2       & 76.7 $\pm$ 0.5       & 66.6                 \\
CORAL                & 64.8 $\pm$ 0.8       & 54.1 $\pm$ 0.9       & 76.5 $\pm$ 0.4       & 78.2 $\pm$ 0.4       & 68.4                 \\
MMD                  & 60.4 $\pm$ 1.0       & 53.4 $\pm$ 0.5       & 74.9 $\pm$ 0.1       & 76.1 $\pm$ 0.7       & 66.2                 \\
DANN                 & 60.6 $\pm$ 1.4       & 51.8 $\pm$ 0.7       & 73.4 $\pm$ 0.5       & 75.5 $\pm$ 0.9       & 65.3                 \\
CDANN                & 57.9 $\pm$ 0.2       & 52.1 $\pm$ 1.2       & 74.9 $\pm$ 0.7       & 76.2 $\pm$ 0.2       & 65.3                 \\
MTL                  & 60.7 $\pm$ 0.8       & 53.5 $\pm$ 1.3       & 75.2 $\pm$ 0.6       & 76.6 $\pm$ 0.6       & 66.5                 \\
SagNet               & 62.7 $\pm$ 0.5       & 53.6 $\pm$ 0.5       & 76.0 $\pm$ 0.3       & 77.8 $\pm$ 0.1       & 67.5                 \\
ARM                  & 58.8 $\pm$ 0.5       & 51.8 $\pm$ 0.7       & 74.0 $\pm$ 0.1       & 74.4 $\pm$ 0.2       & 64.8                 \\
VREx                 & 59.6 $\pm$ 1.0       & 53.3 $\pm$ 0.3       & 73.2 $\pm$ 0.5       & 76.6 $\pm$ 0.4       & 65.7                 \\
RSC                  & 61.7 $\pm$ 0.8       & 53.0 $\pm$ 0.9       & 74.8 $\pm$ 0.8       & 76.3 $\pm$ 0.5       & 66.5                 \\ \midrule
EQRM                 & 60.0 $\pm$ 0.8       & 54.4 $\pm$ 0.7       & 76.5 $\pm$ 0.4       & 77.2 $\pm$ 0.5       & 67.0                 \\
\bottomrule
\end{tabular}}
\end{center}

\paragraph{TerraIncognita}
\begin{center}
\adjustbox{max width=\textwidth}{%
\begin{tabular}{lccccc}
\toprule
\textbf{Algorithm}   & \textbf{L100}        & \textbf{L38}         & \textbf{L43}         & \textbf{L46}         & \textbf{Avg}         \\
\midrule
ERM                  & 59.4 $\pm$ 0.9       & 49.3 $\pm$ 0.6       & 60.1 $\pm$ 1.1       & 43.2 $\pm$ 0.5       & 53.0                 \\
IRM                  & 56.5 $\pm$ 2.5       & 49.8 $\pm$ 1.5       & 57.1 $\pm$ 2.2       & 38.6 $\pm$ 1.0       & 50.5                 \\
GroupDRO             & 60.4 $\pm$ 1.5       & 48.3 $\pm$ 0.4       & 58.6 $\pm$ 0.8       & 42.2 $\pm$ 0.8       & 52.4                 \\
Mixup                & 67.6 $\pm$ 1.8       & 51.0 $\pm$ 1.3       & 59.0 $\pm$ 0.0       & 40.0 $\pm$ 1.1       & 54.4                 \\
MLDG                 & 59.2 $\pm$ 0.1       & 49.0 $\pm$ 0.9       & 58.4 $\pm$ 0.9       & 41.4 $\pm$ 1.0       & 52.0                 \\
CORAL                & 60.4 $\pm$ 0.9       & 47.2 $\pm$ 0.5       & 59.3 $\pm$ 0.4       & 44.4 $\pm$ 0.4       & 52.8                 \\
MMD                  & 60.6 $\pm$ 1.1       & 45.9 $\pm$ 0.3       & 57.8 $\pm$ 0.5       & 43.8 $\pm$ 1.2       & 52.0                 \\
DANN                 & 55.2 $\pm$ 1.9       & 47.0 $\pm$ 0.7       & 57.2 $\pm$ 0.9       & 42.9 $\pm$ 0.9       & 50.6                 \\
CDANN                & 56.3 $\pm$ 2.0       & 47.1 $\pm$ 0.9       & 57.2 $\pm$ 1.1       & 42.4 $\pm$ 0.8       & 50.8                 \\
MTL                  & 58.4 $\pm$ 2.1       & 48.4 $\pm$ 0.8       & 58.9 $\pm$ 0.6       & 43.0 $\pm$ 1.3       & 52.2                 \\
SagNet               & 56.4 $\pm$ 1.9       & 50.5 $\pm$ 2.3       & 59.1 $\pm$ 0.5       & 44.1 $\pm$ 0.6       & 52.5                 \\
ARM                  & 60.1 $\pm$ 1.5       & 48.3 $\pm$ 1.6       & 55.3 $\pm$ 0.6       & 40.9 $\pm$ 1.1       & 51.2                 \\
VREx                 & 56.8 $\pm$ 1.7       & 46.5 $\pm$ 0.5       & 58.4 $\pm$ 0.3       & 43.8 $\pm$ 0.3       & 51.4                 \\
RSC                  & 59.9 $\pm$ 1.4       & 46.7 $\pm$ 0.4       & 57.8 $\pm$ 0.5       & 44.3 $\pm$ 0.6       & 52.1                 \\ \midrule
EQRM                 & 57.0 $\pm$ 1.5       & 49.5 $\pm$ 1.2       & 59.0 $\pm$ 0.3       & 43.4 $\pm$ 0.6       & 52.2                 \\
\bottomrule
\end{tabular}}
\end{center}

\paragraph{DomainNet}
\begin{center}
\adjustbox{max width=\textwidth}{%
\begin{tabular}{lccccccc}
\toprule
\textbf{Algorithm}   & \textbf{clip}        & \textbf{info}        & \textbf{paint}       & \textbf{quick}       & \textbf{real}        & \textbf{sketch}      & \textbf{Avg}         \\
\midrule
ERM                  & 58.6 $\pm$ 0.3       & 19.2 $\pm$ 0.2       & 47.0 $\pm$ 0.3       & 13.2 $\pm$ 0.2       & 59.9 $\pm$ 0.3       & 49.8 $\pm$ 0.4       & 41.3                 \\
IRM                  & 40.4 $\pm$ 6.6       & 12.1 $\pm$ 2.7       & 31.4 $\pm$ 5.7       & 9.8 $\pm$ 1.2        & 37.7 $\pm$ 9.0       & 36.7 $\pm$ 5.3       & 28.0                 \\
GroupDRO             & 47.2 $\pm$ 0.5       & 17.5 $\pm$ 0.4       & 34.2 $\pm$ 0.3       & 9.2 $\pm$ 0.4        & 51.9 $\pm$ 0.5       & 40.1 $\pm$ 0.6       & 33.4                 \\
Mixup                & 55.6 $\pm$ 0.1       & 18.7 $\pm$ 0.4       & 45.1 $\pm$ 0.5       & 12.8 $\pm$ 0.3       & 57.6 $\pm$ 0.5       & 48.2 $\pm$ 0.4       & 39.6                 \\
MLDG                 & 59.3 $\pm$ 0.1       & 19.6 $\pm$ 0.2       & 46.8 $\pm$ 0.2       & 13.4 $\pm$ 0.2       & 60.1 $\pm$ 0.4       & 50.4 $\pm$ 0.3       & 41.6                 \\
CORAL                & 59.2 $\pm$ 0.1       & 19.9 $\pm$ 0.2       & 47.4 $\pm$ 0.2       & 14.0 $\pm$ 0.4       & 59.8 $\pm$ 0.2       & 50.4 $\pm$ 0.4       & 41.8                 \\
MMD                  & 32.2 $\pm$ 13.3      & 11.2 $\pm$ 4.5       & 26.8 $\pm$ 11.3      & 8.8 $\pm$ 2.2        & 32.7 $\pm$ 13.8      & 29.0 $\pm$ 11.8      & 23.5                 \\
DANN                 & 53.1 $\pm$ 0.2       & 18.3 $\pm$ 0.1       & 44.2 $\pm$ 0.7       & 11.9 $\pm$ 0.1       & 55.5 $\pm$ 0.4       & 46.8 $\pm$ 0.6       & 38.3                 \\
CDANN                & 54.6 $\pm$ 0.4       & 17.3 $\pm$ 0.1       & 44.2 $\pm$ 0.7       & 12.8 $\pm$ 0.2       & 56.2 $\pm$ 0.4       & 45.9 $\pm$ 0.5       & 38.5                 \\
MTL                  & 58.0 $\pm$ 0.4       & 19.2 $\pm$ 0.2       & 46.2 $\pm$ 0.1       & 12.7 $\pm$ 0.2       & 59.9 $\pm$ 0.1       & 49.0 $\pm$ 0.0       & 40.8                 \\
SagNet               & 57.7 $\pm$ 0.3       & 19.1 $\pm$ 0.1       & 46.3 $\pm$ 0.5       & 13.5 $\pm$ 0.4       & 58.9 $\pm$ 0.4       & 49.5 $\pm$ 0.2       & 40.8                 \\
ARM                  & 49.6 $\pm$ 0.4       & 16.5 $\pm$ 0.3       & 41.5 $\pm$ 0.8       & 10.8 $\pm$ 0.1       & 53.5 $\pm$ 0.3       & 43.9 $\pm$ 0.4       & 36.0                 \\
VREx                 & 43.3 $\pm$ 4.5       & 14.1 $\pm$ 1.8       & 32.5 $\pm$ 5.0       & 9.8 $\pm$ 1.1        & 43.5 $\pm$ 5.6       & 37.7 $\pm$ 4.5       & 30.1                 \\
RSC                  & 55.0 $\pm$ 1.2       & 18.3 $\pm$ 0.5       & 44.4 $\pm$ 0.6       & 12.5 $\pm$ 0.1       & 55.7 $\pm$ 0.7       & 47.8 $\pm$ 0.9       & 38.9                 \\ \midrule
EQRM                 & 55.5 $\pm$ 1.8       & 19.6 $\pm$ 0.1       & 45.9 $\pm$ 1.9       & 12.9 $\pm$ 0.3       & 61.1 $\pm$ 0.0       & 50.3 $\pm$ 0.1       & 40.9                 \\
\bottomrule
\end{tabular}}
\end{center}

\paragraph{Averages}
\begin{center}
\adjustbox{max width=\textwidth}{%
\begin{tabular}{lcccccc}
\toprule
\textbf{Algorithm}        & \textbf{VLCS}             & \textbf{PACS}             & \textbf{OfficeHome}       & \textbf{TerraIncognita}   & \textbf{DomainNet}        & \textbf{Avg}              \\
\midrule
ERM                       & 77.6 $\pm$ 0.3            & 86.7 $\pm$ 0.3            & 66.4 $\pm$ 0.5            & 53.0 $\pm$ 0.3            & 41.3 $\pm$ 0.1            & 65.0                      \\
IRM                       & 76.9 $\pm$ 0.6            & 84.5 $\pm$ 1.1            & 63.0 $\pm$ 2.7            & 50.5 $\pm$ 0.7            & 28.0 $\pm$ 5.1            & 60.6                      \\
GroupDRO                  & 77.4 $\pm$ 0.5            & 87.1 $\pm$ 0.1            & 66.2 $\pm$ 0.6            & 52.4 $\pm$ 0.1            & 33.4 $\pm$ 0.3            & 63.3                      \\
Mixup                     & 78.1 $\pm$ 0.3            & 86.8 $\pm$ 0.3            & 68.0 $\pm$ 0.2            & 54.4 $\pm$ 0.3            & 39.6 $\pm$ 0.1            & 65.4                      \\
MLDG                      & 77.5 $\pm$ 0.1            & 86.8 $\pm$ 0.4            & 66.6 $\pm$ 0.3            & 52.0 $\pm$ 0.1            & 41.6 $\pm$ 0.1            & 64.9                      \\
CORAL                     & 77.7 $\pm$ 0.2            & 87.1 $\pm$ 0.5            & 68.4 $\pm$ 0.2            & 52.8 $\pm$ 0.2            & 41.8 $\pm$ 0.1            & 65.6                      \\
MMD                       & 77.9 $\pm$ 0.1            & 87.2 $\pm$ 0.1            & 66.2 $\pm$ 0.3            & 52.0 $\pm$ 0.4            & 23.5 $\pm$ 9.4            & 61.4                      \\
DANN                      & 79.7 $\pm$ 0.5            & 85.2 $\pm$ 0.2            & 65.3 $\pm$ 0.8            & 50.6 $\pm$ 0.4            & 38.3 $\pm$ 0.1            & 63.8                      \\
CDANN                     & 79.9 $\pm$ 0.2            & 85.8 $\pm$ 0.8            & 65.3 $\pm$ 0.5            & 50.8 $\pm$ 0.6            & 38.5 $\pm$ 0.2            & 64.1                      \\
MTL                       & 77.7 $\pm$ 0.5            & 86.7 $\pm$ 0.2            & 66.5 $\pm$ 0.4            & 52.2 $\pm$ 0.4            & 40.8 $\pm$ 0.1            & 64.8                      \\
SagNet                    & 77.6 $\pm$ 0.1            & 86.4 $\pm$ 0.4            & 67.5 $\pm$ 0.2            & 52.5 $\pm$ 0.4            & 40.8 $\pm$ 0.2            & 65.0                      \\
ARM                       & 77.8 $\pm$ 0.3            & 85.8 $\pm$ 0.2            & 64.8 $\pm$ 0.4            & 51.2 $\pm$ 0.5            & 36.0 $\pm$ 0.2            & 63.1                      \\
VREx                      & 78.1 $\pm$ 0.2            & 87.2 $\pm$ 0.6            & 65.7 $\pm$ 0.3            & 51.4 $\pm$ 0.5            & 30.1 $\pm$ 3.7            & 62.5                      \\
RSC                       & 77.8 $\pm$ 0.6            & 86.2 $\pm$ 0.5            & 66.5 $\pm$ 0.6            & 52.1 $\pm$ 0.2            & 38.9 $\pm$ 0.6            & 64.3                      \\ \midrule
EQRM                      & 77.8 $\pm$ 0.2            & 87.3 $\pm$ 0.2            & 67.0 $\pm$ 0.4            & 52.2 $\pm$ 0.7            & 40.9 $\pm$ 0.3            & 65.1                      \\
\bottomrule
\end{tabular}}
\end{center}

\subsection{WILDS}%
\label{sec:additional_exps:wilds}%

In Figure~\ref{fig:methods-and-baselines}, we visualize the test-time risk distributions of IRM and GroupDRO relative to ERM, as well as EQRM$_\alpha$ for select values\footnote{We display results for fewer values of $\alpha$ in Figure~\ref{fig:methods-and-baselines} to keep the plots uncluttered.} of $\alpha$.  In each of these figures, we see that IRM and GroupDRO tend to have heavier tails than any of the other algorithms.

\begin{figure}[h]
    \centering
    \includegraphics[width=0.6\textwidth]{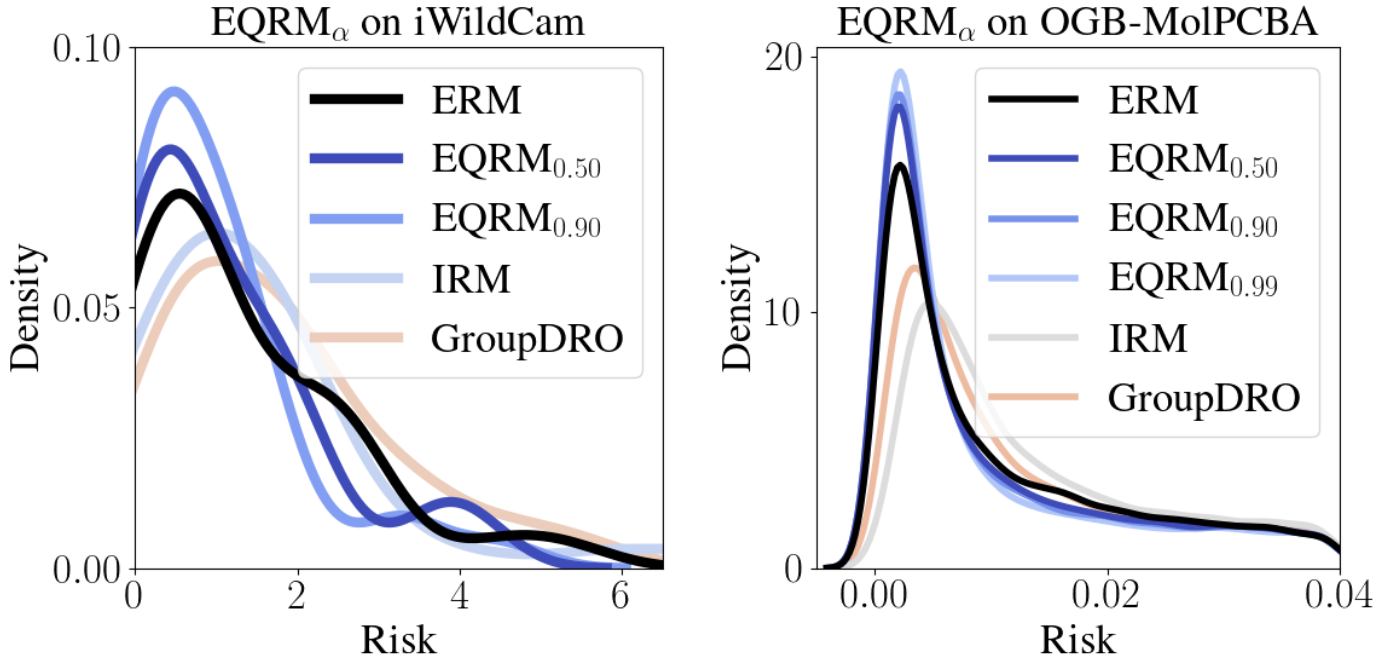}
    \caption{\small \textbf{Baseline test risk distributions on iWildCam and OGB-MolPCBA.}  We supplement Figure~\ref{fig:real-world-pdfs} by providing comparisons to two baseline algorithms: IRM and GroupDRO.  In each case, EQRM$_\alpha$ tends to display superior tail performance relative to ERM, IRM, and GroupDRO.} 
    \label{fig:methods-and-baselines}
\end{figure}

\paragraph{Other performance metrics.}  In the main text, we studied the tails of the \textit{risk} distributions of predictors trained on \texttt{iWildCam} and \texttt{OGB}. However, in the broader DG literature, there are a number of other metrics that are used to assess performance or OOD-generalization. In particular, for \texttt{iWildCam}, past work has used the macro $F_1$ score as well as the average accuracy across domains to assess OOD generalization; for \texttt{OGB}, the standard metric is a predictor's average precision over test domains~\cite{wilds2021}. In Tables~\ref{tab:accuracies-iwildcam} and~\ref{tab:accuracies-ogb}, we report these metrics and compare the performance of our algorithms to ERM, IRM, and GroupDRO. Below, we discuss the results in each of these tables.

To begin, consider Table~\ref{tab:accuracies-iwildcam}.  Observe that ERM
achieves the best \emph{in-distribution} (ID) scores relative to any of the other algorithms.  However, when we consider the \emph{out-of-distribution} columns, we see that EQRM offers better performance with respect to both the macro $F_1$ score and the mean accuracy.  Thus, although our algorithms are not explicitly trained to optimize these metrics, their strong performance on the tails of the risk distribution appears to be correlated with strong OOD performance with these alternative metrics.  We also observe that relative to ERM, EQRM suffers smaller accuracy drops between ID and OOD mean accuracy. Specifically, ERM dropped 5.50 points, whereas EQRM dropped by an average of 2.38 points.

Next, consider Table~\ref{tab:accuracies-ogb}.  Observe again that ERM is the strongest-performing \textit{baseline} (first band of the table).  Also observe that EQRM performs similarly to ERM, with validation and test precision tending to cluster around 28 and 27 respectively. However, we stress that these metrics are \emph{averaged} over their respective domains, whereas in Tables~\ref{tab:quantiles-iwildcam} and \ref{tab:quantiles-ogb}, we showed that EQRM performed well on the more difficult domains, i.e.\ when using \emph{tail} metrics.
 
\begin{table}[ht]
\begin{minipage}{0.48\textwidth}
    \centering
    \caption{WILDS metrics on \texttt{iWildCam}.}
    \label{tab:accuracies-iwildcam}
    \resizebox{\columnwidth}{!}{
    \begin{tabular}{ccccc} \toprule
         \multirow{2}{*}{Algorithm} & \multicolumn{2}{c}{Macro $F_1$ $(\uparrow)$} & \multicolumn{2}{c}{Mean accuracy $(\uparrow)$} \\ \cmidrule(lr){2-3} \cmidrule(lr){4-5}
         & ID & OOD & ID & OOD \\ \midrule
         ERM & \textbf{49.8} & 30.6 & \textbf{77.0} & 71.5\\
         IRM & 23.4 & 15.2 & 59.6 & 64.1 \\
         GroupDRO & 34.3 & 22.1 & 66.7 & 67.7 \\ \midrule
         QRM$_{0.25}$ & 18.3 & 11.4 & 54.3 & 58.3 \\
         QRM$_{0.50}$ & 48.1 & 33.8 & 76.2 & 73.5 \\
         QRM$_{0.75}$ & 49.5 & 31.8 & 76.1 & 72.0 \\
         QRM$_{0.90}$ & 48.6 & 32.9 & 77.1 & 73.3 \\
         QRM$_{0.99}$ & 45.9 & 30.8 & 76.6 & 71.3 \\ 
         \bottomrule
    \end{tabular}}

\end{minipage}
\quad
\begin{minipage}{0.48\textwidth}
    \centering
    \caption{WILDS metrics on \texttt{OGB-MolPCBA}.}
    \label{tab:accuracies-ogb}
    \resizebox{0.7\columnwidth}{!}{
    \begin{tabular}{ccc} \toprule
         \multirow{2}{*}{Algorithm} & \multicolumn{2}{c}{Mean precision $(\uparrow)$}  \\ \cmidrule(lr){2-3}
         & Validation & Test \\ \midrule
         ERM & 28.1 & 27.3 \\
         IRM & 15.4 & 15.5 \\
         GroupDRO & 23.5 & 22.3 \\ \midrule
         QRM$_{0.25}$ & 28.1 & 27.3 \\
         QRM$_{0.50}$ & \textbf{28.3} & \textbf{27.4} \\
         QRM$_{0.75}$ & 28.1 & 27.1 \\
         QRM$_{0.90}$ & 27.9 & 27.2 \\
         QRM$_{0.99}$ & 28.1 & 27.4 \\ %
        \bottomrule
    \end{tabular}
    }

\end{minipage}
\end{table}

\newpage
\section{Limitations of our work}\label{app:limitations}
As discussed in the first paragraph of \cref{sec:discussion}, the main limitation of our work is that, for $\alpha$ to \textit{precisely} approximate the probability of generalizing with risk below the associated $\alpha$-quantile value, we must have a large number of i.i.d.-sampled domains. Currently, this is rarely satisfied in practice, although \cref{sec:discussion} describes how new data-collection procedures could help to better-satisfy this assumption. We believe that our work, and its promise of machine learning systems that generalize with high probability, provides sufficient motivation for collecting real-world datasets with a large number of i.i.d.-sampled domains. In addition, we hope that future work can explore ways to relax this assumption, e.g., by leveraging knowledge of domain dependencies like time.
\end{document}